\documentclass[letterpaper]{article} 
\usepackage{aaai2026}  
\usepackage{times}  
\usepackage{helvet}  
\usepackage{courier}  
\usepackage[hyphens]{url}  
\usepackage{graphicx} 
\urlstyle{rm} 
\usepackage{natbib}  
\usepackage{caption} 
\frenchspacing  
\setlength{\pdfpagewidth}{8.5in}  
\setlength{\pdfpageheight}{11in}  
%
\usepackage{algorithm}
\usepackage{algorithmic}
\usepackage{amsmath}
\usepackage{amsthm}
\usepackage{amsfonts}
\usepackage{dsfont}
\usepackage{mathrsfs}
\usepackage{amssymb}

\newtheorem{theorem}{Theorem}[section]
\newtheorem{proposition}[theorem]{Proposition}
\newtheorem{lemma}[theorem]{Lemma}

\newtheorem{definition}[theorem]{Definition}
\newtheorem{assumption}[theorem]{Assumption}

%
\usepackage{newfloat}
\usepackage{listings}
\DeclareCaptionStyle{ruled}{labelfont=normalfont,labelsep=colon,strut=off} 
\lstset{%
	basicstyle={\footnotesize\ttfamily},
	numbers=left,numberstyle=\footnotesize,xleftmargin=2em,
	aboveskip=0pt,belowskip=0pt,%
	showstringspaces=false,tabsize=2,breaklines=true}
\floatstyle{ruled}
\newfloat{listing}{tb}{lst}{}
\floatname{listing}{Listing}
%
\pdfinfo{
/TemplateVersion (2026.1)
}

\setcounter{secnumdepth}{2} 

%


\title{A Differential Perspective on Distributional Reinforcement Learning}
\author{
    Juan Sebastian Rojas\textsuperscript{\rm 1}, Chi-Guhn Lee\textsuperscript{\rm 1}
}
\affiliations{
    \textsuperscript{\rm 1}Department of Mechanical \& Industrial Engineering, University of Toronto\\
    juan.rojas@mail.utoronto.ca, cglee@mie.utoronto.ca
}

\begin{document}

\maketitle

\begin{abstract}
To date, distributional reinforcement learning (distributional RL) methods have exclusively focused on the \emph{discounted} setting, where an agent aims to optimize a discounted sum of rewards over time. In this work, we extend distributional RL to the \emph{average-reward} setting, where an agent aims to optimize the reward received per time step. In particular, we utilize a quantile-based approach to develop the first set of algorithms that can successfully learn and/or optimize the long-run per-step reward distribution, as well as the differential return distribution of an average-reward MDP. We derive proven-convergent tabular algorithms for both prediction and control, as well as a broader family of algorithms that have appealing scaling properties. Empirically, we find that these algorithms yield competitive and sometimes superior performance when compared to their non-distributional equivalents, while also capturing rich information about the long-run per-step reward and differential return distributions.
\end{abstract}


\section{Introduction}
\label{introduction}
Distributional reinforcement learning (distributional RL) \citep{Bellemare2023-mn} equips decision-making agents with the ability to learn and reason about the probability distribution over a given objective. This approach transcends the traditional RL paradigm of focusing solely on expected values, thereby offering a more insightful and methodical understanding of the variability, uncertainty, and risk associated with a given objective. To date, distributional RL methods have exclusively focused on the \emph{discounted} setting, where an RL agent aims to optimize a potentially-discounted sum of rewards over time (e.g. \citet{Bellemare2017-lf}). 

In this work, we extend distributional RL to the \emph{average-reward} setting, where an RL agent aims to optimize the reward received per time step. This extension offers a timely opportunity to extend the benefits of distributional RL to a promising and growing family of RL methods. In particular, unlike discounted RL methods, average-reward RL methods do not face fundamental challenges in continuing control tasks that require function approximation \citep{Naik2019-ch}. Moreover, average-reward RL methods have been shown to outperform discounted RL methods in some instances (e.g. \citet{Adamczyk2025-cv}). It has even been shown that integrating the average-reward itself into discounted RL methods can increase their performance (e.g. \citet{Naik2024-km}). Accordingly, extending distributional RL to the average-reward setting allows us to combine the strengths of two increasingly important RL frameworks. 

Through this extension, we will see that, from a distributional perspective, we will need to rethink the following foundational questions: \emph{what do we want to learn?} and \emph{how can we learn it?} In this work, we address these questions, and in the process, derive the first distributional framework for the average-reward setting.

\section{Related Work}
\label{related_work}
Early works on distributional RL include \citet{Sobel1982-pi}, \citet{White1988-ml}, and \citet{Morimura2010-ve}. Modern distributional RL is typically associated with the distributional Bellman operator, which was introduced in \citet{Bellemare2017-lf}, along with empirical results which showed that such methods could outperform their non-distributional equivalents. Subsequent research has expanded upon this framework by developing methods that can typically be classified as either \emph{categorical} (e.g. \citet{Bellemare2017-lf}) or \emph{quantile}-based (e.g. \citet{Dabney2018-ta}). While these methods have proven to be effective in the discounted setting, no attention has been given to the average-reward setting. In this work, we address this gap by introducing the first distributional RL algorithms specifically designed for the average-reward setting.

\section{Preliminaries}
\label{preliminaries}

\subsection{Average-Reward Reinforcement Learning}
A finite average-reward MDP is the tuple \(\mathcal{M} \doteq \langle \mathcal{S}, \mathcal{A}, \mathcal{R}, p \rangle\), where \(\mathcal{S}\) is a finite set of states, \(\mathcal{A}\) is a finite set of actions, \(\mathcal{R} \subset \mathbb{R}\) is a finite set of rewards, and \(p: \mathcal{S}\, \times\, \mathcal{A}\, \times\, \mathcal{R}\, \times\,  \mathcal{S} \rightarrow{} [0, 1]\) is a probabilistic transition function that describes the dynamics of the environment. At each discrete time step, \(t = 0, 1, 2, \ldots\), an agent chooses an action, \(A_t \in \mathcal{A}\), based on its current state, \(S_t \in \mathcal{S}\), and receives a reward, \(R_{t+1} \in \mathcal{R}\), while transitioning to a (potentially) new state, \(S_{t+1}\), such that \(p(s', r \mid s, a) = \mathbb{P}(S_{t+1} = s', R_{t+1} = r \mid S_t = s, A_t = a)\). In an average-reward MDP, an agent aims to find a policy, \(\pi: \mathcal{S} \rightarrow{} \mathcal{A}\), that optimizes the long-run (or limiting) average-reward, \(\bar{r}\), which is defined as follows for a given policy, \(\pi\):
\begin{equation}
\label{eq_avg_reward}
\bar{r}_{\pi}(s) \doteq  \lim_{n \rightarrow{} \infty} \frac{1}{n} \sum_{t=1}^{n} \mathbb{E}[R_t \mid S_0=s, A_{0:t-1} \sim \pi].
\end{equation}
In this work, we limit our discussion to \emph{stationary Markov} policies, which are time-independent policies that satisfy the Markov property.

When working with average-reward MDPs, it is common to simplify Equation \eqref{eq_avg_reward} into a more workable form by making certain assumptions about the Markov chain induced by following policy \(\pi\). To this end, a \emph{unichain} assumption is typically used when doing prediction (learning) because it ensures the existence of a unique limiting distribution of states, \(\mu_{\pi}(s) \doteq \lim_{t \rightarrow{} \infty} \mathbb{P}(S_t = s \mid A_{0:t-1} \sim \pi)\), that is independent of the initial state, thereby simplifying Equation \eqref{eq_avg_reward} to the following:
\begin{equation}
\label{eq_avg_reward_2}
\bar{r}_{\pi} = \sum_{s \in \mathcal{S}} \mu_{\pi}(s) \sum_{a \in \mathcal{A}} \pi(a \mid s) \sum_{s' \in \mathcal{S}} \sum_{r \in  \mathcal{R}}p(s', r \mid s, a)r.
\end{equation}
Similarly, a \emph{communicating} assumption is typically used when doing control (optimization) because it ensures the existence of a unique optimal average-reward, \(\bar{r}*\), that is independent of the initial state. 

The \textit{return} of an MDP, \(G_t\), captures how rewards are aggregated over the time horizon. In an average-reward MDP, the return is referred to as the \textit{differential return}, and is defined as follows:
\begin{equation}
\label{eq_avg_reward_3}
G_t \doteq R_{t+1} - \bar{r}_{\pi} + R_{t+2} - \bar{r}_{\pi} + R_{t+3} - \bar{r}_{\pi} + \ldots.
\end{equation}

To solve an average-reward MDP, solution methods such as dynamic programming or RL are typically used in conjunction with the \textit{Bellman equation \eqref{eq_avg_reward_4}} for the state-value function, \(v_{\pi}(s) \doteq \mathbb{E}_{\pi}[G_t \mid S_t = s] = \mathbb{E}_{\pi}[R_{t+1} - \bar{r}_{\pi} + G_{t+1} \mid S_t = s]\), or the \textit{Bellman optimality equation \eqref{eq_avg_reward_5}} for the state-action value function, \(q_{\pi}(s, a) \doteq \mathbb{E}_{\pi}[G_t \mid S_t = s, A_t = a]\). Solution methods for average-reward MDPs are typically referred to as \emph{differential} methods because of the reward difference (i.e., \(r - \bar{r}_{\pi}\)) operation that occurs in the Bellman equations \eqref{eq_avg_reward_4} and \eqref{eq_avg_reward_5}: 
\begin{equation}
\label{eq_avg_reward_4}
v_{\pi}(s) = \sum_{a}\pi(a | s) \sum_{s', r}p(s', r \mid s, a)[r - \bar{r}_{\pi} + v_{\pi}(s')],
\end{equation}

\begin{equation}
\label{eq_avg_reward_5}
q_{\pi}(s, a) = \sum_{s', r}p(s', r \mid s, a)[r - \bar{r}_{\pi} + \max_{a'}q_{\pi}(s', a')].
\end{equation}

\subsection{Quantile Regression}
Quantile regression \citep{Koenker2005-hy} refers to the process of estimating a predetermined quantile of a probability distribution from samples. More specifically, for \(\tau \in (0, 1)\), let \(F_{w}^{-1}(\tau)\) denote the \(\tau^\text{th}\) quantile that we are trying to estimate from probability distribution \(w\) (where \(F_w: \mathbb{R} \to [0, 1]\) denotes the CDF of \(w\)). Quantile regression maintains an estimate, \(\theta\), of this value, and updates the estimate based on samples drawn from \(w\) (i.e., \(y \sim w\)) as follows \citep{Bellemare2023-mn}:
\begin{equation}
\label{eq_qr_1}
\theta \xleftarrow{} \theta + \alpha_t (\tau - \mathds{1}_{ \{ y < \theta \}}),
\end{equation}
where \(\alpha_t\) denotes the step size for the update. The estimate for \(\theta\) will continue to adjust until the equilibrium point, \(\theta^*\), which corresponds to \(F_{w}^{-1}(\tau)\), is reached \citep{Bellemare2023-mn}. In other words, we have that
\begin{subequations}
\label{eq_qr_2}
\begin{align}
\label{eq_qr_2_1}
0 & = \mathbb{E}[(\tau - \mathds{1}_{ \{ y < \theta^* \}})]\\
\label{eq_qr_2_2}
& = \tau - \mathbb{E}[\mathds{1}_{ \{ y < \theta^* \}}]\\
\label{eq_qr_2_3}
& = \tau - \mathbb{P}(y < \theta^* )\\
\label{eq_qr_2_4}
\implies \theta^* & = F_{w}^{-1}(\tau).
\end{align}
\end{subequations}

\subsection{Discounted Distributional RL}
\label{section_discounted}
Like average-reward MDPs, discounted MDPs have their own return which captures how rewards are aggregated over the time horizon. More specifically, let
\begin{equation}
\label{eq_disc_return}
G^{\gamma}_t = \sum_{k \geq 0} \gamma^{k} R_{t+k+1} = R_{t+1} + \gamma G^{\gamma}_{t+1}
\end{equation}
denote the \emph{discounted return} from a discounted MDP, where \(R_{t+1} \in \mathcal{R}\) denotes the per-step reward and \(\gamma \in [0, 1]\) denotes the discount factor \citep{Puterman1994-dq, Sutton2018-eh}. The aim of \emph{discounted distributional RL} is to learn the probability distribution over discounted returns. More formally, discounted distributional RL aims to learn the discounted return distribution function, \(\Phi^{\gamma}_\pi\), such that, with a slight abuse of notation, \(\Phi^{\gamma}_\pi(s)\) denotes the probability distribution over discounted returns when starting from state \(s \in \mathcal{S}\) and following policy \(\pi\), and \(\Phi^{\gamma}_\pi(s, a)\) denotes the probability distribution over discounted returns when starting from state \(s \in \mathcal{S}\), taking action \(a \in \mathcal{A}\), and following policy \(\pi\). 

Broadly speaking, discounted distributional RL methods can be categorized based on how they approximate (or parameterize) the discounted return distribution function. In this work, we take inspiration from quantile-based methods \citep{Dabney2018-ta, Rowland2024-sg}, which parameterize the discounted return distribution function as follows:
\begin{equation}
\label{eq_ddrl_1}
\Phi^{\gamma}_\pi(s) = \sum_{i = 1}^{n}{\frac{1}{n}\delta_{\Omega^{\gamma}_i(s)}} \quad \text{or} \quad \Phi^{\gamma}_\pi(s, a) = \sum_{i = 1}^{n}{\frac{1}{n}\delta_{\Omega^{\gamma}_i(s, a)}},
\end{equation}
where \(\Omega^{\gamma}_i\) denotes the \(\tau_i\)-quantile of the discounted return distribution, and \(\delta_{\Omega^{\gamma}_i}\) denotes a Dirac at \(\Omega^{\gamma}_i\). 

More formally, the set of \(\tau\)-quantiles of a probability distribution can be defined as follows:
\begin{definition}[\citet{Rowland2024-sg}]
\label{defn_quantile}
Let \(\mathscr{P}(\mathbb{R})\) denote the set of probability distributions over \(\mathbb{R}\). For a probability distribution \(w \in \mathscr{P}(\mathbb{R})\) and parameter \(\tau \in (0, 1)\), the set of \(\tau\)-quantiles of \(w\) is given by the set \(\{ z \in \mathbb{R}: F_w(z) = \tau \} \cup \inf\{ y \in \mathbb{R}: F_w(y) > \tau \}\), where \(F_w: \mathbb{R} \to [0, 1]\) is the CDF of \(w\), defined by \(F_w(t) = \mathbb{P}_{Z \sim w}(Z \leq t)\) for all \(t \in \mathbb{R}\).
\end{definition}
Quantile-based discounted distributional RL methods learn the \(\tau\)-quantiles of the discounted return distribution, \(\{\Omega^{\gamma}_i\}_{i=1}^{n}\), from samples using quantile regression (see Equation \ref{eq_qr_1}) as follows \citep{Dabney2018-ta}:
\begin{equation}
\label{eq_ddrl_2}
\Omega^{\gamma}_{i, t+1} = \Omega^{\gamma}_{i, t} + \alpha_t\frac{1}{n}\sum_{j=1}^{n}\left[ 
 \tau_i - \mathds{1}_{ \{  \psi_t < 0 \} } \right], \; \forall i = 1,  2,  \ldots, n,
\end{equation}
where \(\Omega^{\gamma}_{i, t} \) denotes the estimate of the \(\tau_i\)-quantile of the discounted return distribution at time \(t\), \(\alpha_t\) denotes the step size for the update, and \(\{\tau_i \in (0, 1)\}_{i=1}^{n}\), such that \(\tau_i = \frac{2i - 1}{2n}, \; i = 1, 2, \ldots, n\). Here, the choice of \(\psi_t\) depends on whether we want to do prediction (learning) or control (optimization). In the case of prediction, \(\psi_t = R_{t+1} + \gamma \Omega^{\gamma}_{j, t}(S_{t+1}) - \Omega^{\gamma}_{i, t}(S_t)\). In the case of control (via Q-learning \citep{Watkins1992-nq}), \(\psi_t = R_{t+1} + \gamma \Omega^{\gamma}_{j, t}(S_{t+1}, a^{*}) - \Omega^{\gamma}_{i, t}(S_t, A_t)\), where \(a^{*} \doteq \text{argmax}_{a'} {\frac{1}{n}\sum_{k = 1}^{n}\Omega^{\gamma}_{k, t}(S_{t+1}, a')}\).

Let us take a moment to highlight the definition of the above greedy action, \(a^{*}\). We are being greedy with respect to the \emph{average} of the \(\tau\)-quantiles of the discounted return distribution, which, by definition, is equivalent to being greedy with respect to the expected discounted return, and hence, the (discounted) state-action value function. More formally, the \emph{greedy action selection rule} for a distributional RL algorithm can be defined as follows:
\begin{definition}[\citet{Bellemare2023-mn}]
\label{defn_greedy_rule}
Let \(\boldsymbol{\pi}_{_\text{MS}}\) denote the space of stationary Markov policies, and let \(Q: \mathcal{S} \times \mathcal{A} \to \mathbb{R}\) denote a state-action value function. A greedy action selection rule is a mapping, \(\mathcal{G}: \mathbb{R}^{\mathcal{S} \times \mathcal{A}} \to \boldsymbol{\pi}_{_\text{MS}}\), with the property that for any \(Q \in \mathbb{R}^{\mathcal{S}  \times \mathcal{A}}\), \(\mathcal{G}(Q)\) is greedy with respect to \(Q\). That is, \(\mathcal{G}(Q)(a \mid s) > 0 \implies Q(s, a) = \max_{a' \in \mathcal{A}}Q(s,a')\).
\end{definition}
Importantly, we note that although discounted distributional RL methods make it possible to capture information related to the underlying (discounted) return distribution, much of the theoretical guarantees associated with these methods in the control setting require an action selection rule that satisfies Definition \ref{defn_greedy_rule}. That is, they require a greedy action selection rule that is greedy with respect to the expected return, and hence, the state-action value function \citep{Bellemare2023-mn}. 

Similarly, we note that the convergence of discounted distributional RL approaches, including the above quantile-based approach, is governed by the (discounted) distributional Bellman operator \citep{Bellemare2017-lf, Bellemare2023-mn}. Importantly, under certain conditions, the distributional Bellman operator is a contraction in the Wasserstein metric, thereby ensuring convergence to a fixed point.

\section{Differential Distributional RL}
\label{ar_drl}
In this section, we derive and present our primary contribution: the first set of distributional RL algorithms specifically designed for the average-reward setting. We call these algorithms, \emph{Differential Distributional RL} algorithms, due to the differential nature of average-reward RL algorithms. 

Importantly, this extension of distributional RL into the average-reward setting requires that we move away from using the discounted MDP, which forms the basis of existing distributional RL methods, and, in turn, utilize the average-reward MDP. Consequently, we cannot rely on the (discounted) distributional Bellman operator \citep{Bellemare2017-lf, Bellemare2023-mn} when formulating our approach.

In fact, this shift requires that we rethink and answer the following foundational questions: \textit{what do we want to learn?} and \textit{how can we learn it?} In this section, we propose a quantile-based framework that allows us to address these two questions, and in the process, extend distributional RL into the average-reward setting.

\subsection{The Differential Distributional Objective}
\label{objective}
Before we can derive our algorithms, we first need to establish an appropriate distributional objective. In particular, the average-reward formulation suggests \emph{two} distributions that may be of interest. The first corresponds to the probability distribution over differential returns (where the differential return is defined in Equation \eqref{eq_avg_reward_3}). The second corresponds to the limiting (or long-run) per-step reward distribution, whose mean yields the regular (non-distributional) average-reward objective (i.e., Equation \eqref{eq_avg_reward}).

At a first glance, it may seem natural to mirror the approach taken in the discounted setting by pursuing the differential return as the distributional objective. However, such an approach would not be fully aligned with the nature of the average-reward setting. In particular, in the average-reward setting, the differential return serves as a surrogate objective, optimized only to facilitate the optimization of the long-run per-step average-reward (Equation \eqref{eq_avg_reward}). As such, while it may be feasible to capture information about the differential return distribution, such information would be of little relevance in the average-reward setting, where the long-term, per-step behaviour is what is of interest. Conversely, the limiting per-step reward distribution aligns directly with the average-reward objective (its mean yields the average-reward itself), thereby making it an appealing distributional objective for the average-reward setting.

As such, given the above reasoning, which is formalized as Proposition \ref{proposition_1} below, the primary focus of this work will be to derive differential algorithms that can learn and/or optimize the \emph{limiting per-step reward distribution}. We will briefly revisit the return distribution from a purely empirical perspective in Section \ref{longrun_value_dist}.

\begin{proposition}
\label{proposition_1}
The limiting per-step reward distribution is the natural distributional objective in the average-reward setting, given that its mean yields the long-run average-reward, which is the primary prediction and control objective of (non-distributional) average-reward RL.
\end{proposition}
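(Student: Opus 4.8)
The plan is to reduce the claim to a single verification: that the mean of the limiting per-step reward distribution equals $\bar r_\pi$ in the prediction case (and $\bar r^*$ in the control case). Once that identity is established, the ``naturalness'' assertion follows by exactly the parallel with the discounted setting drawn in the discussion above --- there, the mean of the accepted distributional object, the return distribution, recovers the accepted non-distributional object, the expected discounted return. So the three steps I would carry out are: (i) write down the law of the per-step reward at time $t$; (ii) show this law converges and identify the limit; and (iii) compute the mean of that limit and match it against Equation \eqref{eq_avg_reward_2}.

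For (i), fix a stationary Markov policy $\pi$ and let $\rho_{\pi, t}$ be the law of $R_{t+1}$ under $A_{0:t} \sim \pi$, so that $\rho_{\pi,t}(r) = \sum_{s} \mathbb{P}(S_t = s)\sum_{a}\pi(a\mid s)\sum_{s'} p(s', r\mid s, a)$. For (ii), I would invoke the unichain assumption, which (as recalled in Section \ref{preliminaries}) guarantees $\mathbb{P}(S_t = s) \to \mu_\pi(s)$ independently of $S_0$; since $\mathcal{S}$, $\mathcal{A}$ and $\mathcal{R}$ are finite, the finite sum above passes to the limit termwise, yielding a well-defined limiting per-step reward distribution $\rho_\pi$ with $\rho_\pi(r) = \sum_s \mu_\pi(s)\sum_a \pi(a\mid s)\sum_{s'} p(s', r\mid s, a)$. (Equivalently, $\rho_\pi$ is the one-step reward law of the stationary version of the chain started from $\mu_\pi$.) For (iii), finiteness of $\mathcal{R}$ makes the mean a finite sum, and
\[
\mathbb{E}_{R\sim\rho_\pi}[R] \;=\; \sum_{r} r\,\rho_\pi(r) \;=\; \sum_s \mu_\pi(s)\sum_a \pi(a\mid s)\sum_{s'}\sum_r p(s', r\mid s, a)\,r \;=\; \bar r_\pi,
\]
the last equality being precisely Equation \eqref{eq_avg_reward_2}. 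The control case is analogous: under the communicating assumption one takes an average-optimal stationary policy, whose gain is the constant $\bar r^*$, and runs the same computation on the stationary distribution supported on one of its recurrent classes.

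I expect the only real obstacle to be in step (ii): $\lim_t \mathbb{P}(S_t = s)$ need not exist if the chain induced by $\pi$ is periodic, so one must either add an aperiodicity assumption or --- as is standard in average-reward RL --- read $\mu_\pi$ (hence $\rho_\pi$) in the Ces\`aro sense $\mu_\pi(s) = \lim_n \tfrac{1}{n} \sum_{t=1}^n \mathbb{P}(S_t = s)$, which is exactly the averaging already built into Equation \eqref{eq_avg_reward} and which leaves the mean computation in (iii) untouched. A second, more conceptual point is that ``natural'' is certified rather than derived: what the argument delivers is the mean-matching property, and the proposition's force is that this property places the limiting per-step reward distribution in the same structural position for average-reward RL that the return distribution occupies for discounted RL.
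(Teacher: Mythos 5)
Your proposal is correct, and it is in fact more explicit than anything the paper itself supplies: Proposition \ref{proposition_1} is stated without a formal proof, serving only to formalize the informal discussion that precedes it, which simply asserts that the mean of the limiting per-step reward distribution is the long-run average-reward. Your three steps fill in the missing content --- writing the law of \(R_{t+1}\) as \(\sum_s \mathbb{P}(S_t=s)\sum_a \pi(a\mid s)\sum_{s'}p(s',r\mid s,a)\), passing to the limit under the unichain assumption, and matching the resulting mean with Equation \eqref{eq_avg_reward_2} --- and this is exactly the identity the paper takes for granted (it reappears implicitly in Lemma \ref{lemma_1}, where \(\int_0^1 F^{-1}(\tau)\,d\tau = \bar r_\pi\) is asserted for the limiting reward CDF). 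Your caveat about periodicity is a genuine refinement rather than a pedantic one: the paper's definition of \(\phi_\pi(s)\) in Equation \eqref{eq_r_dist_1} as a plain limit of \(\mathbb{P}(R_t \mid S_0 = s)\) need not exist for a periodic unichain, so reading \(\mu_\pi\) and \(\phi_\pi\) in the Ces\`aro sense (or adding aperiodicity) is needed for the statement to be well-posed, and your observation that this leaves the mean computation unchanged is what keeps the proposition intact. Finally, your closing point --- that ``natural'' is certified by the mean-matching property rather than derived from it --- is exactly the right reading of the proposition's status: it is an interpretive claim resting on the parallel with the discounted setting, and the only mathematically checkable content is the identity you verified.
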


\begin{figure*}[htbp]
\centerline{\includegraphics[scale=0.53]{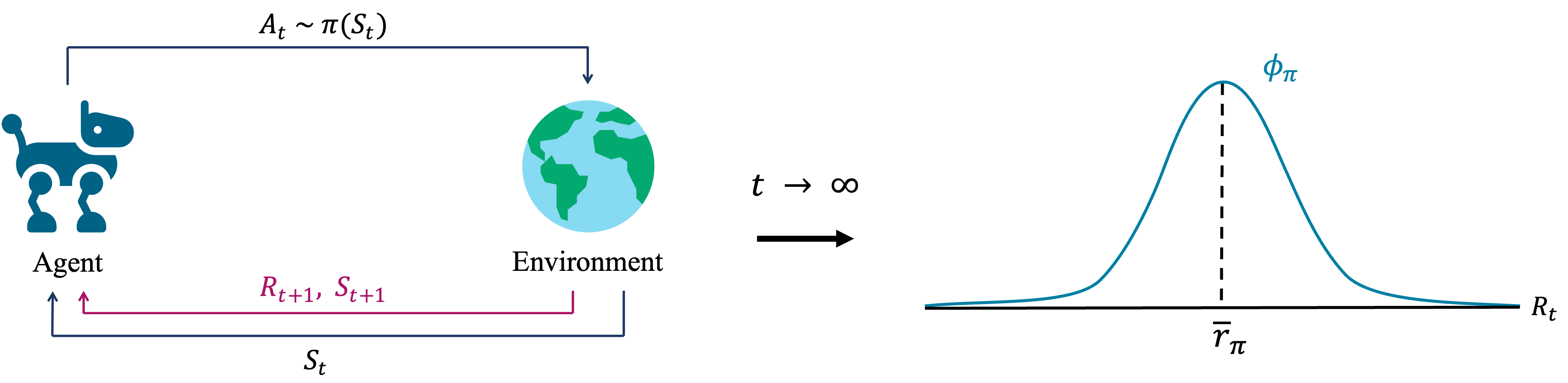}}
\caption{Illustration of the agent-environment interaction in an average-reward MDP. As \(t \to \infty\), following policy \(\pi\) yields a limiting per-step reward distribution, \(\phi_{\pi}\), with an average-reward, \(\bar{r}_{\pi}\). Standard average-reward RL methods aim to learn and/or optimize the average-reward, \(\bar{r}_{\pi}\). By contrast, the differential distributional RL methods proposed in this work aim to learn and/or optimize the limiting per-step reward distribution, \(\phi_{\pi}\).}
\label{fig_average_reward}
\end{figure*}

We are now ready to begin our theoretical treatment of the limiting per-step reward distribution. We begin by formally defining our distributional objective. In particular, for a given policy, \(\pi\), let
\begin{equation}
\label{eq_r_dist_1}
\phi_{\pi}(s) \doteq \lim_{t \rightarrow{} \infty} \mathbb{P}(R_t \mid S_0 = s, A_{0:t-1} \sim \pi)
\end{equation}
denote the limiting per-step reward distribution induced by following policy \(\pi\), where \(R_t \in \mathcal{R}\) denotes the per-step reward. As is standard practice with the regular average-reward objective, we can simplify Equation \eqref{eq_r_dist_1} by making certain assumptions about the Markov chain induced by following policy \(\pi\). To this end, we will utilize a \emph{unichain} assumption when doing prediction (learning), because it ensures the existence of a unique limiting distribution of per-step rewards that is independent of the initial state, such that \(\phi_{\pi}(s) = \phi_{\pi}\). Similarly, we will utilize a \emph{communicating} assumption when doing control (optimization), because it ensures the existence of a unique optimal distribution of per-step rewards that is independent of the initial state. Figure \ref{fig_average_reward} depicts the agent-environment interaction in an average-reward MDP, where following policy \( \pi \) yields a limiting reward distribution, \(\phi_{\pi}\), whose mean corresponds to the average-reward objective, \(\bar{r}_{\pi}\). 

\subsection{A Quantile-Based Approach}
\label{quantile_approach}
In this section, we take inspiration from the quantile-based methods utilized for discounted distributional RL to develop a quantile-based approach that can be used to learn and/or optimize the limiting per-step reward distribution of an average-reward MDP. Like the discounted setting (see Section \ref{section_discounted}), our overall goal and strategy is to derive an appropriate set of quantile regression-based updates that approximate our target distribution. More formally, we approximate (or parameterize) the limiting per-step reward distribution as follows:
\begin{equation}
\label{eq_r_dist_2}
\phi_\pi = \sum_{i = 1}^{m}{\frac{1}{m}\delta_{\theta_i}},
\end{equation}
where \(\theta_i\) denotes the \(\tau_i\)-quantile of the limiting per-step reward distribution, and \(\delta_{\theta_i}\) denotes a Dirac at \(\theta_i\). Here, we adopt the same formal definition for the \(\tau\)-quantiles of a probability distribution as in the discounted setting (i.e., Definition \ref{defn_quantile}).

Now, let us consider the \(\tau\)-quantiles of the limiting per-step reward distribution, \(\{\theta_i\}_{i=1}^{m}\). As previously mentioned, we wish to approximate these quantiles using quantile regression. To this end, the generic quantile regression framework outlined in Equations \eqref{eq_qr_1} and \eqref{eq_qr_2} suggest the following set of quantile update rules for the per-step reward quantiles: 
\begin{equation}
\label{eq_r_dist_3}
\theta_{i, t+1} = \theta_{i, t} + \alpha_t\left( 
 \tau_i - \mathds{1}_{ \{  R_{t+1} < \theta_{i,t} \} } \right) \; \forall i = 1,  2,  \ldots, m,
\end{equation}
where \(\theta_{i, t} \) denotes the estimate of the \(\tau_i\)-quantile of the limiting per-step reward distribution, \(\alpha_t\) denotes the step size for the update, and \(R_{t+1}\) denotes the per-step reward. 

When comparing this quantile update rule to that of the discounted setting (i.e., Equation \eqref{eq_ddrl_2}), we can see that both rules have a similar structure, with the key difference being that we have replaced \(\psi_t\) with \(R_{t+1} - \theta_{i,t}\). Most notably, unlike the return-based quantile update rule used in the discounted setting, the per-step reward quantile update rule \eqref{eq_r_dist_3} does not contain the full TD or Q-learning targets and estimates. This is not necessarily an issue, so long as we can find a way to properly incorporate the quantile update rule \eqref{eq_r_dist_3} into a broader algorithm that does contain the full TD or Q-learning targets and estimates.

To this end, we note that, typically, differential RL algorithms will learn and/or optimize the value function and average-reward simultaneously. From an algorithmic perspective, this means that the algorithms start with initial estimates (or guesses) for the value function and average-reward, then update these estimates over time, until they have learned and/or optimized these two objectives. As such, one way through which we can incorporate the quantile updates into a differential algorithm is by incorporating them into the average-reward estimate update. In particular, we can relate the average-reward, \(\bar{r}_{\pi}\), to the \(\tau\)-quantiles of the limiting per-step reward distribution, \(\{\theta_i\}_{i=1}^{m}\), as follows:
\begin{lemma}
\label{lemma_1}
Given a unichain, communicating, or equivalent assumption, consider the limiting per-step reward distribution parameterized by the \(\tau\)-quantiles, \(\{\theta_i\}_{i=1}^{m}\), as per Equation \eqref{eq_r_dist_2} for \(\{\tau_i \in (0, 1)\}_{i=1}^{m}\), such that \(\tau_i = \frac{2i - 1}{2m}, \; i = 1, 2, \ldots, m\). The expected value of the \(\tau\)-quantiles converges to the average-reward of the limiting per-step reward distribution, \(\bar{r}_{\pi}\), as \(m \to \infty\).
\end{lemma}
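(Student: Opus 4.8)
The plan is to show that the average of the $m$ quantiles $\theta_i = F_{\phi_\pi}^{-1}(\tau_i)$ at the midpoint locations $\tau_i = \frac{2i-1}{2m}$ is a Riemann-sum approximation of the integral $\int_0^1 F_{\phi_\pi}^{-1}(\tau)\,d\tau$, and that this integral equals $\bar r_\pi$. First I would recall that, under Assumptions~\ref{assumption_unichain} and \ref{assumption_communicating}, the limiting per-step reward distribution $\phi_\pi$ is a well-defined element of $\mathscr{P}(\mathbb{R})$ supported on the finite reward set $\mathcal{R}$, so it has a bounded inverse CDF $F_{\phi_\pi}^{-1}$. The key classical identity I would invoke is that for any distribution with finite mean, the mean equals the integral of the quantile function: $\mathbb{E}_{Z\sim\phi_\pi}[Z] = \int_0^1 F_{\phi_\pi}^{-1}(\tau)\,d\tau$. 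Combined with Equation~\eqref{eq_avg_reward_2} (or more precisely the fact that the mean of $\phi_\pi$ is exactly $\bar r_\pi$, which is what Figure~\ref{fig_average_reward} and the surrounding discussion assert), this gives $\bar r_\pi = \int_0^1 F_{\phi_\pi}^{-1}(\tau)\,d\tau$.

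Next I would write the quantity of interest as $\frac{1}{m}\sum_{i=1}^m \theta_i = \frac{1}{m}\sum_{i=1}^m F_{\phi_\pi}^{-1}\!\left(\tfrac{2i-1}{2m}\right)$ and recognize this as the midpoint-rule quadrature approximation of $\int_0^1 F_{\phi_\pi}^{-1}(\tau)\,d\tau$ on the uniform partition of $(0,1)$ into $m$ subintervals. It then remains to argue that this midpoint sum converges to the integral as $m\to\infty$. Because $F_{\phi_\pi}^{-1}$ is monotone nondecreasing and bounded (its range is contained in $[\min\mathcal{R},\max\mathcal{R}]$), it is Riemann integrable on $(0,1)$, and for monotone bounded functions the midpoint sums (indeed any tagged Riemann sums on a partition with mesh $\to 0$) converge to the Riemann integral; I would justify this via the standard sandwich between lower and upper Darboux sums, whose difference is at most $\frac{1}{m}\big(F_{\phi_\pi}^{-1}(1^-) - F_{\phi_\pi}^{-1}(0^+)\big) \le \frac{\max\mathcal{R} - \min\mathcal{R}}{m} \to 0$. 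Hence $\frac{1}{m}\sum_{i=1}^m \theta_i \to \int_0^1 F_{\phi_\pi}^{-1}(\tau)\,d\tau = \bar r_\pi$.

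The main obstacle I anticipate is not the Riemann-sum limit — that is routine given boundedness and monotonicity — but rather pinning down the two facts that feed into it cleanly: first, that $\phi_\pi$ genuinely exists and is initial-state-independent under the stated assumptions (this is where Assumptions~\ref{assumption_unichain}/\ref{assumption_communicating} and the definition \eqref{eq_r_dist_1} do their work), and second, the mean–quantile-function identity $\mathbb{E}_{Z\sim\phi_\pi}[Z] = \int_0^1 F_{\phi_\pi}^{-1}(\tau)\,d\tau$, which I would either cite from a standard probability reference or prove in a line via Fubini applied to $\int_0^1 \int_{\mathbb{R}} \mathbf{1}\{F_{\phi_\pi}(z) < \tau\}\,dz\,d\tau$ together with the fact that $\phi_\pi$ has finite support. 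One minor subtlety worth a sentence: at values of $\tau$ where $F_{\phi_\pi}$ jumps (which happens here since $\mathcal{R}$ is finite), Definition~\ref{defn_quantile} makes $\theta_i$ well-defined as a specific representative, and since $F_{\phi_\pi}^{-1}$ has at most $|\mathcal{R}|$ discontinuities, the choice of representative affects the sum by only an $O(1/m)$ amount and does not change the limit.
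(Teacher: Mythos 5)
Your proposal is correct and follows essentially the same route as the paper's proof: expressing $\frac{1}{m}\sum_i \theta_i$ as a midpoint Riemann sum for $\int_0^1 F^{-1}(\tau)\,d\tau$ and invoking the identity that this integral equals the mean, hence $\bar r_\pi$. Your version is in fact somewhat more careful than the paper's, since you explicitly justify the Riemann-sum convergence via monotonicity and boundedness of $F^{-1}$ and address the finitely many jump discontinuities arising from the finite reward set, points the paper's proof passes over.
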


\begin{proof}
Let \(F(r)\) denote the CDF of the limiting per-step reward distribution (which exists and is unique given a unichain, communicating, or equivalent assumption), and let \(\theta_i\) represent its \(\tau_i\)-quantile, such that \(F(\theta_i) = \tau_i\). The expected value of the \(\tau\)-quantiles is: 
\begin{equation}
\label{eq_r_dist_4}
\frac{1}{m} \sum_{i=1}^m \theta_i = \frac{1}{m} \sum_{i=1}^m F^{-1}\left(\frac{2i - 1}{2m}\right),
\end{equation}

where \(F^{-1}\) is the inverse CDF. 

By the definition of \(\tau_i = \frac{2i - 1}{2m}\), the \(\tau_i\) values are evenly spaced over \([0, 1]\), creating a uniform partition. Hence, as \(m \to \infty\), the summation \(\frac{1}{m} \sum_{i=1}^m \theta_i\) becomes a Riemann sum for the following integral: 

\begin{equation}
\label{eq_r_dist_5}
\int_0^1 F^{-1}(\tau) \, d\tau.
\end{equation}
It is a well-known result that for a random variable \(X\) with CDF \(F(x)\), \(E[X] = \int_0^1 F^{-1}(x) dx\). Therefore, as \(m \to \infty\),
\begin{equation}
\label{eq_r_dist_6}
\frac{1}{m} \sum_{i=1}^m \theta_i \to \int_0^1 F^{-1}(\tau) \, d\tau = \bar{r}_{\pi}.
\end{equation}
Thus, the expected value of the \(\tau\)-quantiles converges to the average reward, \(\bar{r}_{\pi}\), of the limiting per-step reward distribution as \(m \to \infty\). This completes the proof.
\end{proof}

Hence, we now have a way to express the set of quantile updates \eqref{eq_r_dist_3} as an average-reward update:
\begin{subequations}
\label{eq_r_dist_7}
\begin{align}
\label{eq_r_dist_7_1}
\bar{R}_{t+1} & = \frac{1}{m}\sum_{i=1}^{m}\theta_{i, t+1},\text{  where}\\
\label{eq_r_dist_7_2}
\theta_{i, t+1} & = \theta_{i, t} + \alpha_{i, t}\left( 
 \tau_i - \mathds{1}_{ \{  R_{t+1} < \theta_{i,t} \} } \right), \; \forall i = 1,  2,  \ldots, m,
\end{align}
\end{subequations}
and \(\bar{R}_{t+1}\) denotes an estimate of the average-reward, \(\bar{r}_\pi\). Importantly, we can show that this quantile-based average-reward update converges to the average-reward of the limiting per-step reward distribution induced by a given policy:
\begin{theorem}
\label{theorem_1}
Given a unichain, communicating, or equivalent assumption, consider the limiting per-step reward distribution induced by following policy \(\pi\), and parameterized by the \(\tau\)-quantiles, \(\{\theta_i\}_{i=1}^{m}\), as per Equation \eqref{eq_r_dist_2} for \(\{\tau_i \in (0, 1)\}_{i=1}^{m}\), such that \(\tau_i = \frac{2i - 1}{2m}, \; i = 1, 2, \ldots, m\). Also consider the quantile-based update rules for the average-reward estimate \eqref{eq_r_dist_7}, along with corresponding step sizes that satisfy: \(\alpha_t > 0, \; \sup_{t \geq 0} \alpha_t < \infty, \; \sum_{t = 0}^\infty \alpha_t = \infty, \text{and}\; \alpha_t = o(1 / \text{log } t)\). If the quantile estimates, \(\{\theta_{i,t}\}_{i=1}^{m}\), converge a.s. to the \(\tau\)-quantiles of the limiting per-step reward distribution, then the average-reward estimate, \(\bar{R}_{t}\), converges a.s. to the average-reward of the limiting per-step reward distribution, \(\bar{r}_{\pi}\), as \(t \to \infty\).
\end{theorem}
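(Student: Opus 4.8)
The plan is to exploit the fact that, by the update rule \eqref{eq_r_dist_7_2}, the average-reward estimate is at every time step \emph{exactly} the uniform mean of the current quantile estimates, $\bar{R}_t = \frac{1}{m}\sum_{i=1}^m \theta_{i,t}$; convergence of $\bar{R}_t$ therefore reduces to convergence of a finite arithmetic mean of quantities that, by hypothesis, converge almost surely.

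First I would fix $m$ and, for each $i \in \{1, \ldots, m\}$, let $E_i$ be the probability-one event $\{\theta_{i,t} \to \theta_i \text{ as } t \to \infty\}$ guaranteed by the hypothesis. Since $E \doteq \bigcap_{i=1}^m E_i$ is a finite intersection, it again has probability one, and on $E$ linearity of limits gives $\bar{R}_t = \frac{1}{m}\sum_{i=1}^m \theta_{i,t} \to \frac{1}{m}\sum_{i=1}^m \theta_i$. Thus $\bar{R}_t$ converges almost surely, as $t \to \infty$, to the mean of the $m$-atom quantile approximation \eqref{eq_r_dist_2} of $\phi_\pi$. Next I would invoke Lemma \ref{lemma_1}: because $\tau_i = \frac{2i-1}{2m}$ induces a uniform partition of $[0,1]$, the quantity $\frac{1}{m}\sum_{i=1}^m \theta_i$ is a Riemann sum for $\int_0^1 F^{-1}(\tau)\, d\tau = \bar{r}_\pi$, so $\frac{1}{m}\sum_{i=1}^m \theta_i \to \bar{r}_\pi$ as $m \to \infty$. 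Chaining the two steps (limit in $t$ for fixed $m$, then limit in $m$) yields the claimed convergence of $\bar{R}_t$ to $\bar{r}_\pi$.

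As a final bookkeeping point I would remark that the stated step-size conditions --- $\alpha_t > 0$, $\sup_{t \geq 0} \alpha_t < \infty$, $\sum_t \alpha_t = \infty$, and $\alpha_t = o(1/\log t)$ --- are precisely the conditions under which the quantile-regression recursion \eqref{eq_r_dist_7_1}, driven by the rewards $R_t$ whose law converges to $\phi_\pi$ under Assumptions \ref{assumption_unichain}--\ref{assumption_communicating}, is a (possibly asynchronous) stochastic-approximation scheme whose iterates converge almost surely to $\{\theta_i\}_{i=1}^m$; including them makes clear that the hypothesis of the theorem is attainable, so that the conditional statement is not vacuous.

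The main obstacle is not the $t \to \infty$ limit itself, which as above is essentially a one-line consequence of linearity of limits and finiteness of $m$, but the careful handling of the \emph{double} limit hidden in the statement: for any fixed $m$, $\bar{R}_t$ converges only to the mean of a quantized approximation of $\phi_\pi$, and this mean equals $\bar{r}_\pi$ only in the $m \to \infty$ limit via Lemma \ref{lemma_1}. I would therefore be explicit that the order of limits is ``$t$ then $m$'', work throughout on the single probability-one event $E$ so that the almost-sure qualifier survives the finite averaging, and --- if a genuine joint limit were desired rather than the iterated one --- supplement the argument with a bound on the Riemann-sum error uniform in $m$ together with a rate for $\bar{R}_t \to \frac{1}{m}\sum_i \theta_i$. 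For the statement as written, however, the iterated limit suffices.
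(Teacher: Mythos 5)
Your proposal is correct and follows essentially the same route as the paper: since $\bar{R}_t$ is by construction the finite average of the quantile estimates, almost-sure convergence of each $\theta_{i,t}$ immediately gives $\bar{R}_t \to \frac{1}{m}\sum_{i=1}^m \theta_i$ on a probability-one event, and Lemma \ref{lemma_1} is then invoked to identify this limit with $\bar{r}_\pi$. If anything, your explicit handling of the iterated limit ($t \to \infty$ for fixed $m$, then $m \to \infty$) is more careful than the paper's one-line argument, which silently conflates the fixed-$m$ quantile average with $\bar{r}_\pi$.
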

\begin{proof}
If \(\{\theta_{i,t}\}_{i=1}^{m} \to \{\theta_i\}_{i=1}^{m}\) as \(t \to \infty\), then, by Lemma \ref{lemma_1}, it directly follows that the average of these estimates, which corresponds to \(\bar{R}_{t}\), converges, almost surely, to \(\bar{r}_{\pi}\) as \(t \to \infty\). This completes the proof.
\end{proof}
Hence, given the convergence of the \(\tau\)-quantile estimates, it is intuitive to see that the average-reward estimate will also converge. We provide a full convergence proof for the \(\tau\)-quantile estimates in Appendix B. 

As such, we now have a way to learn the limiting per-step reward distribution induced by a given policy. We can further extend these results into control-based settings by choosing an appropriate action selection rule. As in the discounted setting, we will utilize an action selection rule that is consistent with Definition \ref{defn_greedy_rule}. To this end, Algorithm \ref{alg_1} shows the Differential Distributional (or \emph{D2}) Q-learning algorithm, which adopts a greedy action selection rule with respect to the state-action value function. The full set of Differential Distributional RL algorithms is included in Appendix A.
\begin{algorithm}
   \caption{Differential Distributional (D2) Q-Learning}
   \label{alg_1}
\begin{algorithmic}
    \STATE Obtain initial \(S\)
    \WHILE{still time to train}
        \STATE \(A \leftarrow\) action given by \(\pi\) for \(S\)
        \STATE Take action \(A\), observe \(R, S'\)
        \FOR{$i = 1, 2, \ldots, m$}
            \STATE \(\theta_{i} = \theta_{i} + \alpha_{_\theta}\left( 
 \tau_i - \mathds{1}_{ \{  R < \theta_{i} \} } \right)\)
        \ENDFOR
        \STATE \(\bar{R} = \frac{1}{m}\sum_{i=1}^{m}\theta_{i}\)
        \STATE \(\delta = R - \bar{R} + \max_a Q(S', a) - Q(S, A)\)
        \STATE \(Q(S, A) = Q(S, A) + \alpha\delta\)
        \STATE \(S = S'\)
    \ENDWHILE
    \STATE return \(\{\theta_i\}_{i=1}^{m}\)
\end{algorithmic}
\end{algorithm}

Importantly, with the aforementioned action selection rule, we retain the regular (non-distributional) Q-learning operator for the average-reward setting: 
\begin{equation}
\label{eq_q_operator}
TQ(s, a) \doteq \sum_{s', r} p(s', r \; | \; s, a) (r + \max_{a'} Q(s', a')). 
\end{equation}
As such, we can use existing results to establish that the convergence of the state-action value estimates implies that the average-reward converges to the optimal average-reward:
\begin{theorem}
Let \(\bar{r}_{\pi_t}\) denote the average-reward obtained when following policy \(\pi_t\), such that \(\pi_t\) is a greedy policy with respect to the state-action value estimates, \(Q_t(s, a)\), where \(s \in \mathcal{S}\) and \(a \in \mathcal{A}\). Also, let \(q^{*}\) denote the solution of the Bellman optimality equation \eqref{eq_avg_reward_5}. If a communicating or equivalent assumption holds, such that there exists a unique \(q^{*}\) up to an additive constant, then we have that if \(Q_t \to q^{*}\) a.s. as \(t \to \infty\), then \(\bar{r}_{\pi_t} \to \bar{r}*\) a.s. as \(t \to \infty\).
\end{theorem}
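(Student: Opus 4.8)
The plan is to combine the almost-sure convergence $Q_t \to q^*$ with two structural facts about communicating average-reward MDPs: (i) for $t$ large enough, $\pi_t$ selects only actions that are also greedy with respect to the limit $q^*$, and (ii) every stationary policy that is greedy with respect to $q^*$ attains the optimal gain $\bar{r}^*$. Together these give $\bar{r}_{\pi_t} = \bar{r}^*$ for all sufficiently large $t$, which trivially implies the claimed convergence.

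For step (i), I would fix a sample path on which $Q_t(s,a) \to q^*(s,a)$ for every $(s,a) \in \mathcal{S}\times\mathcal{A}$; by hypothesis the set of such paths has probability one, and $\mathcal{S}\times\mathcal{A}$ is finite. For each state $s$, let $\mathcal{A}^*(s) = \arg\max_a q^*(s,a)$. For every $a \notin \mathcal{A}^*(s)$ we have $q^*(s,a) < \max_{a'} q^*(s,a')$, so by convergence there is a finite time after which $Q_t(s,a) < \max_{a'} Q_t(s,a')$; taking the maximum of these times over the finitely many such $(s,a)$ pairs yields a time $T$ after which $\arg\max_a Q_t(s,a) \subseteq \mathcal{A}^*(s)$ for all $s$. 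Since $\pi_t$ is greedy with respect to $Q_t$ in the sense of Definition \ref{defn_greedy_rule}, it follows that for $t \geq T$ the policy $\pi_t$ is also greedy with respect to $q^*$.

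For step (ii), I would appeal to the standard theory of communicating average-reward MDPs (the "existing results" referenced above, in the spirit of \citet{Puterman1994-dq, Wan2021-re}). Writing $v^*(s) = \max_a q^*(s,a)$ and using that $q^*$ solves the Bellman optimality equation \eqref{eq_avg_reward_5} with optimal gain $\bar{r}^*$, any greedy action $a = \pi(s)$ satisfies $v^*(s) = \sum_{s',r} p(s',r \mid s, a)\,[\,r - \bar{r}^* + v^*(s')\,]$, which is precisely the policy-evaluation (Poisson) equation \eqref{eq_avg_reward_4} for the deterministic policy $\pi$ with gain $\bar{r}^*$ and bias $v^*$. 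Under Assumption \ref{assumption_communicating} and the stated uniqueness-up-to-an-additive-constant hypothesis on $q^*$, this pins the gain of every such greedy policy to $\bar{r}^*$ (argue, if desired, by left-multiplying the Poisson equation by the stationary distribution of each recurrent class of $\pi$, on which $v^*$-differences telescope to zero). Hence $\bar{r}_{\pi_t} = \bar{r}^*$ for all $t \geq T$, so $\bar{r}_{\pi_t} \to \bar{r}^*$ on the chosen sample path; since that path was drawn from a probability-one event, $\bar{r}_{\pi_t} \to \bar{r}^*$ almost surely.

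I expect the main obstacle to be step (ii) rather than step (i): a policy greedy with respect to $q^*$ need not itself induce a unichain, so the identification of its gain with $\bar{r}^*$ is exactly where the communicating structure and the uniqueness-up-to-constant hypothesis are doing the real work, and it is the point one must cite carefully. Step (i) is a routine finiteness argument; its only subtlety is to carry out the "eventually greedy with respect to $q^*$" reasoning on fixed sample paths so that the conclusion is uniform over $\mathcal{S}$ and holds on a probability-one event.
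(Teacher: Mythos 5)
Your proposal is correct, but it follows a genuinely different route from the paper. The paper's proof is a two-line continuity argument: it invokes Theorem 8.5.5 of \citet{Puterman1994-dq} to get the sandwich \(\min_{s,a}(TQ_t - Q_t) \leq \bar{r}_{\pi_t} \leq \bar{r}^* \leq \max_{s,a}(TQ_t - Q_t)\), hence \(\vert \bar{r}^* - \bar{r}_{\pi_t}\vert \leq sp(TQ_t - Q_t)\) with \(T\) the operator of Equation \eqref{eq_q_operator}, and then applies the continuous mapping theorem: since \(Q_t \to q^*\) a.s. and \(Tq^* - q^* = \bar{r}^* e\) has zero span, the bound vanishes. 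You instead prove a stronger sample-path statement: after a finite (random) time the support of \(\pi_t\) stabilizes inside the \(q^*\)-greedy set (your step (i), a routine finiteness/strict-gap argument), and every \(q^*\)-greedy policy has gain exactly \(\bar{r}^*\) (your step (ii), via the Poisson equation and telescoping against the stationary distribution of each recurrent class — correctly not assuming the greedy policy is unichain), so \(\bar{r}_{\pi_t} = \bar{r}^*\) eventually, not merely in the limit. The trade-offs: the paper's span-seminorm route is shorter, quantitative (it bounds \(\vert \bar{r}^* - \bar{r}_{\pi_t}\vert\) even when \(Q_t\) only approximates \(q^*\)), and outsources the gain sandwich to a cited theorem; your route is self-contained on the policy-evaluation side and gives eventual exact optimality, but it relies on finiteness of \(\mathcal{S}\times\mathcal{A}\) and strict suboptimality gaps, and the burden falls on step (ii), which you prove correctly. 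Both arguments share the same implicit reading that, under Assumption \ref{assumption_communicating}, the scalar in the optimality equation solved by \(q^*\) is the optimal gain \(\bar{r}^*\); also note that your argument never actually uses the uniqueness-up-to-a-constant hypothesis, and that step (ii) should be phrased for possibly randomized tie-breaking policies (the same Poisson-equation computation goes through after averaging over \(\pi(\cdot\mid s)\)).
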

\begin{proof}
We adopt the proof technique used in \citet{Wan2021-re} to prove a similar result. In particular, the desired result follows directly from Theorem 8.5.5 of \citet{Puterman1994-dq}. More specifically, we have that:
\begin{align}
    \nonumber 
    \min_{s, a} (TQ_t - Q_t) &\leq \bar{r}_{\pi_t} \\\nonumber &\leq \bar{r}*\\ & \leq \max_{s, a} (TQ_t - Q_t),\\
    \implies \vert \bar{r}* - \bar{r}_{\pi_t} \vert &\leq sp(TQ_t - Q_t),
\end{align}
where \(TQ(s, a)\) is defined in Equation \eqref{eq_q_operator}. Because \(Q_t \to q^{*}\) a.s. as \(t \to \infty\), and \(sp(TQ_t - Q_t)\) is clearly a continuous function of \(Q_t\), we have, by the continuous mapping theorem, that \(sp(TQ_t - Q_t) \to sp(Tq^{*} - q^{*}) = 0\) a.s. as \(t \to \infty\). Hence, we can conclude that \(\bar{r}_{\pi_t} \to \bar{r}*\) a.s. as \(t \to \infty\). This completes the proof.
\end{proof}

The above theorem requires that \(Q_t \to q^{*}\) a.s. as \(t \to \infty\). In this regard, we provide a full convergence proof for the convergence of \(Q_t\) in Appendix B. We also provide a full proof for the convergence of the value function estimates in the prediction setting (i.e., \(V_t\)) in Appendix B. Altogether, the theoretical results provided in this section and in Appendix B show the almost sure convergence of the tabular D2 algorithms for both prediction and control.

\begin{figure*}[htbp]
\centerline{\includegraphics[scale=0.57]{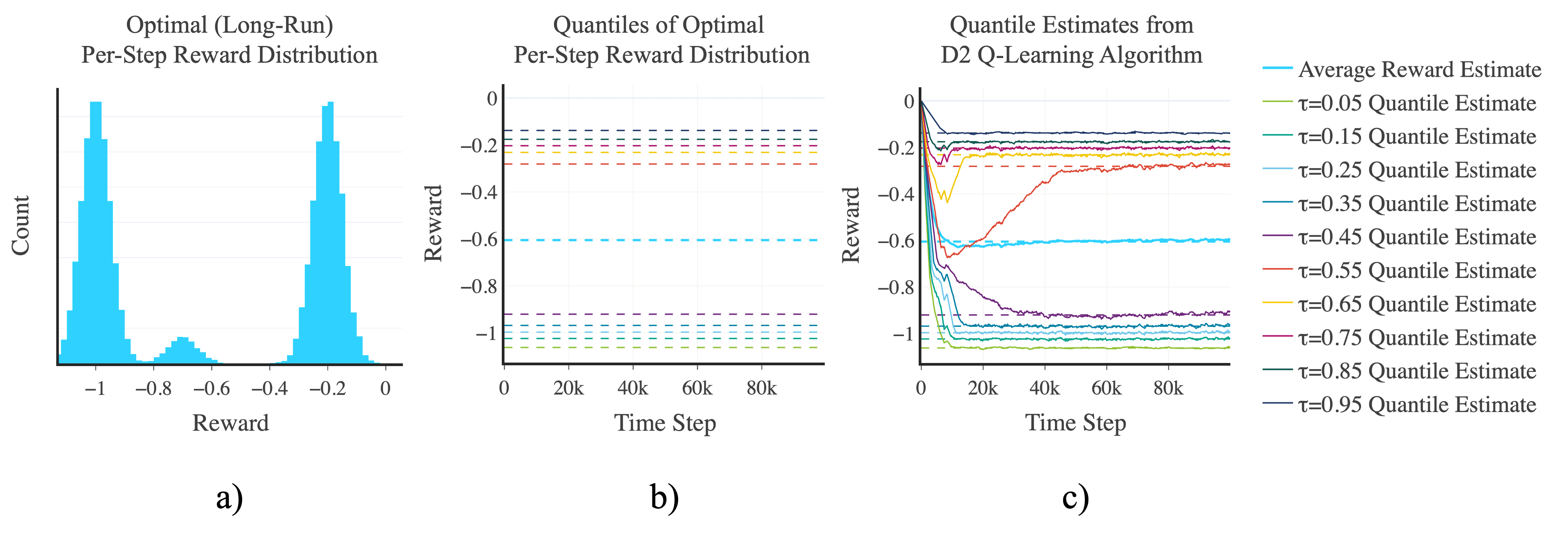}}
\caption{\textbf{a)} Histogram showing the empirical (\(\varepsilon\)-greedy) optimal (long-run) per-step reward distribution in the red-pill blue-pill task. \textbf{b)}  Quantiles of the optimal per-step reward distribution in the red-pill blue-pill task. \textbf{c)} Convergence plot of the per-step reward quantile estimates as learning progresses when using the D2 Q-learning algorithm in the red-pill blue-pill task.}
\label{fig_results_1}
\end{figure*}

\subsection{Learning the Differential Return Distribution}
\label{longrun_value_dist}
Although we argue in this work that the limiting per-step reward distribution is the natural distributional objective in the average-reward setting, it may still be useful to consider the differential return distribution from an empirical perspective (i.e., it could potentially yield superior empirical performance). As such, in this work we explore such an approach from a purely empirical perspective, and propose a set of algorithms that parameterize the probability distribution over differential returns, as well as the limiting per-step reward distribution. We call the resulting set of algorithms, \emph{Double Differential Distributional}, or \emph{D3}, algorithms because they simultaneously learn and/or optimize both distributions. 

To this end, when we apply an analogous framework to the one described in Section \ref{section_discounted}, such that \(\psi_t = R_{t+1} - \bar{R}_t + \Omega_{j, t}(S_{t+1}, a^{*}) - \Omega_{i, t}(S_t, A_t)\), and incorporate the quantile-based average-reward update \eqref{eq_r_dist_7}, we arrive at the D3 Q-learning algorithm (\ref{alg_2}). The full set of Double Differential Distributional RL algorithms is included in Appendix A.
\begin{algorithm}
   \caption{D3 Q-Learning}
   \label{alg_2}
\begin{algorithmic}
    \STATE Obtain initial \(S\)
    \WHILE{still time to train}
        \STATE \(A \leftarrow\) action given by \(\pi\) for \(S\)
        \STATE Take action \(A\), observe \(R, S'\)
        \STATE \(\theta_{i} = \theta_{i} + \alpha_{_\theta}\left( 
\tau_i - \mathds{1}_{ \{  R < \theta_{i} \} } \right), \, \forall i = 1, 2, \ldots, m\)
        \STATE \(\bar{R} = \frac{1}{m}\sum_{i=1}^{m}\theta_{i}\)
        \STATE \(a^{*} = \text{argmax}_{a'}{\frac{1}{n}\sum_{j = 1}^{n}\Omega_j(S', a')}\)
        \FOR{$j = 1, 2, \ldots, n$}
        \STATE \(\beta = \frac{1}{n}\sum_{k=1}^{n}\Big[\tau_j - \mathds{1}_{ \{  R - \bar{R} + \Omega_k(S',a^{*}) - \Omega_j(S, A)  < 0 \} } \Big]\)
        \(\Omega_j(S, A) = \Omega_j(S, A) + \alpha\beta\)
        \ENDFOR
        \STATE \(S = S'\)
    \ENDWHILE
    \STATE return \(\{\theta_i\}_{i=1}^{m}\) and \(\{\Omega_j\}_{j=1}^{n}\)
\end{algorithmic}
\end{algorithm}

\begin{figure}[htbp]
\centerline{\includegraphics[scale=0.49]{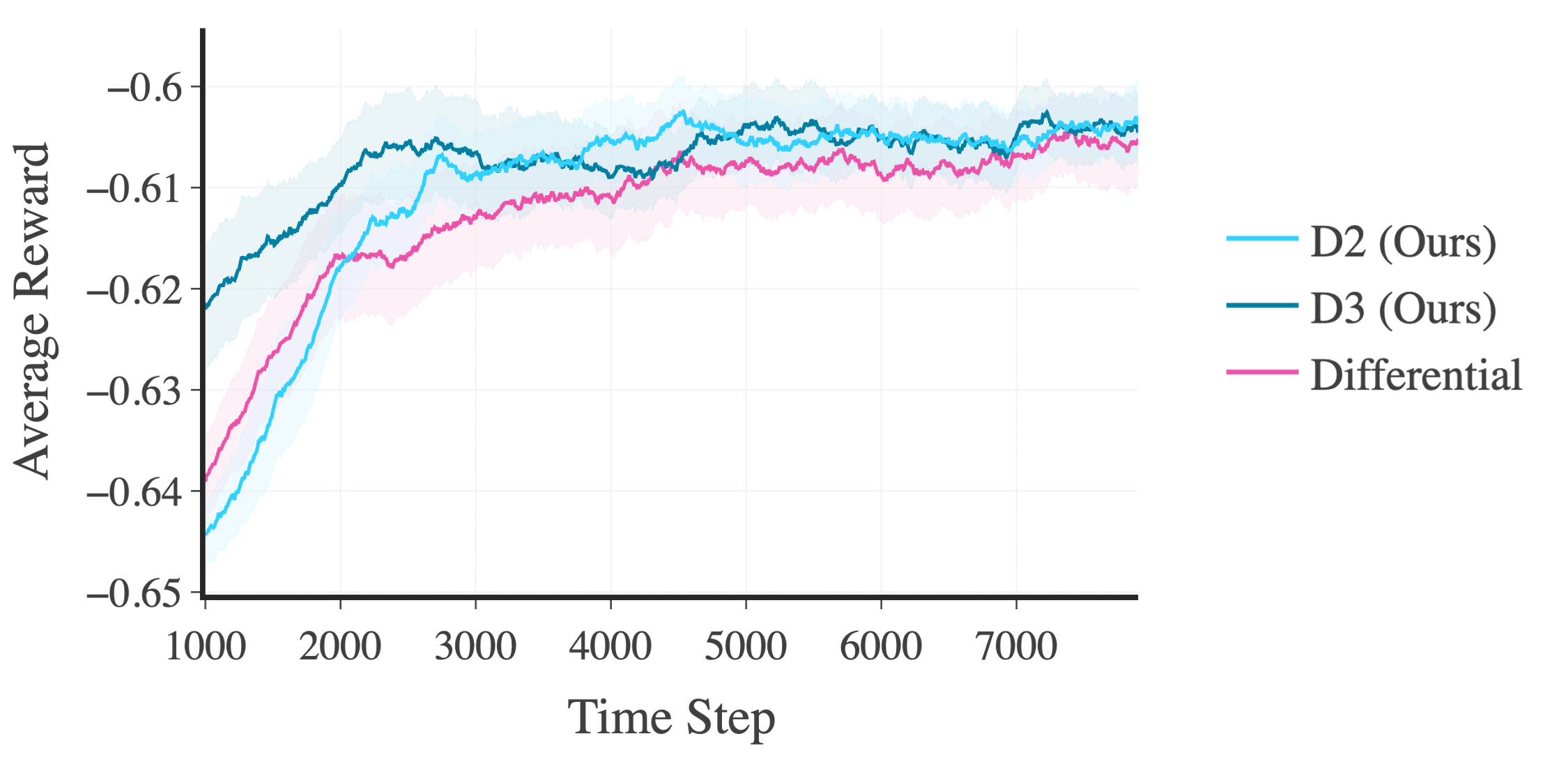}}
\caption{Rolling average-reward when using the D2 and D3 algorithms vs. a non-distributional Differential algorithm in the red-pill blue-pill environment. A solid line denotes the mean average-reward, and the corresponding shaded region denotes a 95\% confidence interval over 50 runs.}
\label{fig_results_2}
\end{figure}

\section{Experimental Results}
\label{experiments}
In this section, we present empirical results obtained when applying our D2 and D3 algorithms on two groups of experiments. In the first group of experiments, we aimed to validate whether the D2 and D3 algorithms could successfully learn the optimal per-step reward distribution by conducting experiments in environments where the optimal per-step reward distribution is known. In the second group of experiments, we aimed to evaluate the empirical performance of the D2 and D3 algorithms in more difficult environments. In both groups, we compared the performance of our algorithms to that of non-distributional Differential algorithms derived from the framework proposed in \citet{Wan2021-re}. The full set of experimental details and results, including additional experiments performed, is provided in Appendix C. Below, we highlight the key results.  

\begin{figure*}[htbp]
\centerline{\includegraphics[scale=0.54]{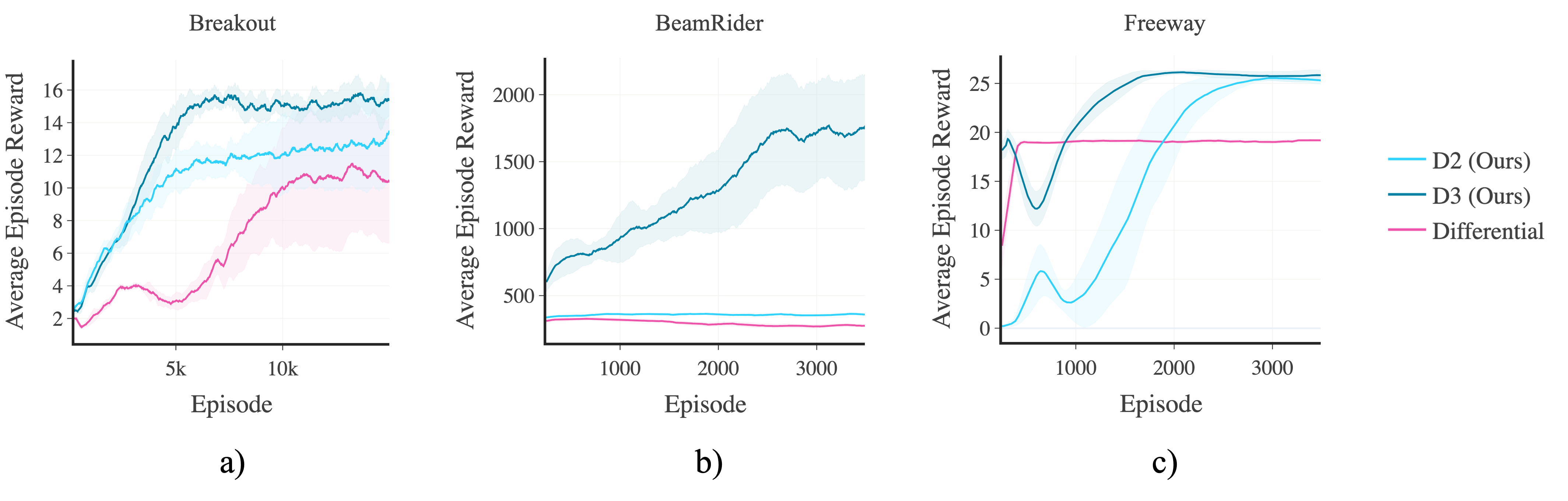}}
\caption{Rolling averages of the total reward per episode when using the D2 and D3 algorithms vs. non-distributional Differential algorithms in the \textbf{a)} \emph{Breakout}, \textbf{b)} \emph{BeamRider}, and \textbf{c)} \emph{Freeway} Atari 2600 environments. A solid line denotes the mean total reward per episode, and the corresponding shaded region denotes a 95\% confidence interval over 8 runs.}
\label{fig_results_3}
\end{figure*}

In terms of empirical results from the first group of experiments, Figure \ref{fig_results_1} shows the agent's (per-step) reward quantile estimates as learning progresses when using the D2 Q-learning algorithm in the \emph{red-pill blue-pill} environment \citep{Rojas2025-bf}. As shown in the figure, the agent's quantile estimates converge to the quantiles of the limiting per-step reward distribution induced by the optimal policy, thereby empirically showing that the D2 algorithm not only converges, but converges to the optimal solution. 

Subsequently, Figure \ref{fig_results_2} shows the rolling average-reward as learning progresses when using the D2 and D3 algorithms vs. a non-distributional Differential algorithm in the red-pill blue-pill environment. As shown in the figure, the D2 and D3 algorithms yield competitive performance when compared to their non-distributional counterpart, while also capturing rich information about the long-run reward distribution (i.e., as shown in Figure \ref{fig_results_1} for the D2 algorithm). 

In terms of empirical results from the second group of experiments, Figure \ref{fig_results_3} shows rolling averages of the total reward per episode as learning progresses when using the D2 and D3 algorithms vs. a non-distributional Differential algorithm in various Atari 2600 games from the \emph{Arcade Learning Environment} \citep{Bellemare2013-ef}. As shown in the figure, the D2 and D3 algorithms consistently outperform the non-distributional baseline, with the D3 algorithm showing better performance than the D2 algorithm.

Our motivation for testing our algorithms in the Arcade Learning Environment (ALE) is twofold. First, ALE is a standard benchmark for evaluating distributional RL algorithms in the discounted setting, thereby making it a potentially-effective tool to use when one aims to measure the relative performance gains of distributional methods over their non-distributional counterparts. Second, there are currently no widely adopted benchmarks for the average-reward setting that match the scale and diversity of ALE. As such, ALE is one of the closest approximations we have for evaluating our algorithms under complex, high-dimensional environments. In this regard, as per Figure \ref{fig_results_3}, we find that our algorithms fare better than the non-distributional baselines in these complex environments that do not strictly satisfy the theoretical assumptions of the average-reward setting.

\section{Discussion, Limitations, and Future Work}
\label{discussion}

In this work, we introduced the first distributional RL algorithms specifically designed for the average-reward setting. In particular, we motivated an appropriate distributional objective for the average-reward setting, as well as derived two quantile-based approaches that can learn and/or optimize this objective. We showed, both theoretically and empirically, that these \emph{Differential Distributional} algorithms are able to learn the average-reward-optimal policy, as well as the corresponding (optimal) distributional objective. 

In terms of empirical performance, we showed that our algorithms are able to achieve competitive and sometimes superior performance when compared to non-distributional differential algorithms, while also capturing rich information about the long-run reward and return distributions. 

Moreover, we note that our choice of distributional objective for the average-reward setting enables the resulting D2 algorithms to be more \emph{scalable} in nature in comparison to distributional RL algorithms in the discounted setting. In particular, the D2 algorithms only require \(m\) parameters to parameterize the distributional objective, where \(m\) is the number of quantiles. This is in contrast to discounted distributional algorithms, which require \(\vert \mathcal{S} \vert \times \vert \mathcal{A} \vert \times m\) parameters to parameterize the discounted distributional objective. As such, the computational complexity of our D2 algorithms remains constant with respect to the state and action-space sizes, thereby making them more scalable.

In terms of limitations (in the context of the average-reward setting), it remains to be seen how different aspects (beyond the mean) of the chosen distributional objective can be optimized. Similarly, it remains to be seen how a categorical approach could be employed instead of a quantile-based approach. Future work should look to address these limitations, as well as tackle the theoretical aspects of the D3 algorithms. Exploring these directions may yield deeper insights, and in the process, continue unlocking the full potential of distributional approaches in the average-reward setting.

\section*{Acknowledgments}
We gratefully acknowledge funding from NSERC Discovery Grant \# RGPIN-2021-02760. We are also grateful for the computing resources provided by the Digital Research Alliance of Canada. We thank the anonymous AAAI reviewers and area chair for their useful feedback and commentary during the review process.

\bibliography{references}

\newpage
\appendix
\onecolumn
\numberwithin{equation}{section}
\numberwithin{figure}{section}
\numberwithin{table}{section}
\numberwithin{theorem}{subsection}
\sloppy 

\begin{center}
    {\huge \textbf{Technical Appendix}}
\end{center}

\section{Differential Distributional RL Algorithms}
\label{appendix_algs}

In this appendix, we provide pseudocode for our Differential Distributional RL algorithms.\\

We note that, as is common practice in the discounted setting, we adopt the smooth L1 loss, \(L^{\lambda}(x)\), in the D2 algorithms that utilize function approximation with a replay buffer (that is, Algorithm \ref{alg_6}):
\begin{equation}
\label{eq_smoothl1}
L^{\lambda}(x) = 
\begin{cases}
\frac{1}{2\lambda}x^2, & \text{if } |x| \leq \lambda \\
|x| - \frac{1}{2}\lambda, & \text{if } |x| > \lambda \text{.}
\end{cases}
\end{equation}

Similarly, we adopt the quantile Huber loss \citep{Dabney2018-ta}, \(h_{\tau}^{\lambda}(x)\), in the D3 algorithms that utilize function approximation (that is, Algorithms \ref{alg_9} and \ref{alg_10}):
\begin{equation}
\label{eq_huber}
h_{\tau}^{\lambda}(x) = 
\begin{cases}
|\tau - \mathds{1}{\{x < 0\}}|\frac{1}{2}x^2, & \text{if } |x| \leq \lambda \\
|\tau - \mathds{1}{\{x < 0\}}|\lambda\left(|x| - \frac{1}{2}\lambda\right), & \text{if } |x| > \lambda \text{.}
\end{cases}
\end{equation}

\begin{algorithm}
   \caption{Differential Distributional (D2) TD-Learning (Tabular)}
   \label{alg_3}
\begin{algorithmic}
    \STATE {\bfseries Input:} the policy \(\pi\) to be used (e.g., \(\varepsilon\)-greedy), number of quantiles, \(m\), and \(\tau\)-locations, such that \(\tau_i = \frac{2i - 1}{2m}\)
    \STATE {\bfseries Algorithm parameters:} step size parameters \(\alpha\), \(\alpha_{_\theta}\)
    \STATE Initialize \(V(s) \: \forall s\) arbitrarily (e.g. to zero)
    \STATE Initialize quantiles \(\theta_1, \theta_2, \ldots, \theta_m\) arbitrarily (e.g. to zero)
    \STATE Obtain initial \(S\)
    \WHILE{still time to train}
        \STATE \(A \leftarrow\) action given by \(\pi\) for \(S\)
        \STATE Take action \(A\), observe \(R, S'\)
        \FOR{$i = 1, 2, \ldots, m$}
            \STATE \(\theta_{i} = \theta_{i} + \alpha_{_\theta}\left( 
 \tau_i - \mathds{1}{ \{  R < \theta_{i} \} } \right)\)
        \ENDFOR
        \STATE \(\bar{R} = \frac{1}{m}\sum_{i=1}^{m}\theta_{i}\)
        \STATE \(\delta = R - \bar{R} + V(S') - V(S)\)
        \STATE \(V(S) = V(S) + \alpha\delta\)
        \STATE \(S = S'\)
    \ENDWHILE
    \STATE return \(\{\theta_i\}_{i=1}^{m}\)
\end{algorithmic}
\end{algorithm}

\begin{algorithm}
   \caption{Differential Distributional (D2) Q-Learning (Tabular)}
   \label{alg_4}
\begin{algorithmic}
    \STATE {\bfseries Input:} the policy \(\pi\) to be used (e.g., \(\varepsilon\)-greedy), number of quantiles, \(m\), and \(\tau\)-locations, such that \(\tau_i = \frac{2i - 1}{2m}\)
    \STATE {\bfseries Algorithm parameters:} step size parameters \(\alpha\), \(\alpha_{_\theta}\)
    \STATE Initialize \(Q(s, a) \: \forall s, a\) arbitrarily (e.g. to zero)
    \STATE Initialize quantiles \(\theta_1, \theta_2, \ldots, \theta_m\) arbitrarily (e.g. to zero)
    \STATE Obtain initial \(S\)
    \WHILE{still time to train}
        \STATE \(A \leftarrow\) action given by \(\pi\) for \(S\)
        \STATE Take action \(A\), observe \(R, S'\)
        \FOR{$i = 1, 2, \ldots, m$}
            \STATE \(\theta_{i} = \theta_{i} + \alpha_{_\theta}\left( 
 \tau_i - \mathds{1}{ \{  R < \theta_{i} \} } \right)\)
        \ENDFOR
        \STATE \(\bar{R} = \frac{1}{m}\sum_{i=1}^{m}\theta_{i}\)
        \STATE \(\delta = R - \bar{R} + \max_a Q(S', a) - Q(S, A)\)
        \STATE \(Q(S, A) = Q(S, A) + \alpha\delta\)
        \STATE \(S = S'\)
    \ENDWHILE
    \STATE return \(\{\theta_i\}_{i=1}^{m}\)
\end{algorithmic}
\end{algorithm}

\begin{algorithm}
   \caption{Differential Distributional (D2) Actor-Critic}
   \label{alg_5}
\begin{algorithmic}
    \STATE {\bfseries Input:} a differentiable state-value function parameterization \(\hat{v}(s, \boldsymbol{w})\), a differentiable policy parameterization \(\pi(a \mid s, \boldsymbol{u})\), number of quantiles, \(m\), and \(\tau\)-locations, such that \(\tau_i = \frac{2i - 1}{2m}\)
    \STATE {\bfseries Algorithm parameters:} step size parameters \(\alpha\), \(\alpha_{\pi}\), \(\alpha_{_\theta}\)
    \STATE Initialize state-value weights \(\boldsymbol{w} \in \mathbb{R}^{d}\) and policy weights \(\boldsymbol{u} \in \mathbb{R}^{d'}\) (e.g. to \(\boldsymbol{0}\))
    \STATE Initialize quantiles \(\theta_1, \theta_2, \ldots, \theta_m\) arbitrarily (e.g. to zero)
    \STATE Obtain initial \(S\)
    \WHILE{still time to train}
        \STATE \(A \sim \pi(\cdot \mid S, \boldsymbol{u})\)
        \STATE Take action \(A\), observe \(R, S'\)
        \FOR{$i = 1, 2, \ldots, m$}
            \STATE \(\theta_{i} = \theta_{i} + \alpha_{_\theta}\left( 
 \tau_i - \mathds{1}{ \{  R < \theta_{i} \} } \right)\)
        \ENDFOR
        \STATE \(\bar{R} = \frac{1}{m}\sum_{i=1}^{m}\theta_{i}\)
        \STATE \(\delta = R - \bar{R} + \hat{v}(S', \boldsymbol{w}) - \hat{v}(S, \boldsymbol{w})\)
        \STATE \(\boldsymbol{w} = \boldsymbol{w} + \alpha\delta\nabla\hat{v}(S, \boldsymbol{w})\)
        \STATE \(\boldsymbol{u} = \boldsymbol{u} + \alpha_{\pi}\delta\nabla \text{ln} \pi(A \mid S, \boldsymbol{u})\)
        \STATE \(S = S'\)
    \ENDWHILE
    \STATE return \(\{\theta_i\}_{i=1}^{m}\)
\end{algorithmic}
\end{algorithm}

\begin{algorithm}
   \caption{Differential Distributional (D2) Q-Learning (Function Approximation with Replay Buffer)}
   \label{alg_6}
\begin{algorithmic}
    \STATE {\bfseries Input:} a differentiable state-action value function parameterization: \(\hat{q}(s, a, \boldsymbol{w})\) (with target network \(\hat{q}_{_T}(s, a, \boldsymbol{w_{_T}})\)), the policy \(\pi\) to be used (e.g., \(\varepsilon\)-greedy), number of quantiles, \(m\), and \(\tau\)-locations, such that \(\tau_i = \frac{2i - 1}{2m}\)
    \STATE {\bfseries Algorithm parameters:} step size parameters \(\{\alpha\), \(\alpha_{_\theta}\}\), smooth L1 loss parameter \(\lambda\)
    \STATE Initialize state-action value weights \(\boldsymbol{w}, \boldsymbol{w_{_T}} \in \mathbb{R}^{d}\) arbitrarily (e.g. to \(\boldsymbol{0}\))    
    \STATE Initialize quantiles \(\theta_1, \theta_2, \ldots, \theta_m\) arbitrarily (e.g. to zero)
    \STATE Obtain initial \(S\)
    \WHILE{still time to train}
        \STATE \(A \leftarrow\) action given by \(\pi\) for \(S\)
        \STATE Take action \(A\), observe \(R, S'\)
        \STATE Store \((S, A, R, S')\) in replay buffer
        \IF {time to update estimates}
        \STATE Sample a minibatch of \(B\) transitions from replay buffer: \(\{(S_b, A_b, R_b, S_b')\}_{b=1}^{B}\)
            \FOR{$i = 1, 2, \ldots, m$}
                    \STATE \(\theta_{i} = \theta_{i} + \alpha_{_\theta}\left( 
         \tau_i - \mathds{1}{ \{  \frac{1}{B}\sum_{b=1}^{B}R_b < \theta_{i} \} } \right)\)
            \ENDFOR
            \STATE \(\bar{R} = \frac{1}{m}\sum_{i=1}^{m}\theta_{i}\)
            \STATE \(\ell = -\frac{1}{B}\sum_{b=1}^{B}\Big[L^{ \lambda}\Big(R_b - \bar{R} + \max_a \hat{q}_{_T}(S_b', a, \boldsymbol{w_{_T}}) - \hat{q}(S_b, A_b, \boldsymbol{w})\Big)\Big]\) (See Equation \eqref{eq_smoothl1})
            \STATE \(\boldsymbol{w} = \boldsymbol{w} + \alpha\frac{\partial \ell}{\partial \boldsymbol{w}}\)
            \STATE Update \(\boldsymbol{w_{_T}}\) as needed (e.g. using Polyak averaging)
        \ENDIF
        \STATE \(S = S'\)
    \ENDWHILE
    \STATE return \(\{\theta_i\}_{i=1}^{m}\)
\end{algorithmic}
\end{algorithm}

\begin{algorithm}
   \caption{Double Differential Distributional (D3) TD-Learning (Tabular)}
   \label{alg_7}
\begin{algorithmic}
    \STATE {\bfseries Input:} the policy \(\pi\) to be used (e.g., \(\varepsilon\)-greedy), number of per-step reward quantiles, \(m\), number of differential return quantiles, \(n\), and \(\tau\)-locations, such that \(\tau_i = \frac{2i - 1}{2m}\) and \(\tau_j = \frac{2j - 1}{2n}\)
    \STATE {\bfseries Algorithm parameters:} step size parameters \(\alpha\), \(\alpha_{_\theta}\)
    \STATE Initialize \(\Omega_j(s) \: \forall s,j\) arbitrarily (e.g. to zero)
    \STATE Initialize per-step reward quantiles \(\theta_1, \theta_2, \ldots, \theta_m\) arbitrarily (e.g. to zero)
    \STATE Obtain initial \(S\)
    \WHILE{still time to train}
        \STATE \(A \leftarrow\) action given by \(\pi\) for \(S\)
        \STATE Take action \(A\), observe \(R, S'\)
        \FOR{$i = 1, 2, \ldots, m$}
            \STATE \(\theta_{i} = \theta_{i} + \alpha_{_\theta}\left( 
 \tau_i - \mathds{1}{ \{  R < \theta_{i} \} } \right)\)
        \ENDFOR
        \STATE \(\bar{R} = \frac{1}{m}\sum_{i=1}^{m}\theta_{i}\)
        \FOR{$j = 1, 2, \ldots, n$}
           \STATE \(\Omega_j(S) = \Omega_j(S) + \alpha\frac{1}{n}\sum_{k=1}^{n}\Big[\tau_j - \mathds{1}{ \{  R - \bar{R} + \Omega_k(S') - \Omega_j(S)  < 0 \} } \Big]\)
        \ENDFOR
        \STATE \(S = S'\)
    \ENDWHILE
    \STATE return \(\{\theta_i\}_{i=1}^{m}\) and \(\{\Omega_j\}_{j=1}^{n}\)
\end{algorithmic}
\end{algorithm}

\begin{algorithm}
   \caption{Double Differential Distributional (D3) Q-Learning (Tabular)}
   \label{alg_8}
\begin{algorithmic}
    \STATE {\bfseries Input:} the policy \(\pi\) to be used (e.g., \(\varepsilon\)-greedy), number of per-step reward quantiles, \(m\), number of differential return quantiles, \(n\), and \(\tau\)-locations, such that \(\tau_i = \frac{2i - 1}{2m}\) and \(\tau_j = \frac{2j - 1}{2n}\)
    \STATE {\bfseries Algorithm parameters:} step size parameters \(\alpha\), \(\alpha_{_\theta}\)
    \STATE Initialize \(\Omega_j(s, a) \: \forall s, a, j\) arbitrarily (e.g. to zero)
    \STATE Initialize per-step reward quantiles \(\theta_1, \theta_2, \ldots, \theta_m\) arbitrarily (e.g. to zero)
    \STATE Obtain initial \(S\)
    \WHILE{still time to train}
        \STATE \(A \leftarrow\) action given by \(\pi\) for \(S\)
        \STATE Take action \(A\), observe \(R, S'\)
        \FOR{$i = 1, 2, \ldots, m$}
            \STATE \(\theta_{i} = \theta_{i} + \alpha_{_\theta}\left( 
 \tau_i - \mathds{1}{ \{  R < \theta_{i} \} } \right)\)
        \ENDFOR
        \STATE \(\bar{R} = \frac{1}{m}\sum_{i=1}^{m}\theta_{i}\)
        \STATE \(a^{*} = \text{argmax}_{a'}{\frac{1}{n}\sum_{j = 1}^{n}\Omega_j(S', a')}\)
        \FOR{$j = 1, 2, \ldots, n$}
           \STATE \(\Omega_j(S, A) = \Omega_j(S, A) + \alpha\frac{1}{n}\sum_{k=1}^{n}\Big[\tau_j - \mathds{1}{ \{  R - \bar{R} + \Omega_k(S',a^{*}) - \Omega_j(S, A)  < 0 \} } \Big]\)
        \ENDFOR
        \STATE \(S = S'\)
    \ENDWHILE
    \STATE return \(\{\theta_i\}_{i=1}^{m}\) and \(\{\Omega_j\}_{j=1}^{n}\)
\end{algorithmic}
\end{algorithm}

\begin{algorithm}
   \caption{Double Differential Distributional (D3) Actor-Critic}
   \label{alg_9}
\begin{algorithmic}
    \STATE {\bfseries Input:} a differentiable differential return quantile parameterization: \(\hat{\Omega}(s, j, \boldsymbol{w})\), a differentiable policy parameterization \(\pi(a \mid s, \boldsymbol{u})\), number of per-step reward quantiles, \(m\), number of differential return quantiles, \(n\), and \(\tau\)-locations, such that \(\tau_i = \frac{2i - 1}{2m}\) and \(\tau_j = \frac{2j - 1}{2n}\)
    \STATE {\bfseries Algorithm parameters:} step size parameters \(\{\alpha\), \(\alpha_{\pi}\), \(\alpha_{_\theta}\}\), quantile Huber loss parameter \(\lambda\)
    \STATE Initialize \(\boldsymbol{w} \in \mathbb{R}^{d}\) and \(\boldsymbol{u} \in \mathbb{R}^{d'}\) (e.g. to \(\boldsymbol{0}\))
    \STATE Initialize per-step reward quantiles \(\theta_1, \theta_2, \ldots, \theta_m\) arbitrarily (e.g. to zero)
    \STATE Obtain initial \(S\)
    \WHILE{still time to train}
        \STATE \(A \sim \pi(\cdot \mid S, \boldsymbol{u})\)
        \STATE Take action \(A\), observe \(R, S'\)
        \FOR{$i = 1, 2, \ldots, m$}
            \STATE \(\theta_{i} = \theta_{i} + \alpha_{_\theta}\left( 
 \tau_i - \mathds{1}{ \{  R < \theta_{i} \} } \right)\)
        \ENDFOR
        \STATE \(\bar{R} = \frac{1}{m}\sum_{i=1}^{m}\theta_{i}\)
        \STATE \(\delta = R - \bar{R} + \mathbb{E}_j[\hat{\Omega}(S', j, \boldsymbol{w})] - \mathbb{E}_j[\hat{\Omega}(S, j, \boldsymbol{w})]\)
        \STATE \(\ell = -\sum_{j=1}^{n}\mathbb{E}_k\Big[h_{\tau_j}^{ \lambda}\Big(R - \bar{R} + \hat{\Omega}(S', k, \boldsymbol{w}) - \hat{\Omega}(S, j, \boldsymbol{w})\Big)\Big]\) (See Equation \eqref{eq_huber})
        \STATE \(\boldsymbol{w} = \boldsymbol{w} + \alpha\frac{\partial \ell}{\partial \boldsymbol{w}}\)
        \STATE \(\boldsymbol{u} = \boldsymbol{u} + \alpha_{\pi}\delta\nabla \text{ln} \pi(A \mid S, \boldsymbol{u})\)
        \STATE \(S = S'\)
    \ENDWHILE
    \STATE return \(\{\theta_i\}_{i=1}^{m}\) and \(\boldsymbol{w}\)
\end{algorithmic}
\end{algorithm}

\begin{algorithm}
   \caption{Double Differential Distributional (D3) Q-Learning (Function Approximation with Replay Buffer)}
   \label{alg_10}
\begin{algorithmic}
    \STATE {\bfseries Input:} a differentiable differential return quantile parameterization: \(\hat{\Omega}(s, a, j, \boldsymbol{w})\) (with target network \(\hat{\Omega}_{_T}(s, a, j, \boldsymbol{w_{_T}})\)), the policy \(\pi\) to be used (e.g., \(\varepsilon\)-greedy), number of per-step reward quantiles, \(m\), number of differential return quantiles, \(n\), and \(\tau\)-locations, such that \(\tau_i = \frac{2i - 1}{2m}\) and \(\tau_j = \frac{2j - 1}{2n}\)
    \STATE {\bfseries Algorithm parameters:} step size parameters \(\{\alpha\), \(\alpha_{_\theta}\}\), quantile Huber loss parameter \(\lambda\)
    \STATE Initialize \(\boldsymbol{w}, \boldsymbol{w_{_T}} \in \mathbb{R}^{d}\) arbitrarily (e.g. to \(\boldsymbol{0}\))    
    \STATE Initialize per-step reward quantiles \(\theta_1, \theta_2, \ldots, \theta_m\) arbitrarily (e.g. to zero)
    \STATE Obtain initial \(S\)
    \WHILE{still time to train}
        \STATE \(A \leftarrow\) action given by \(\pi\) for \(S\)
        \STATE Take action \(A\), observe \(R, S'\)
        \STATE Store \((S, A, R, S')\) in replay buffer
        \IF {time to update estimates}
        \STATE Sample a minibatch of \(B\) transitions from replay buffer: \(\{(S_b, A_b, R_b, S_b')\}_{b=1}^{B}\)
        \FOR{$i = 1, 2, \ldots, m$}
            \STATE \(\theta_{i} = \theta_{i} + \alpha_{_\theta}\left( 
         \tau_i - \mathds{1}{ \{  \frac{1}{B}\sum_{b=1}^{B}R_b < \theta_{i} \} } \right)\)
        \ENDFOR
        \STATE \(\bar{R} = \frac{1}{m}\sum_{i=1}^{m}\theta_{i}\)
        \STATE For each \(b\)-th transition: \(a_b^{*} = \text{argmax}_{a'}{\frac{1}{n}\sum_{j = 1}^{n}\hat{\Omega}(S_b', a', j, \boldsymbol{w})}\)
        \STATE \(\ell = -\frac{1}{B}\sum_{b=1}^{B}\Big[\sum_{j=1}^{n}\mathbb{E}_k\Big[h_{\tau_j}^{ \lambda}\Big(R_b - \bar{R} + \hat{\Omega}_{_T}(S_b', a_b^{*}, k, \boldsymbol{w_{_T}}) - \hat{\Omega}(S_b, A_b, j, \boldsymbol{w})\Big)\Big]\Big]\) (See Equation \eqref{eq_huber})
        \STATE \(\boldsymbol{w} = \boldsymbol{w} + \alpha\frac{\partial \ell}{\partial \boldsymbol{w}}\)
        \STATE Update \(\boldsymbol{w_{_T}}\) as needed (e.g. using Polyak averaging)
        \ENDIF
        \STATE \(S = S'\)
    \ENDWHILE
    \STATE return \(\{\theta_i\}_{i=1}^{m}\) and \(\boldsymbol{w}\)
\end{algorithmic}
\end{algorithm}


\clearpage

\section{Convergence Proofs}
\label{appendix_proofs}

In this appendix, we present the full convergence proofs for the \emph{per-step reward} quantile estimates and value function estimates of our tabular Differential Distributional (D2) algorithms (Algorithms \ref{alg_3} and \ref{alg_4}). Our general strategy is as follows: we first show that the results from \citet{Rowland2024-sg}, which show the almost sure convergence of the \emph{discounted return} quantile estimates of discounted distributional algorithms, can be used to show the convergence of the per-step reward quantile estimates of our algorithms. We then leverage a two-timescales argument to show the almost sure convergence of the value function estimates. In this regard, Section \ref{section_proof_q} contains the proof for the per-step reward quantile estimates, and Section \ref{section_proof_values} contains the proof for the value function estimates.

\subsection{Convergence of the Per-Step Reward Quantile Estimates}
\label{section_proof_q}

This section contains the proof for the per-step reward quantile estimates. Our general strategy is as follows: we show that the results from \citet{Rowland2024-sg}, which show the almost sure convergence of the \emph{discounted return} quantile estimates of discounted distributional algorithms, can be used to show the almost sure convergence of the \emph{per-step reward} quantile estimates of our D2 algorithms. 

We begin by formally defining our distributional objective. In particular, for a given policy, \(\pi\), let
\begin{equation}
\label{eq_r_dist_appendix}
\phi_{\pi}(s) \doteq \lim_{t \rightarrow{} \infty} \mathbb{P}(R_t \mid S_0 = s, A_{0:t-1} \sim \pi)
\end{equation}
denote the limiting per-step reward distribution induced by following policy \(\pi\), where \(R_t \in \mathcal{R}\) denotes the per-step reward. As is standard practice with the regular average-reward objective, we can simplify Equation \eqref{eq_r_dist_appendix} by making certain assumptions about the Markov chain induced by following policy \(\pi\). To this end, we will utilize a \emph{unichain} assumption when doing prediction (learning), because it ensures the existence of a unique limiting distribution of states that is independent of the initial state, such that \(\phi_{\pi}(s) = \phi_{\pi}\). Similarly, we will utilize a \emph{communicating} assumption when doing control (optimization), because it ensures the existence of a unique optimal average-reward, along with a corresponding per-step reward distribution that is independent of the initial state. Both assumptions are formally listed below:
\begin{assumption}[Unichain Assumption for Prediction]\label{assumption_unichain}
The Markov chain induced by the policy is unichain. That is, the induced Markov chain consists of a single recurrent class and a potentially-empty set of transient states.
\end{assumption}
\begin{assumption}[Communicating Assumption for Control] \label{assumption_communicating}
The MDP has a single communicating class. That is, each state in the MDP is accessible from every other state under some deterministic stationary policy.
\end{assumption}
 
Next, we consider the \emph{discounted return} quantile update rule considered in \citet{Rowland2024-sg}:
\begin{equation}
\label{eqn_proofs_1}
\Omega^{\gamma}_{i, t+1} = \Omega^{\gamma}_{i, t} + \alpha_t\frac{1}{n}\sum_{j=1}^{n}\left[ 
 \tau_i - \mathds{1}_{ \{  \psi_t < 0 \} } \right], \; \forall i = 1,  2,  \ldots, n,
\end{equation}
where \(\Omega^{\gamma}_{i, t} \) denotes the estimate of the \(\tau_i\)-quantile of the discounted return distribution at time \(t\), \(\alpha_t\) denotes the step size for the update, \(\psi_t = R_{t+1} + \gamma \Omega^{\gamma}_{j, t}(S_{t+1}) - \Omega^{\gamma}_{i, t}(S_t)\) (\citet{Rowland2024-sg} only considered the prediction case), and \(\{\tau_i \in (0, 1)\}_{i=1}^{n}\), such that \(\tau_i = \frac{2i - 1}{2n}, \; i = 1, 2, \ldots, n\).

To show the convergence of the quantile estimates obtained via the update rule \eqref{eqn_proofs_1}, \citet{Rowland2024-sg} argued that the update rule \eqref{eqn_proofs_1} was a specific instance of the generic update rule:
\begin{equation}
\label{eqn_proofs_2}
\theta_{t+1} = \theta_{t} + \alpha_t\left(g(\theta_{t}) + M_{t} \right),
\end{equation}
where \(M_t\) denotes some noise (or error) term, and \(g(\theta_t) \doteq g_t \in H(\theta_t)\), where \(H\) denotes a \emph{Marchaud map} (we will formally define these terms later on). In particular, \citet{Rowland2024-sg} proved the convergence of the generic update rule \eqref{eqn_proofs_2}, and then showed that the discounted return quantile update rule \eqref{eqn_proofs_1} is a specific instance of the generic update rule \eqref{eqn_proofs_2} and satisfied the assumptions required for convergence.

Let us now consider the \emph{per-step reward} quantile update rule from our D2 Algorithms:
\begin{equation}
\label{eqn_proofs_3}
\theta_{i, t+1} = \theta_{i, t} + \alpha_t\left( 
 \tau_i - \mathds{1}_{ \{  R_{t+1} < \theta_{i,t} \} } \right), \; \forall i = 1,  2,  \ldots, m,
\end{equation}
where \(\theta_{i, t} \) denotes the estimate of the \(\tau_i\)-quantile of the limiting per-step reward distribution \eqref{eq_r_dist_appendix}, \(\alpha_t\) denotes the step size for the update, and \(R_{t+1}\) denotes the per-step reward. We claim that the per-step reward quantile update rule \eqref{eqn_proofs_3} can also be viewed as an instance of the more generic update rule \eqref{eqn_proofs_2}. As such, similar to what was done in \citet{Rowland2024-sg} for the discounted return quantile update rule, we will show that the per-step reward quantile update rule \eqref{eqn_proofs_3} is indeed an instance of the generic update rule \eqref{eqn_proofs_2}, and also satisfies the assumptions required for convergence of the more generic update rule \eqref{eqn_proofs_2}.

We will begin by first summarizing key concepts and relevant results from \citet{Rowland2024-sg}:

\begin{definition}[Differential Inclusion]
\label{def:diff-inclusion}
A differential inclusion is a generalization of an ordinary differential equation (ODE), where the derivative of the unknown function is constrained to lie within a set of possible values, rather than being given by a single-valued function. More formally, given a set-valued map, \( H: \mathbb{R}^m \rightrightarrows \mathbb{R}^m \), the differential inclusion,
\[
\partial_t z_t \in H(z_t),
\]
describes a family of trajectories \( \{z_t\}_{t \geq 0} \) such that the derivative at each time \( t \) belongs to the set \( H(z_t) \).
\end{definition}

\begin{definition}[Definition 11 of \citet{Rowland2024-sg}]
\label{def:di-soln}
Let \( H : \mathbb{R}^m \rightrightarrows \mathbb{R}^m \) be a set-valued map. 
The path \( \{z_t\}_{t \geq 0} \) is a solution to the differential inclusion \( \partial_t z_t \in H(z_t) \) if there exists an integrable function \( g : [0, \infty) \rightarrow \mathbb{R}^{m} \) such that
\begin{align}\label{eq:integral}
    z_t = \int_0^t g_s \mathrm{d}s \nonumber
\end{align}
 for all \( t \geq 0 \), and \( g_t \in H(z_t) \) for almost all \( t \geq 0 \).
\end{definition}

\begin{proposition}[Proposition 12 of \citet{Rowland2024-sg}]
\label{proposition_marchaud}
    Consider a set-valued map, \( H : \mathbb{R}^m \rightrightarrows \mathbb{R}^m \), and suppose that \( H \) is a \emph{Marchaud map}; that is,
    \begin{itemize}
        \item the set \( \{ (z, h) : z \in \mathbb{R}^m, h \in H(z) \} \) is closed,
        \item for all \( z \in \mathbb{R}^m \), \( H(z) \) is non-empty, compact, and convex, and
        \item there exists a constant \( C > 0 \) such that for all \( z \in \mathbb{R}^m \),
        \begin{align*}
            \max_{h \in H(z)} \| h \| \leq C ( 1 + \|z\| ) \, .
        \end{align*}
    \end{itemize}
    Then, the differential inclusion \( \partial_t z_t \in H(z_t) \) has a global solution for any initial condition.
\end{proposition}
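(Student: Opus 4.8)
The plan is to prove the statement by the classical Euler-polygon (Peano-type) construction for differential inclusions, following the approach of Aubin and Cellina. First I would fix an initial condition $z_0 \in \mathbb{R}^m$ and a horizon $T > 0$, and for each $k \in \mathbb{N}$ build an approximate solution $z^k$ on $[0,T]$ by an explicit Euler scheme with step size $1/k$: on each subinterval $[j/k,(j+1)/k)$ select some $h^k_j \in H(z^k_{j/k})$ — possible since each $H(z)$ is non-empty — and let $z^k$ evolve linearly with slope $h^k_j$. The linear growth bound $\max_{h \in H(z)}\|h\| \le C(1+\|z\|)$ together with a discrete Grönwall argument yields a uniform-in-$k$ bound $\sup_{t \in [0,T]}\|z^k_t\| \le M_T$, and hence a uniform Lipschitz constant for the family $\{z^k\}_{k}$.

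Second, Arzelà–Ascoli applied to this equicontinuous, uniformly bounded family produces a subsequence converging uniformly on $[0,T]$ to a Lipschitz limit $z$, and a diagonal argument over $T = 1,2,\dots$ gives a limit defined on all of $[0,\infty)$. Writing $g^k_t$ for the piecewise-constant derivative of $z^k$, these functions are bounded in $L^2([0,T];\mathbb{R}^m)$, so by reflexivity (weak compactness) a further subsequence satisfies $g^k \rightharpoonup g$ weakly in $L^2_{\mathrm{loc}}$; passing to the limit in $z^k_t = z_0 + \int_0^t g^k_s\,\mathrm{d}s$ gives $z_t = z_0 + \int_0^t g_s\,\mathrm{d}s$, so $g$ is the a.e.\ derivative of $z$ and the integral representation required in Definition~\ref{def:di-soln} holds.

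Third, and this is the crux, I would show $g_t \in H(z_t)$ for almost every $t$. The tool is the set-valued "convergence theorem": since $z^k \to z$ uniformly while $g^k \rightharpoonup g$ weakly, Mazur's lemma lets us replace the $g^k$ by suitable finite convex combinations converging strongly (hence, along a subsequence, a.e.) to $g$; because each fiber $H(z)$ is convex and the graph $\{(z,h): h \in H(z)\}$ is closed, these convex combinations remain, asymptotically, in the graph over points converging to $z_t$, so the limit $g_t$ lands in $H(z_t)$ for a.e.\ $t$. The closedness and convexity clauses of the Marchaud condition are precisely what make this limiting passage valid, and I expect this step to be the main obstacle, since it is where the multivaluedness of $H$ genuinely interferes and a naive pointwise-limit argument fails.

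Finally, global existence on all of $[0,\infty)$ follows from the a priori estimate implied by the linear growth bound: setting $u_t = 1 + \|z_t\|$ gives $u_t' \le C u_t$, hence $\|z_t\| \le (1+\|z_0\|)e^{Ct} - 1$ on any interval of existence, so the constructed solution cannot blow up in finite time and therefore extends to $[0,\infty)$. Equivalently, one may cite the standard existence theorem for Marchaud maps (e.g.\ Aubin and Cellina, \emph{Differential Inclusions}); the construction above is essentially its proof.
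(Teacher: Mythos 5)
Your proposal is correct, but it is worth noting that the paper does not prove this statement at all: it is imported verbatim as Proposition 12 of Rowland et al.\ (2024), which in turn rests on the classical existence theory for differential inclusions, so the paper's ``proof'' is by citation. What you have written is essentially the textbook argument behind that cited result (the Euler-polygon construction of Aubin and Cellina): explicit Euler selections from the non-empty values of \(H\), a Gr\"onwall-type a priori bound from the linear growth condition, Arzel\`a--Ascoli plus a diagonal extraction to get a locally Lipschitz limit, weak compactness of the piecewise-constant derivatives, and the set-valued convergence theorem (Mazur's lemma combined with convexity of the fibers and closedness of the graph) to conclude \(g_t \in H(z_t)\) almost everywhere, with the linear growth bound ruling out finite-time blow-up. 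That route is sound and correctly identifies the Mazur/closed-graph step as the place where all three Marchaud conditions are genuinely used. Two details to tighten if you were to write it out in full: first, your Euler derivative satisfies \(g^k_t \in H(z^k_{j/k})\) at the grid node rather than at \(z^k_t\), so you need the ``approximate membership'' form of the convergence theorem, i.e.\ that the evaluation points converge uniformly to \(z_t\) because the mesh vanishes and the \(z^k\) are uniformly Lipschitz (you gesture at this, but it is the hypothesis that must be checked); second, the final estimate \(u_t' \le C u_t\) should be run in integral (Gr\"onwall) form, since the limit \(z\) is only Lipschitz and hence differentiable merely almost everywhere. Neither point is a gap in substance, and your argument buys something the paper does not attempt: a self-contained justification of the existence result rather than an appeal to the literature.
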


\begin{definition}[Definition 13 of \citet{Rowland2024-sg}]
Consider a Marchaud map, \( H : \mathbb{R}^m \rightrightarrows \mathbb{R}^m \), and a subset \( \Lambda \subseteq \mathbb{R}^m \). A continuous function \( L : \mathbb{R}^m \rightarrow [0, \infty) \) is said to be a \emph{Lyapunov function} for the differential inclusion \( \partial_t z_t \in H(z_t) \) and subset \( \Lambda \) if for any solution \( \{z_t\}_{t \geq 0} \) of the differential inclusion and \( 0 \leq s < t \), we have \( L(z_t) < L(z_s) \) for all \( z_s \not\in \Lambda \) and \( L(z) = 0 \) for all \( z \in \Lambda \).
\end{definition}

The above definitions and proposition pertain to the theory of \emph{differential inclusions}. In essence, the notion of differential inclusions is invoked to account for the fact that the CDF of the limiting per-step reward distribution may not be strictly increasing. That is, there may be discontinuities in the CDF, which means that using a standard ODE framework for the analysis is insufficient as it does not guarantee the existence of a solution. Differential inclusions, on the other hand, provide a more general framework in which the dynamics are defined in terms of a set-valued map, \( H \), allowing for the solution to take any value within a specified range at any point, including at a given discontinuity. As established in Proposition \ref{proposition_marchaud}, if the set-valued map, \( H \), satisfies the conditions of a \emph{Marchaud map}, then the existence of global solutions is guaranteed, even if the underlying system exhibits discontinuous behavior.

Having stated the necessary background as it relates to differential inclusions, we are now ready to state the main convergence result from \citet{Rowland2024-sg}:
\begin{theorem}[Theorem 14 of \citet{Rowland2024-sg}]
\label{thm:benaim-result}
Consider a Marchaud map, \( H : \mathbb{R}^m \rightrightarrows \mathbb{R}^m \), and the corresponding differential inclusion, \( \partial_t z_t \in H(z_t) \). Suppose there exists a Lyapunov function, \( L \), for this differential inclusion and a subset, \( \Lambda \subseteq \mathbb{R}^m \). Suppose also that we have a sequence \( \{\theta_t\}_{t \geq 0} \) satisfying
\begin{align*}
    \theta_{t+1} = \theta_t + \alpha_t (g(\theta_t) + M_t) \, ,
\end{align*}
where,
\begin{itemize}
    \item \( \{\alpha_t\}_{t= 0}^\infty \) satisfy the conditions \( \sum_{t=0}^\infty \alpha_t = \infty \), \( \alpha_t = o(1/\log(t)) \);
    \item \( g(\theta_t) \in H(\theta_t) \) for all \( t \geq 0 \);
    \item \( \{M_t\}_{t=0}^\infty \) is a bounded martingale difference sequence with respect to the natural filtration generated by \( \{\theta_t\}_{t=0}^\infty \). That is, there is an absolute constant \( C \) such that \( \|M_t\|_\infty < C \) almost surely, and \( \mathbb{E}[M_t \mid \theta_{0},\ldots,\theta_{t} ] = 0 \).
\end{itemize}
If further \( \{\theta_t\}_{t=0}^\infty \) is bounded almost surely (that is, \( \sup_{t \geq 0}\| \theta_t \|_\infty < \infty \) almost surely), then \( \theta_t \rightarrow \Lambda \) almost surely.
\end{theorem}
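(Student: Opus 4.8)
The plan is to prove this as an instance of the stochastic approximation theory for differential inclusions of Benaïm, Hofbauer, and Sorin, following the same route taken in \citet{Rowland2024-sg}. The strategy has three stages: (i) interpolate the discrete iterates $\{\theta_t\}$ into a continuous-time path; (ii) show that this path is a bounded \emph{asymptotic pseudotrajectory} of the set-valued flow generated by $H$; and (iii) combine the general fact that limit sets of such pseudotrajectories are internally chain transitive with the Lyapunov function $L$ to conclude that the limit set lies in $\Lambda$.

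First I would introduce the algorithmic time scale $t_n \doteq \sum_{k=0}^{n-1} \alpha_k$, which diverges by the hypothesis $\sum_{t} \alpha_t = \infty$, and define $\bar{\theta} \colon [0, \infty) \to \mathbb{R}^m$ to be the continuous, piecewise-affine interpolation with $\bar{\theta}(t_n) = \theta_n$. The goal of stage (ii) is to show that for every $T > 0$, the quantity $\sup_{0 \le h \le T} \operatorname{dist}\big(\bar{\theta}(t+h),\, \mathcal{S}_{\bar{\theta}(t)}(h)\big)$ tends to $0$ as $t \to \infty$, where $\mathcal{S}_{z}$ is the set of solutions of $\partial_s z_s \in H(z_s)$ started at $z$. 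Boundedness of the path is immediate from the assumed almost-sure boundedness $\sup_{t} \|\theta_t\|_\infty < \infty$, and the Marchaud property of $H$ (Proposition \ref{proposition_marchaud}) — specifically the linear growth bound $\max_{h \in H(z)}\|h\| \le C(1 + \|z\|)$ — then yields equicontinuity of $\bar\theta$, while closedness and convexity of the graph of $H$ let one pass to the limit and identify any limiting path as a genuine solution of the inclusion.

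The crux of the argument, and the step I expect to be the main obstacle, is controlling the noise. Writing the recursion as $\theta_{t+1} = \theta_t + \alpha_t(g(\theta_t) + M_t)$ with $g(\theta_t) \in H(\theta_t)$, the accumulated perturbation over an algorithmic-time window $[t, t+T]$ is $\sum_{k \,:\, t_k \in [t, t+T]} \alpha_k M_k$. Since $\{M_t\}$ is a bounded martingale difference sequence, the Azuma–Hoeffding inequality gives an exponential tail for this weighted sum; a Borel–Cantelli argument over a countable discretization of $[0, \infty)$ shows that these windowed noise terms vanish almost surely, \emph{provided} the series of tail probabilities is summable, and this is exactly what the condition $\alpha_t = o(1/\log t)$ delivers (this hypothesis is there precisely to make the exponential concentration overcome the $\log$-many windows). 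With the noise windows controlled and the drift lying in $H$, $\bar\theta$ is shown to satisfy the perturbed-solution definition of an asymptotic pseudotrajectory.

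Finally, I would invoke the structural results for such pseudotrajectories: the limit set $L(\bar\theta) = \bigcap_{s \ge 0} \overline{\{\bar\theta(u) : u \ge s\}}$ of a bounded asymptotic pseudotrajectory is a nonempty, compact, connected, internally chain transitive set for the flow induced by $H$. The existence of a Lyapunov function $L$ for this flow and $\Lambda$ — continuous, strictly decreasing along solutions outside $\Lambda$, and identically zero on $\Lambda$, so that $L(\Lambda) = \{0\}$ has empty interior — forces any internally chain transitive set to be contained in $\Lambda$ (the standard Lyapunov/ICT lemma of Benaïm–Hofbauer–Sorin). Hence $L(\bar\theta) \subseteq \Lambda$, and since $\{\theta_t\}$ is a subsequence of $\bar\theta$ this is exactly the claim that $\theta_t \to \Lambda$ almost surely. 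Everything except the noise-control estimate is essentially mechanical once the Marchaud conditions are verified.
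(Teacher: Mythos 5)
This statement is not proved in the paper at all: it is imported verbatim as Theorem 14 of \citet{Rowland2024-sg}, so the paper's ``proof'' is simply the citation, and your reconstruction follows precisely the route by which that cited result is actually established --- the Bena\"im--Hofbauer--Sorin asymptotic-pseudotrajectory framework for differential inclusions, with the Marchaud conditions guaranteeing existence and compactness of solution sets, Azuma--Hoeffding plus Borel--Cantelli noise control made summable by \( \alpha_t = o(1/\log t) \), and the Lyapunov/internally-chain-transitive lemma (using that \( L(\Lambda) = \{0\} \) has empty interior) forcing the limit set into \( \Lambda \). Your proposal is correct and takes essentially the same approach as the source the paper relies on, so there is nothing to flag.
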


In short, Theorem~\ref{thm:benaim-result} establishes the convergence of the generic update rule \eqref{eqn_proofs_2}. As such, as per Theorem~\ref{thm:benaim-result}, in order to establish that the \emph{per-step reward} quantile estimates generated by the update rule \eqref{eqn_proofs_3} converge to the quantiles of the limiting per-step reward distribution, it suffices to verify that the following conditions are satisfied:

\begin{itemize}
    \item that the resulting set-valued map, \(H\), is a Marchaud map;
    \item that the update rule \eqref{eqn_proofs_3} can be written in terms of the more generic update rule \eqref{eqn_proofs_2}, such that \( \{M_t\}_{t=0}^\infty \) is a bounded martingale difference sequence;
    \item that \( \{\theta_t\}_{t=0}^\infty \) is bounded almost surely; and
    \item that a Lyapunov function, \(L\), exists for the associated differential inclusion, \(\partial_t z_t \in H(z_t)\) and subset \(\Lambda\).
\end{itemize}

We will now show that the above properties hold for the sequence of per-step reward quantile estimates obtained via the update rule \eqref{eqn_proofs_3}:

\begin{lemma}[Marchaud Map for the Per-Step Reward Quantile Estimates]
\label{marchaud_lemma}
Consider the per-step reward quantile update rule \eqref{eqn_proofs_3}. Let \( F \) denote the CDF of the limiting per-step reward distribution, and let \( \tau_i \in (0, 1) \) denote the \(i^\text{th}\) target quantile level, with the corresponding quantile value given by \( q_i = F^{-1}(\tau_i) \). Define the set-valued map, \(H: \mathbb{R}^m \rightrightarrows \mathbb{R}^m\), as follows: For any \(\theta \in \mathbb{R}^m\), let
\begin{equation}
H(\theta)_i = \tau_i - F^*(\theta_i), \nonumber
\end{equation}
where,
\begin{equation}
F^*(\theta_i) = 
\begin{cases}
\{F(\theta_i)\}, & \text{if } F \text{ is continuous at } \theta_i \\
[F(\theta_i^-), F(\theta_i)], & \text{if } F \text{ has a jump (or discontinuity) at } \theta_i.
\end{cases}\nonumber
\end{equation}

Then \(H\) is a Marchaud map. That is, it satisfies the following properties:
\begin{enumerate}
    \item The set \(\{(\theta, h) : \theta \in \mathbb{R}^m, h \in H(\theta)\}\) is closed;
    \item For all \(\theta \in \mathbb{R}^m\), \(H(\theta)\) is non-empty, compact, and convex;
    \item There exists a constant \(C > 0\) such that for all \(\theta \in \mathbb{R}^m\),
    \begin{equation}
        \max_{h \in H(\theta)} \|h\| \leq C(1 + \|\theta\|).\nonumber
    \end{equation}
\end{enumerate}
\end{lemma}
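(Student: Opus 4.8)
The plan is to verify each of the three Marchaud-map properties directly from the definition of $H$, treating the coordinates of $\theta$ separately since $H$ acts coordinatewise: $H(\theta)_i = \tau_i - F^*(\theta_i)$, where $F^*$ is the "filled-in" version of the (monotone, right-continuous) CDF $F$ that replaces each jump by the closed interval between the left and right limits. The key structural fact I will use throughout is that $F$ is nondecreasing with values in $[0,1]$, so $F^*(\theta_i) \subseteq [0,1]$ for every $\theta_i$, and the completed graph of $F^*$ (the "filled-in graph") is a closed subset of $\mathbb{R} \times [0,1]$ — this is the standard observation that filling in the jumps of a monotone function yields a closed curve.

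\textbf{Step 1 (closedness of the graph).} I would show that $\{(\theta, h) : h \in H(\theta)\}$ is closed by reducing to one coordinate: it suffices that $\{(\theta_i, h_i) : h_i \in \tau_i - F^*(\theta_i)\}$ is closed in $\mathbb{R}^2$, and since $h_i \mapsto \tau_i - h_i$ is a homeomorphism this is equivalent to the closedness of the filled-in graph of $F$. I would argue this directly: take a convergent sequence $(\theta_i^{(k)}, y^{(k)}) \to (\theta_i, y)$ with $y^{(k)} \in F^*(\theta_i^{(k)})$, split into the cases $\theta_i^{(k)} \to \theta_i$ from above, from below, or eventually equal, and use monotonicity of $F$ plus the definitions of $F(\theta_i^-)$ and $F(\theta_i) = F(\theta_i^+)$ to sandwich the limit $y$ inside $[F(\theta_i^-), F(\theta_i)]$. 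Closedness of a finite product of closed sets then gives the claim for $H$ on $\mathbb{R}^m$.

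\textbf{Step 2 (nonempty, compact, convex values).} For each $\theta$ and each coordinate $i$: $F^*(\theta_i)$ is either a singleton $\{F(\theta_i)\}$ or a closed bounded interval $[F(\theta_i^-), F(\theta_i)]$, hence nonempty, compact, and convex as a subset of $\mathbb{R}$; translating by $\tau_i$ preserves all three properties. Therefore $H(\theta) = \prod_{i=1}^m \big(\tau_i - F^*(\theta_i)\big)$ is a finite product of nonempty compact convex sets, hence nonempty, compact, and convex in $\mathbb{R}^m$.

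\textbf{Step 3 (linear growth bound).} This is the easiest point and in fact $H$ is uniformly bounded: since $F$ takes values in $[0,1]$, we have $F^*(\theta_i) \subseteq [0,1]$, so $|h_i| = |\tau_i - F^*(\theta_i)| \le \max(\tau_i, 1 - \tau_i) \le 1$ for any $h \in H(\theta)$, giving $\|h\| \le \sqrt{m}$ (or $\|h\|_\infty \le 1$) independent of $\theta$. Hence $\max_{h \in H(\theta)} \|h\| \le \sqrt{m} \le \sqrt{m}\,(1 + \|\theta\|)$, so the growth condition holds with $C = \sqrt{m}$ (any norm works after adjusting the constant). I would close by noting this uniform boundedness will also be convenient later when checking that the iterates $\{\theta_t\}$ stay bounded.

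The main obstacle is Step 1: one must be careful that filling in the jumps is exactly what is needed to make the graph closed — the raw graph of $F$ (using only $\{F(\theta_i)\}$ at every point) is \emph{not} closed at a jump, which is precisely why the set-valued $F^*$ is introduced. The case analysis on how $\theta_i^{(k)}$ approaches $\theta_i$, combined with correctly invoking right-continuity of $F$ (so $F(\theta_i^+) = F(\theta_i)$) and the definition of the left limit $F(\theta_i^-) = \sup_{s < \theta_i} F(s)$, is the part that requires genuine attention; the remaining two properties are routine consequences of monotonicity and boundedness of a CDF.
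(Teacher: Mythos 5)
Your proposal is correct and follows essentially the same route as the paper's proof: verify the three Marchaud conditions coordinatewise, using that $F^*(\theta_i)$ is a singleton or closed interval inside $[0,1]$ (giving nonempty, compact, convex values and the uniform bound $\|h\|_\infty \le 1$, hence the linear growth condition with a constant), and that the filled-in graph of the monotone CDF is closed. If anything, your Step 1 is slightly more careful than the paper's, which at a jump point simply invokes upper hemicontinuity of $F^*$ rather than carrying out the above/below/equal case analysis via monotonicity and right-continuity that you outline.
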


\begin{proof}
We verify each condition of a Marchaud map:\\

\textbf{Condition 1:} To show that the graph of \(H\) is closed, consider a convergent sequence \((\theta^k, h^k) \rightarrow (\theta, h)\) where \(h^k \in H(\theta^k)\) for all \(k\). For each component, \(i\), we have \(h^k_i = \tau_i - p^k_i\) for some \(p^k_i \in F^*(\theta^k_i)\). Since \(F^*\) takes values in \([0, 1]\), the sequence \(\{p^k_i\}\) is bounded and thus has a convergent subsequence with limit \(p_i\). If \(F\) is continuous at \(\theta_i\), then by definition of continuity, \(F(\theta^k_i) \rightarrow F(\theta_i)\), and since \(p^k_i = F(\theta^k_i)\) for large enough \(k\), it follows that \(p_i = F(\theta_i) \in F^*(\theta_i)\). Conversely, if \(F\) has a jump at \(\theta_i\), then by the upper hemicontinuity of \(F^*\), we have \(p_i \in [F(\theta_i^-), F(\theta_i)] = F^*(\theta_i)\). Therefore, \(h_i = \tau_i - p_i\) with \(p_i \in F^*(\theta_i)\), which implies \(h \in H(\theta)\), confirming that the graph of \(H\) is closed.\\

\textbf{Condition 2:} For any \(\theta \in \mathbb{R}^m\), we have:
\begin{itemize}
    \item \textit{Non-empty:} For each component, \(i\), the set \(F^*(\theta_i)\) is either the singleton \(\{F(\theta_i)\}\) when \(F\) is continuous at \(\theta_i\), or the closed interval \([F(\theta_i^-), F(\theta_i)]\) if \(F\) has a jump at \(\theta_i\). In both cases, \(F^*(\theta_i)\) is clearly non-empty, which implies that \(H(\theta)_i = \tau_i - F^*(\theta_i)\) is also non-empty.

    \item \textit{Compact:} Each \(F^*(\theta_i)\) is either a singleton \(\{F(\theta_i)\}\) or a closed interval \([F(\theta_i^-), F(\theta_i)]\), both of which are compact subsets of \(\mathbb{R}\). Since translation (via subtraction by the constant \(\tau_i\)) is a continuous map on \(\mathbb{R}\), the image \(\tau_i - F^*(\theta_i)\) is also compact.

    \item \textit{Convex:} Each \(F^*(\theta_i)\) is convex, since both singletons and closed intervals are convex subsets of \(\mathbb{R}\). The affine transformation \(x \mapsto \tau_i - x\) preserves convexity, hence we can conclude that \(H(\theta)_i = \tau_i - F^*(\theta_i)\) is also convex.

\end{itemize}

Since these properties hold component-wise, \(H(\theta)\) as a whole is non-empty, compact, and convex.\\

\textbf{Condition 3:} For any \(\theta \in \mathbb{R}^m\) and \(h \in H(\theta)\), each component satisfies \(h_i = \tau_i - p_i\) for some \(p_i \in F^*(\theta_i)\). Since \(\tau_i \in (0, 1)\) and \(p_i \in [0, 1]\) (as \(F^*\) takes values in \([0, 1]\)), we have \(|h_i| \leq 1\) for all \(i\). Therefore, when we consider the infinity norm, we have that \(\|h\|_\infty \leq 1\) for all \(h \in H(\theta)\). Hence, we may take \(C = 1\). This gives:
\begin{equation}
    \max_{h \in H(\theta)} \|h\|_\infty \leq 1 \leq C(1 + \|\theta\|_\infty),
\end{equation}
which satisfies the desired linear growth condition.\\

As such, given that the three conditions are satisfied, we can conclude that \(H\) is a Marchaud map.
\end{proof}

\vspace{11pt}

\begin{lemma}[Bounded Martingale Difference Sequence]
\label{martingale_lemma}
Consider the per-step reward quantile update rule \eqref{eqn_proofs_3}. This update rule can be rewritten in terms of the generic update rule \eqref{eqn_proofs_2} as follows:
\begin{equation}
\theta_{i, t+1} = \theta_{i, t} + \alpha_t (g(\theta_{i,t}) + M_{i,t}), \; \forall i=1, 2, \ldots, m, \nonumber
\end{equation}
where,
\begin{equation}
g(\theta_{i,t}) = \mathbb{E}[\tau_i - \mathds{1}_{\{R_{t+1} < \theta_{i,t}\}} \mid \mathcal{F}_{i,t} \doteq \sigma(\theta_{i, 0}, R_1, \theta_{i, 1}, R_2, \ldots,\theta_{i, t})],
\end{equation}
and
\begin{equation}
\label{eqn_proofs_4}
M_{i,t} = \tau_i -\mathds{1}_{\{R_{t+1} < \theta_{i,t}\}} - \mathbb{E}[\tau_i - \mathds{1}_{\{R_{t+1} < \theta_{i,t}\}} \mid \mathcal{F}_{i,t}],
\end{equation}

such that the sequence \(\{M_t\}_{t=0}^\infty\), where \(M_t \doteq (M_{1,t}, M_{2,t}, \ldots, M_{m,t})\), is a bounded martingale difference sequence with respect to the filtration \(\{\mathcal{F}_{ t}\}_{t=0}^{\infty}\), where \(\mathcal{F}_t \doteq (\mathcal{F}_{1,t}, \mathcal{F}_{2,t}, \ldots, \mathcal{F}_{m,t})\).
\end{lemma}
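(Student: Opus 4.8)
The plan is to treat the stated decomposition as essentially definitional and then verify the two substantive properties — the martingale-difference identity and the uniform bound — by a short direct computation, leaning on the fact that the ``content'' of the lemma is entirely in the choice of $g$ and $M$.

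First I would fix $i$ and $t$ and write $X_{i,t} \doteq \tau_i - \mathds{1}_{\{R_t < \theta_{i,t}\}}$ for the increment driving the update \eqref{eqn_proofs_3}, and observe the trivial algebraic splitting $X_{i,t} = \mathbb{E}[X_{i,t} \mid \theta_{i,0},\ldots,\theta_{i,t}] + \big(X_{i,t} - \mathbb{E}[X_{i,t} \mid \theta_{i,0},\ldots,\theta_{i,t}]\big)$, which is exactly $g(\theta_{i,t}) + M_{i,t}$ with $g$ and $M_{i,t}$ as in the statement; hence \eqref{eqn_proofs_3} is an instance of \eqref{eqn_proofs_2}. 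At this step it is worth recording that each iterate $\theta_{i,t}$ is a deterministic function of $\theta_{i,t-1}$ and the realized reward $R_t$, so $\theta_{i,t}$ is measurable with respect to $\mathcal{F}_t \doteq \sigma(\theta_{i,0},\ldots,\theta_{i,t})$ and the conditional expectation defining $g(\theta_{i,t})$ is well posed; moreover $g(\theta_{i,t}) = \tau_i - \mathbb{P}(R_t < \theta_{i,t} \mid \mathcal{F}_t)$, which is the identification that a later step will need in order to match $g(\theta_t)$ against the Marchaud map $H$ of Lemma~\ref{marchaud_lemma}.

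Second, for the martingale-difference property I would apply the tower rule: since $g(\theta_{i,t}) = \mathbb{E}[X_{i,t}\mid\mathcal{F}_t]$ is $\mathcal{F}_t$-measurable, $\mathbb{E}[M_{i,t}\mid\mathcal{F}_t] = \mathbb{E}[X_{i,t}\mid\mathcal{F}_t] - g(\theta_{i,t}) = 0$, so $\{M_{i,t}\}_{t\ge0}$ is a martingale difference sequence with respect to $\{\mathcal{F}_t\}$ for every $i$, and therefore so is the vector sequence $\{M_t\}_{t\ge0}$ with $M_t = (M_{1,t},\ldots,M_{m,t})$. For boundedness I would use $\tau_i\in(0,1)$ and $\mathds{1}_{\{R_t<\theta_{i,t}\}}\in\{0,1\}$ to get $X_{i,t}\in(-1,1)$, hence also $\mathbb{E}[X_{i,t}\mid\mathcal{F}_t]\in[-1,1]$, so $|M_{i,t}|\le 2$ surely; taking the maximum over $i=1,\ldots,m$ gives $\|M_t\|_\infty \le 2$ almost surely, so the constant $C=2$ works in the sense required by Theorem~\ref{thm:benaim-result}, and integrability of $M_{i,t}$ is immediate from the same bound.

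I do not expect a genuine obstacle in this lemma: the decomposition is by construction, the zero-mean property is a one-line tower-rule argument, and the bound is crude and holds irrespective of the (possibly time- and state-dependent) conditional law of $R_t$. The only point that deserves care is bookkeeping: confirming the natural filtration generated by the iterates is rich enough for the conditional expectations to be defined, and keeping the identity $g(\theta_{i,t}) = \tau_i - \mathbb{P}(R_t < \theta_{i,t}\mid\mathcal{F}_t)$ on hand, since reconciling this $g$ with membership in $H(\theta_t)$ (which ultimately relies on mixing of the induced chain under Assumptions~\ref{assumption_unichain}--\ref{assumption_communicating}) is the part that is genuinely substantive and is handled in the subsequent Lyapunov-function step rather than here.
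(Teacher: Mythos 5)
Your proposal is correct and follows essentially the same route as the paper: the decomposition is taken as definitional, the zero-mean property follows from the tower rule, and boundedness follows from $\tau_i\in(0,1)$ and the indicator taking values in $\{0,1\}$ (the paper sharpens the constant to $C=1$ by noting the $\tau_i$ terms cancel in $M_{i,t}$, whereas your cruder $C=2$ is equally sufficient for the lemma).
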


\begin{proof}
We need to show that the sequence \(\{M_t\}_{t \geq 0}\) satisfies two key properties:\\

\textit{1) Martingale Difference:} \(\mathbb{E}[M_{i,t} \mid \mathcal{F}_{i,t}] = 0 \; \forall i = 1, 2, \ldots, m,\) for all \(t \geq 0\).\\

\textit{2) Boundedness:} There exists a constant \(C_i > 0\), such that \(|M_{i,t}| < C_i \; \forall i = 1, 2, \ldots, m,\) almost surely for all \(t \geq 0\).

\vspace{0.5cm}

Let us consider an arbitrary \(i^\text{th}\) component:\\

\textbf{Martingale Difference:}

Consider Equation \eqref{eqn_proofs_4}. Taking the conditional expectation with respect to \(\mathcal{F}_{i,t}\) we obtain:
\begin{align}
\mathbb{E}[M_{i, t} \mid \mathcal{F}_{i, t}] &= \mathbb{E}[\tau_i -\mathds{1}_{\{R_{t+1} < \theta_{i,t}\}} - \mathbb{E}[\tau_i - \mathds{1}_{\{R_{t+1} < \theta_{i,t}\}} \mid \mathcal{F}_{i, t}] \mid \mathcal{F}_{i, t}] \\
&= -\mathbb{E}[\mathds{1}_{\{R_{t+1} < \theta_{i,t}\}} \mid \mathcal{F}_{i, t}] + \mathbb{E}[\mathbb{E}[\mathds{1}_{\{R_{t+1} < \theta_{i,t}\}} \mid \mathcal{F}_{i, t}] \mid \mathcal{F}_{i, t}].
\end{align}

By the tower property of conditional expectations, we have that:
\begin{align}
\mathbb{E}[\mathbb{E}[\mathds{1}_{\{R_{t+1} < \theta_{i,t}\}} \mid \mathcal{F}_{i, t}] \mid \mathcal{F}_{i, t}] &= \mathbb{E}[\mathds{1}_{\{R_{t+1} < \theta_{i,t}\}} \mid \mathcal{F}_{i, t}].
\end{align}

As such,
\begin{align}
\mathbb{E}[M_{i, t} \mid \mathcal{F}_{i, t}] &= -\mathbb{E}[\mathds{1}_{\{R_{t+1} < \theta_{i,t}\}} \mid \mathcal{F}_{i, t}] + \mathbb{E}[\mathds{1}_{\{R_{t+1} < \theta_{i,t}\}} \mid \mathcal{F}_{i, t}] \\
&= 0,
\end{align}
which confirms that \(\{M_{i, t}\}_{t \geq 0}\) is indeed a martingale difference sequence with respect to the filtration \(\{\mathcal{F}_{i, t}\}_{t \geq 0}\).\\

\textbf{Boundedness:}

Consider Equation \eqref{eqn_proofs_4}. We can see that the indicator function \(\mathds{1}_{\{R_{t+1} < \theta_{i,t}\}}\) can only take values in \(\{0, 1\}\). Similarly, its conditional expectation \(\mathbb{E}[\mathds{1}_{\{R_{t+1} < \theta_{i,t}\}} \mid \mathcal{F}_{i, t}]\) is a probability, and hence must lie in the interval \([0, 1]\). As such, for all realizations and for all \(t \geq 0\):
\begin{align}
|M_{i, t}| &= |\tau_i -\mathds{1}_{\{R_{t+1} < \theta_{i,t}\}} - \mathbb{E}[\tau_i - \mathds{1}_{\{R_{t+1} < \theta_{i,t}\}} \mid \mathcal{F}_{i, t}]| \\
&\leq \max\{|\mathbb{E}[\mathds{1}_{\{R_{t+1} < \theta_{i,t}\}} \mid \mathcal{F}_{i, t}] - 0|, |\mathbb{E}[\mathds{1}_{\{R_{t+1} < \theta_{i,t}\}} \mid \mathcal{F}_{i, t}] - 1|\} \\
&\leq \max\{1, 1\} \\
&= 1
\end{align}

Taking \(C_i = 1\), we have that \(|M_{i, t}| \leq C_i\) almost surely for all \(t \geq 0\).

\vspace{0.5cm}
Finally, since we chose \(i\) arbitrarily, we can extend the above results for all \(i = 1, 2, \ldots, m\). Consequently, we can conclude that the sequence \(\{M_t\}_{t=0}^\infty\) is a bounded martingale difference sequence with respect to the filtration \(\{\mathcal{F}_{ t}\}_{t=0}^{\infty}\).
\end{proof}

\vspace{11pt}

\begin{lemma}[Boundedness of the Per-Step Reward Quantile Estimates]
\label{boundedness_lemma}
Consider the per-step reward quantile update rule \eqref{eqn_proofs_3}:
\begin{equation}
\theta_{i, t+1} = \theta_{i, t} + \alpha_t\left(\tau_i - \mathds{1}\left\{R_{t+1} < \theta_{i,t}\right\}\right), \; \forall i = 1, 2, \ldots, m, \nonumber
\end{equation}
where \( \theta_{i, t} \) denotes the estimate of the \( \tau_i \)-quantile of the limiting per-step reward distribution, \( \alpha_t \) denotes the step size for the update, and \( R_{t+1} \) denotes the per-step reward.

If:
\begin{enumerate}
\item the per-step rewards \( \{R_{t+1}\}_{t \geq 0} \) are bounded (i.e., there exist some finite constants \( R_\text{min}, R_\text{max} \in \mathbb{R} \) such that \( R_\text{min} \leq R_{t+1} \leq R_\text{max} \) for all \( t \geq 0 \)), and
\item the step sizes satisfy \(\alpha_t > 0, \sup_{t \geq 0} \alpha_t < \infty\), \( \sum_{t=0}^{\infty} \alpha_t = \infty\), and \(\alpha_t = o(1/\log(t))\),
\end{enumerate}
then the sequence \( \{\theta_{t}\}_{t=0}^{\infty} \) is bounded almost surely. That is, \( \sup_{t \geq 0} |\theta_{i,t}| < \infty \) almost surely for each \( i = 1, 2, \ldots, m \), such that, \( \sup_{t \geq 0}\| \theta_t \|_\infty < \infty\).
\end{lemma}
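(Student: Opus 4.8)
The plan is to prove something slightly stronger than the stated claim: for each coordinate $i$ the sequence $\{\theta_{i,t}\}_{t\geq 0}$ stays inside a fixed bounded interval \emph{deterministically} — along every sample path on which the rewards obey the stated bounds — so that no probabilistic machinery, and in particular none of the step-size summability conditions, is actually needed; the only hypothesis used is $\bar\alpha \doteq \sup_{t\geq 0}\alpha_t < \infty$. Once the coordinatewise bounds are in hand, taking the maximum over the finitely many $i\in\{1,\dots,m\}$ immediately gives $\sup_{t\geq 0}\|\theta_t\|_\infty < \infty$, and since $\mathcal{R}$ is finite (so condition~1 in fact holds automatically with $R_{\min}=\min\mathcal{R}$ and $R_{\max}=\max\mathcal{R}$), the conclusion holds almost surely, indeed surely.

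Fix $i$. First I would record the one-step displacement bound: because $\tau_i\in(0,1)$ and $\mathds{1}\{R_t < \theta_{i,t}\}\in\{0,1\}$, the increment satisfies $|\theta_{i,t+1}-\theta_{i,t}| = \alpha_t|\tau_i - \mathds{1}\{R_t < \theta_{i,t}\}| \leq \alpha_t \leq \bar\alpha$. Next I would identify the restoring drift in the two tail regimes created by the bounded rewards: if $\theta_{i,t} > R_{\max}$ then $R_t \leq R_{\max} < \theta_{i,t}$, so the indicator equals $1$ and the increment is $\alpha_t(\tau_i - 1) < 0$; symmetrically, if $\theta_{i,t} < R_{\min}$ then $R_t \geq R_{\min} > \theta_{i,t}$, so the indicator equals $0$ and the increment is $\alpha_t\tau_i > 0$. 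In words: above $R_{\max}$ the iterate strictly decreases, and below $R_{\min}$ it strictly increases.

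Combining these two observations in a short induction on $t$ yields the invariant
\[
\min\{\theta_{i,0},\, R_{\min} - \bar\alpha\} \;\leq\; \theta_{i,t} \;\leq\; \max\{\theta_{i,0},\, R_{\max} + \bar\alpha\} \qquad \text{for all } t \geq 0:
\]
for the upper bound, if $\theta_{i,t} \leq R_{\max}$ then $\theta_{i,t+1} \leq \theta_{i,t} + \bar\alpha \leq R_{\max} + \bar\alpha$, while if $\theta_{i,t} > R_{\max}$ the drift regime gives $\theta_{i,t+1} < \theta_{i,t}$, which already lies within the bound by the inductive hypothesis; the lower bound is symmetric. Hence $\sup_{t\geq 0}|\theta_{i,t}| \leq \max\{|\theta_{i,0}|,\ |R_{\min}| + \bar\alpha,\ |R_{\max}| + \bar\alpha\} < \infty$, and taking the maximum of these $m$ bounds gives $\sup_{t\geq 0}\|\theta_t\|_\infty < \infty$. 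I do not expect a genuine obstacle here: the whole content is noticing the restoring-drift structure of the quantile-regression increment against bounded data, and the only care required is the $\pm\bar\alpha$ slack in the invariant, which is exactly what absorbs a single step that overshoots from inside $[R_{\min}, R_{\max}]$ to just outside it.
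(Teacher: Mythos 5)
Your proof is correct, but it takes a genuinely different route from the paper. The paper argues by contradiction: it assumes a sample path with \(\theta_{i,t}\to+\infty\) (resp.\ \(-\infty\)), notes that above \(R_{\max}\) every increment equals \(-\alpha_t(1-\tau_i)\), and then invokes \(\sum_t \alpha_t = \infty\) to force the iterate back across \(R_{\max}\), contradicting the divergence assumption; boundedness is then concluded. You instead prove a deterministic forward-invariance statement, \(\min\{\theta_{i,0}, R_{\min}-\bar\alpha\} \le \theta_{i,t} \le \max\{\theta_{i,0}, R_{\max}+\bar\alpha\}\), by induction, using only \(\bar\alpha \doteq \sup_t\alpha_t<\infty\) together with the restoring sign of the increment outside \([R_{\min},R_{\max}]\). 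Your version is both more elementary and stronger: it gives an explicit quantitative bound, it holds surely (not merely almost surely), and it dispenses with \(\sum_t\alpha_t=\infty\) and \(\alpha_t=o(1/\log t)\) entirely. It also quietly closes a small logical gap in the contradiction route: ruling out \(\theta_{i,t}\to\pm\infty\) is, on its face, weaker than boundedness (a priori one must still exclude unbounded oscillation, e.g.\ \(\limsup_t\theta_{i,t}=+\infty\) with finite \(\liminf\)), and it is precisely your observation that the iterate can overshoot \([R_{\min},R_{\max}]\) by at most one step of size \(\bar\alpha\) that excludes this. What the paper's phrasing buys is a presentation aligned with the divergence-to-infinity style of argument used elsewhere in its stochastic-approximation framework; what yours buys is a shorter, self-contained, pathwise bound with weaker hypotheses.
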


\begin{proof}
First, we note that since \( \tau_i \in (0,1) \), and the indicator function \( \mathds{1}\{R_{t+1} < \theta_{i,t}\} \in \{0,1\} \), the absolute value of the difference \( \left|\tau_i - \mathds{1}\{R_{t+1} < \theta_{i,t}\}\right| \) is always at most 1. As such, the magnitude of each per-step update is bounded by the step size. In other words, we have that:
\begin{equation}
|\theta_{i,t+1} - \theta_{i,t}| = \alpha_t \left|\tau_i - \mathds{1}\left\{R_{t+1} < \theta_{i,t}\right\}\right| \leq \alpha_t.
\end{equation}

Moreover, given the step size assumptions, it follows that \( \sup_{t \geq 0} \alpha_t < \infty \), and hence, that the per-step update magnitudes are uniformly bounded over time.

\vspace{11pt}

We will next show that a sequence \( \{\theta_{i,t}\}_{t \geq 0} \) for an arbitrary \(i^\text{th}\) component cannot diverge to either \( +\infty \) or \( -\infty \):\\

\textbf{Upper Bound:}

We proceed by contradiction. Suppose there exists a sample path where \( \theta_{i,t} \rightarrow +\infty \) as \( t \rightarrow \infty \). Then there must exist some time \( T \) such that \( \theta_{i,t} > R_\text{max} \) for all \( t \geq T \).  

Consider the update rule for this sample path. For all \( t \geq T \), we have that:
\begin{subequations}
\begin{align}
\theta_{i,t+1} - \theta_{i,t} &= \alpha_t(\tau_i - \mathds{1}\left\{R_{t+1} < \theta_{i,t}\right\}) \\
&= \alpha_t(\tau_i - 1) \\
&= -\alpha_t(1-\tau_i).
\end{align}
\end{subequations}

Since \( \tau_i \in (0,1) \), we have that \( -\alpha_t(1-\tau_i) < 0 \), which means that the change in \( \theta_{i,t} \) is negative.

Now consider the cumulative effect over time \( T \) through some time \( T+k-1 \):
\begin{subequations}
\begin{align}
\theta_{i,T+k} - \theta_{i,T} &= \sum_{t=T}^{T+k-1} (\theta_{i,t+1} - \theta_{i,t}) \\
&= \sum_{t=T}^{T+k-1} -\alpha_t(1-\tau_i) \\
\label{eqn_proofs_5}
&= -(1-\tau_i)\sum_{t=T}^{T+k-1} \alpha_t.
\end{align}
\end{subequations}

Looking at Equation \eqref{eqn_proofs_5}, we know by assumption that \( \sum_{t=0}^{\infty} \alpha_t = \infty \). As such, there must exist some  \( k^{*} \) such that eventually, the sample path will eventually return to and cross \(R_{\text{max}}\). In other words, we will eventually have:
\begin{equation}
\theta_{i,T} - (1-\tau_i)\sum_{t=T}^{T+k^{*}-1}\alpha_t \leq R_\text{max},
\end{equation}
and thus, that
\begin{equation}
\theta_{i,T+k^{*}} = \theta_{i,T} - (1-\tau_i)\sum_{t=T}^{T+k^{*}-1}\alpha_t \leq R_\text{max},
\end{equation}

which contradicts the assumption that \( \theta_{i,t} > R_\text{max} \) for all \( t \geq T \).\\

\textbf{Lower Bound:}

We proceed by contradiction. Suppose there exists a sample path where \( \theta_{i,t} \rightarrow -\infty \) as \( t \rightarrow \infty \). Then there must exist some time \( T \) such that \( \theta_{i,t} < R_\text{min} \) for all \( t \geq T \).

Consider the update rule for this sample path. For all \( t \geq T \), we have that:
\begin{subequations}
\begin{align}
\theta_{i,t+1} - \theta_{i,t} &= \alpha_t(\tau_i - \mathds{1}\left\{R_{t+1} < \theta_{i,t}\right\}) \\
&= \alpha_t \tau_i.
\end{align}
\end{subequations}

Since \( \tau_i > 0 \), the change in \( \theta_{i,t} \) is positive.

Now consider the cumulative effect over time \( T \) through some time \( T+k-1 \):
\begin{equation}
\label{eqn_proofs_6}
\theta_{i,T+k} - \theta_{i,T} = \tau_i \sum_{t=T}^{T+k-1} \alpha_t.
\end{equation}

Looking at Equation \eqref{eqn_proofs_6}, we know by assumption that \( \sum_{t=0}^{\infty} \alpha_t = \infty \). As such, there must exist some  \( k^{*} \) such that eventually, the sample path will eventually return to and cross \(R_{\text{min}}\). In other words, we will eventually have:
\begin{equation}
\theta_{i,T} + \tau_i \sum_{t=T}^{T+k^{*}-1} \alpha_t \geq R_\text{min},
\end{equation}
and thus, that
\begin{equation}
\theta_{i,T+k^{*}} =\theta_{i,T} + \tau_i \sum_{t=T}^{T+k^{*}-1} \alpha_t \geq R_\text{min},
\end{equation}

which contradicts the assumption \( \theta_{i,t} < R_\text{min} \) for all \( t \geq T \).\\

As such, we have shown that \( \theta_{i,t} \) cannot diverge to \( +\infty \) or \( -\infty \) along any sample path. Hence, the sequence \( \{\theta_{i,t}\} \) must remain within a bounded interval, almost surely. Moreover, because we chose \(i\) arbitrarily, we can extend the above results for all \(i = 1, 2, \ldots, m\). Consequently, we can conclude that \( \sup_{t \geq 0}\| \theta_t \|_\infty < \infty\).
\end{proof}

\begin{lemma}[Existence of a Lyapunov Function for the Per-Step Reward Quantile Estimates]
\label{lyapunov_lemma}
Consider the differential inclusion \(\partial_t z_t \in H(z_t)\) associated with the per-step reward quantile update rule \eqref{eqn_proofs_3}, where \(H\) is the Marchaud map defined in Lemma \ref{marchaud_lemma}. Let \(F\) be the CDF of the limiting per-step reward distribution, and let \(q_i = F^{-1}(\tau_i)\) denote the true \(\tau_i\)-quantile of this distribution. Define the subset \(\Lambda = \{ \theta \in \mathbb{R}^m : \theta_i \in q_i \;\; \text{for all } i = 1, 2, \dots, m \}\), where:
\[
q_i =
\begin{cases}
F^{-1}(\tau_i), & \text{if } F^{-1}(\tau_i) \text{ is unique}, \\
[a_i, b_i], & \text{if } F \text{ has a flat region at level } \tau_i,
\end{cases}
\]
with \(a_i = \inf\{x : F(x) \geq \tau_i\}\) and \(b_i = \sup\{x : F(x) \leq \tau_i\}\). The function \(L: \mathbb{R}^m \rightarrow [0, \infty)\) defined by:
\[
L(\theta) = \sum_{i=1}^{m} \int_{q_i}^{\theta_i} (F(s) - \tau_i)\, ds
\]
is a Lyapunov function for the differential inclusion \(\partial_t z_t \in H(z_t)\) and the subset \(\Lambda\).
\end{lemma}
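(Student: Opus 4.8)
The plan is to verify, in order, the three defining properties of a Lyapunov function: that $L$ is continuous with values in $[0,\infty)$; that $L$ vanishes on $\Lambda$; and that $L$ strictly decreases along every solution of $\partial_t z_t \in H(z_t)$ that starts outside $\Lambda$. Write $L(\theta)=\sum_{i=1}^m \ell_i(\theta_i)$ with $\ell_i(\theta_i)\doteq\int_{q_i}^{\theta_i}(F(s)-\tau_i)\,ds$. Since $\mathcal{R}$ is finite, $F$ is a bounded, nondecreasing step function, so each $\ell_i$ is merely an ordinary integral of a bounded function (no Riemann--Stieltjes apparatus is actually needed): it is $1$-Lipschitz, convex, and $C^1$ off the finite jump set $D_i$ of $F$, with derivative $F(\theta_i)-\tau_i$ there. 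Continuity of $L$ is then immediate. Monotonicity of $F$ gives $F(s)-\tau_i<0$ for $s<a_i$ and $F(s)-\tau_i>0$ for $s>b_i$, while $F\equiv\tau_i$ on $[a_i,b_i]$ (so the choice of base point $q_i$ inside that interval does not affect $\ell_i$); hence $\ell_i\ge 0$, with $\ell_i(\theta_i)=0$ precisely when $\theta_i\in[a_i,b_i]=q_i$. Therefore $L\ge 0$ and $L^{-1}(0)=\Lambda$, settling the first two properties.

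For the decrease property I would fix a solution $\{z_t\}_{t\ge 0}$. By Definition~\ref{def:di-soln} there is an integrable $g$ with $z_t=\int_0^t g_s\,ds$ and $g_t\in H(z_t)$ for a.e.\ $t$; Condition~3 of Lemma~\ref{marchaud_lemma} gives $\|g_t\|_\infty\le 1$, so each coordinate $z_{\cdot,i}$ is $1$-Lipschitz, hence absolutely continuous, and so is the composition $\phi_i(t)\doteq\ell_i(z_{t,i})$. The core step is the a.e.\ identity $\phi_i'(u)=-(F(z_{u,i})-\tau_i)^2$. On the set $\{u:z_{u,i}\notin D_i\}$, the map $\ell_i$ is differentiable at $z_{u,i}$ and $F^*(z_{u,i})=\{F(z_{u,i})\}$, so $g_{u,i}=\tau_i-F(z_{u,i})$ and the chain rule for absolutely continuous functions gives $\phi_i'(u)=(F(z_{u,i})-\tau_i)(\tau_i-F(z_{u,i}))=-(F(z_{u,i})-\tau_i)^2\le 0$. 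On $\{u:z_{u,i}\in D_i\}=\bigcup_{x_0\in D_i} z_{\cdot,i}^{-1}(\{x_0\})$, the function $z_{\cdot,i}$ is constant on each level set, so $\dot z_{u,i}=0$ for a.e.\ such $u$, and the $1$-Lipschitz bound on $\ell_i$ forces $|\phi_i'(u)|\le|\dot z_{u,i}|=0$; here the set-valued ambiguity of $F^*$ is harmless because it is multiplied by a vanishing velocity. Thus $\phi_i'\le 0$ a.e., each $\phi_i$ is non-increasing, and $L(z_t)=\sum_i\phi_i(t)\le L(z_s)$ for $s<t$.

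It remains to upgrade this to strict decrease when $z_s\notin\Lambda$. Some coordinate $j$ then has $z_{s,j}\notin[a_j,b_j]$, i.e.\ either $F(z_{s,j})<\tau_j$ or $F(z_{s,j}^-)>\tau_j$. Using right-continuity of $F$ at $z_{s,j}$ (respectively the left limit) together with monotonicity, I can choose $\epsilon,\delta>0$ so that $|F(z)-\tau_j|\ge\delta$ for every $z$ within distance $\epsilon$ of $z_{s,j}$, and moreover no such $z$ is a $\tau_j$-quantile, so the sign of $F-\tau_j$ forces $|g_{t,j}|\ge\delta$, hence $|\dot z_{t,j}|\ge\delta>0$, whenever $z_{t,j}$ lies in that interval. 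Since $|\dot z_{t,j}|\le 1$, the $j$-th coordinate stays within distance $\epsilon$ of $z_{s,j}$ throughout $[s,s+\epsilon]$; because $\dot z_{t,j}\neq 0$ there, $z_{\cdot,j}$ spends zero time at points of $D_j$, so by the previous paragraph $\phi_j'(u)=-(F(z_{u,j})-\tau_j)^2\le-\delta^2$ for a.e.\ $u\in[s,s+\epsilon]$. Hence $\phi_j(t)-\phi_j(s)\le-(t-s)\delta^2<0$ for all $t\in(s,s+\epsilon]$, and since the other $\phi_i$ are non-increasing, $L(z_t)<L(z_s)$ on $(s,s+\epsilon]$; for $t>s+\epsilon$ we get $L(z_t)\le L(z_{s+\epsilon})<L(z_s)$ by monotonicity. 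This proves all three properties, so $L$ is a Lyapunov function for $\partial_t z_t\in H(z_t)$ and $\Lambda$.

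I expect the main obstacle to be exactly the a.e.\ derivative identity $\phi_i'(u)=-(F(z_{u,i})-\tau_i)^2$, which must be established while simultaneously coping with (i) the genuinely set-valued behaviour of $H$ at discontinuities of $F$, (ii) the non-differentiability of $\ell_i$ at precisely those points, and (iii) solutions of the differential inclusion being only absolutely continuous rather than $C^1$. The resolution sketched above — split according to whether $z_{\cdot,i}$ currently sits at a jump point of $F$, use that the velocity vanishes a.e.\ on each level set (so the ambiguous $F^*$ values are always annihilated), and apply the ordinary chain rule elsewhere, where $F^*$ is single-valued — is the technical heart; the remaining reasoning is monotonicity bookkeeping.
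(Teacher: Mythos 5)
Your proposal is correct, and it verifies the same three properties (continuity, vanishing on \(\Lambda\), strict decrease along solutions) that the paper's proof checks; the interesting differences lie in how you execute the decrease property. The paper applies the chain rule directly, writing \(\tfrac{d}{dt}L(z_t)=\sum_i (F(z_{i,t})-\tau_i)\,g_{i,t}\) with \(g_{i,t}=\tau_i-p_{i,t}\), does a sign analysis in the two cases \(z_{i,t}>\sup q_i\) and \(z_{i,t}<\inf q_i\), and integrates; this glosses over the fact that each \(\ell_i(\theta_i)=\int_{q_i}^{\theta_i}(F(s)-\tau_i)\,ds\) is only piecewise \(C^1\), with kinks exactly at the jump points of \(F\) where \(H\) is genuinely set-valued, and the final integration step tacitly assumes the trajectory stays outside \(\Lambda\) on \([s,t]\). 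Your coordinate-wise treatment repairs both points: you observe that \(z_{\cdot,i}\) is \(1\)-Lipschitz (via Condition~3 of Lemma~\ref{marchaud_lemma}), hence \(\phi_i=\ell_i\circ z_{\cdot,i}\) is absolutely continuous, apply the ordinary chain rule only where \(F^*\) is single-valued, and kill the ambiguous contribution at jump points by the level-set/vanishing-velocity argument (the Lipschitz bound \(|\phi_i'|\le|\dot z_{\cdot,i}|\) makes the multivalued term harmless); and your quantitative \(\epsilon\)--\(\delta\) argument, giving \(\phi_j'\le-\delta^2\) a.e.\ on an initial interval \([s,s+\epsilon]\), yields strict decrease even if the trajectory subsequently enters \(\Lambda\). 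You also explicitly verify nonnegativity of \(L\) and that \(L^{-1}(0)=\Lambda\) (the paper only checks \(L=0\) on \(\Lambda\)), and you correctly note that with a finite reward set \(F\) is a step function, so the Riemann--Stieltjes framing is unnecessary. Two harmless imprecisions: \(F\equiv\tau_i\) on \([a_i,b_i]\) can fail at the endpoint \(b_i\) (and at \(a_i=b_i\) in the jump-across case), but since this is a measure-zero set it does not affect \(\ell_i\); and your \(\epsilon,\delta\) selection should be phrased as first shrinking \(\epsilon\) so that the \(\epsilon\)-ball around \(z_{s,j}\) avoids \([a_j,b_j]\), then taking \(\delta\) from monotonicity of \(F\) at the nearer endpoint of that ball, which is what you evidently intend.
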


\begin{proof}
To establish that \(L\) is a Lyapunov function, we must show that:
\begin{enumerate}
\item \(L\) is continuous,
\item \(L(\theta) = 0\) for all \(\theta \in \Lambda\),
\item \(L(\theta) > 0\) for all \(\theta \notin \Lambda\), and
\item For any solution \(\{z_t\}_{t \geq 0}\) of the differential inclusion and \(0 \leq s < t\), we have \(L(z_t) < L(z_s)\) for all \(z_s \not\in \Lambda\).
\end{enumerate}

\textbf{Continuity of \(L\):}  

First, let us consider the integral in the definition of \(L\). For each component, \(i\), we can write:
\begin{equation}
\int_{q_i}^{\theta_i} (F(s) - \tau_i)\, ds \doteq
\begin{cases}
\int_{q_i}^{\theta_i} (F(s) - \tau_i)\, ds, & \text{if } \theta_i \geq q_i, \\
- \int_{\theta_i}^{q_i} (F(s) - \tau_i)\, ds, & \text{if } \theta_i < q_i.
\end{cases}
\end{equation}
Since \(F\) is a CDF, it is right-continuous by definition. Thus, for any discontinuity in \(F\), we interpret \(F(s)\) as its right-hand limit. Moreover, the function \(F(s) - \tau_i\) is bounded (since \(\tau_i \in (0,1)\) and \(F(s) \in [0,1]\)) and can only have countably many jump discontinuities (since \(F\) is non-decreasing), thereby making the integral a well-defined Riemann integral for any choice of \(\theta_i\) and \(q_i\).

Now, for the continuity of \(L\), consider a sequence \(\{\theta^k\}\) converging to \(\theta\). For each component, \(i\), we claim that:
\begin{equation}
\lim_{k \rightarrow \infty} \int_{q_i}^{\theta_i^k} (F(s) - \tau_i)\, ds = \int_{q_i}^{\theta_i} (F(s) - \tau_i)\, ds.
\end{equation}
This convergence holds because, as discussed in the above paragraph, the integrand \(s \mapsto F(s) - \tau_i\) is bounded and has at most countably many discontinuities. More specifically, it is well-known that for such functions, the integral is continuous with respect to its upper limit of integration. That is, if \(\theta_i^k \to \theta_i\), then the corresponding integrals converge. Since this holds for each \(i\)-th component, and \(L(\theta)\) is a finite sum over these integrals, it follows that \(L(\theta^k) \to L(\theta)\). Therefore, \(L\) is continuous on \(\mathbb{R}^m\).\\

\textbf{\(L(\theta) = 0\) for all \(\theta \in \Lambda\):} 

Let \(\theta \in \Lambda\). By definition, this means that \(\theta_i \in q_i\) for all \(i\). There are two cases that we need to consider:

\textbf{Case 1}: If \(F^{-1}(\tau_i)\) is unique (i.e., \(q_i\) is a singleton), then \(\theta_i = q_i\), which implies:
\begin{equation}
\int_{q_i}^{\theta_i} (F(s) - \tau_i)\, ds = \int_{q_i}^{q_i} (F(s) - \tau_i)\, ds = 0.
\end{equation}

\textbf{Case 2}: If \(F\) has a flat region at level \(\tau_i\) (i.e., \(q_i = [a_i, b_i]\)), then \(\theta_i \in [a_i, b_i]\), and for all \(s \in [a_i, b_i]\), we have \(F(s) = \tau_i\), so:
\begin{equation}
\int_{q_i}^{\theta_i} (F(s) - \tau_i)\, ds = \int_{q_i}^{\theta_i} 0\, ds = 0.
\end{equation}

Thus, each component contributes zero to the sum, and \(L(\theta) = 0\) for all \(\theta \in \Lambda\).\\

\textbf{\(L(\theta) > 0\) for all \(\theta \notin \Lambda\):} 

Let \(\theta \notin \Lambda\). By definition, this means that there exists at least one \(i \in \{1, 2, \ldots, m\}\) such that \(\theta_i \notin q_i\). There are four cases that we need to consider:

\textbf{Case 1}: If \(F^{-1}(\tau_i)\) is unique (i.e., \(q_i\) is a singleton) and \(\theta_i > q_i\), we have that \(F(\theta_i) - \tau_i > 0\), such that:
\begin{equation}
\int_{q_i}^{\theta_i} (F(s) - \tau_i)\, ds > 0.
\end{equation}

\textbf{Case 2}: If \(F^{-1}(\tau_i)\) is unique (i.e., \(q_i\) is a singleton) and \(\theta_i < q_i\), we have that \(F(\theta_i) - \tau_i < 0\), such that:
\begin{equation}
\int_{q_i}^{\theta_i} (F(s) - \tau_i)\, ds = -\int_{\theta_i}^{q_i} (F(s) - \tau_i)\, ds > 0.
\end{equation}

\textbf{Case 3}: If \(F\) has a flat region at level \(\tau_i\) (i.e., \(q_i = [a_i, b_i]\)) and \(\theta_i > b_i\), we have that \(F(\theta_i) - \tau_i > 0\), such that:
\begin{equation}
\int_{b_i}^{\theta_i} (F(s) - \tau_i)\, ds > 0.
\end{equation}

\textbf{Case 4}: If \(F\) has a flat region at level \(\tau_i\) (i.e., \(q_i = [a_i, b_i]\)) and \(\theta_i < a_i\), we have that \(F(\theta_i) - \tau_i < 0\), such that:
\begin{equation}
\int_{a_i}^{\theta_i} (F(s) - \tau_i)\, ds = - \int_{\theta_i}^{a_i} (F(s) - \tau_i)\, ds > 0.
\end{equation}

Thus, we have at least one component contributing a positive value to the sum, with the remaining components contributing non-negative values, such that \(L(\theta) > 0\) for all \(\theta \notin \Lambda\).\\

\textbf{\(L(z_t) < L(z_s)\) for all \(z_s \not\in \Lambda\) and \(0 \leq s < t\):}  

We will show the desired result by analyzing the derivative of \(L\). To this end, let \(\{z_t\}_{t \geq 0}\) be a solution to \(\partial_t z_t \in H(z_t)\), and let \(0 \leq s < t\). From Definition \ref{def:di-soln}, we know that there exists an integrable function \(g: [0, \infty) \rightarrow \mathbb{R}^m\), such that:
\begin{equation}
z_t = \int_0^t g_u\, du,
\end{equation}
where \(g_u \in H(z_u)\) for almost all \(u \geq 0\). Accordingly, we have that \(\frac{d}{dt} z_t = g_t\).\\

Now consider the derivative of \(L\). Since \(L\) is non-smooth (i.e., it is not differentiable at the jump discontinuities of \(F\)), we need to utilize the Clarke generalized gradient of \(L\). In particular, we can apply the chain rule to derive the derivative of \(L\) as follows:
\begin{equation} \frac{d}{dt}L(z_t) \in \left\{ \sum_{i=1}^m v_{i,t} \cdot \frac{dz_{i, t}}{dt} = \sum_{i=1}^m v_{i,t} \cdot g_{i, t} \; ; \; v_{i,t} \in \frac{\partial L(z_t)}{\partial z_{i, t}} \right\}, \end{equation}
where \(\frac{\partial L(z_t)}{\partial z_{i, t}} \doteq [F(z_{i,t}^-) - \tau_i, F(z_{i,t}) - \tau_i]\) is the \(i\)-th component of the Clarke generalized gradient of \(L\), and, by the definition of the Marchaud map, \(H(z_t)\) (see Lemma \ref{marchaud_lemma}), \(g_{i, t} \in [\tau_i - F(z_{i,t}), \tau_i - F(z_{i,t}^-)]\) is the \(i\)-th component of \(g_t\).\\

Let us now analyze the product \(v_{i,t} \cdot g_{i,t}\) for any \(z_{i,t} \notin q_i\). There are two cases that we need to consider:\\

\textbf{Case 1:} \(z_{i,t} > \sup q_i = b_i\): Since \(z_{i,t} > b_i\)  and \(F\) is non-decreasing, we know that any \(z_{i,t} > b_i\) must satisfy \(F(z_{i,t}) \ge F(z_{i,t}^-) > \tau_i\). Therefore, any gradient \(v_i \in [F(z_{i,t}^-) - \tau_i, F(z_{i,t}) - \tau_i]\) must be strictly positive. Similarly, any \(g_i \in [\tau_i - F(z_{i,t}), \tau_i - F(z_{i,t}^-)]\) 
must be strictly negative. Hence, the product \(v_{i,t} \cdot g_{i,t}\) must be strictly negative.\\

\textbf{Case 2:} \(z_{i,t} < \inf q_i = a_i\): Since \(z_{i,t} < a_i\)  and \(F\) is non-decreasing, we know that any \(z_{i,t} < a_i\) must satisfy \(F(z_{i,t}^-) \le F(z_{i,t}) < \tau_i\). Therefore, any gradient \(v_i \in [F(z_{i,t}^-) - \tau_i, F(z_{i,t}) - \tau_i]\) must be strictly negative. Similarly, any \(g_i \in [\tau_i - F(z_{i,t}), \tau_i - F(z_{i,t}^-)]\) must be strictly positive. Hence, the product \(v_{i,t} \cdot g_{i,t}\) must be strictly negative.\\

Thus, for any \(v_{i,t}\) and \(g_{i,t}\), the product \(v_{i,t} \cdot g_{i,t}\) is strictly negative.\\ 

Now, let us define \(D(z_t) = \{i \in \{1, \ldots, m\}; z_{i, t} \not\in q_i\}\). If \(z_t \not\in \Lambda\), then \(D(z_t)\) is non-empty, and, by the above analysis, we have that:
\begin{equation}
\frac{d}{dt} L(z_t) = \sum_{i=1}^m v_{i, t} \cdot g_{i,t} = \sum_{i \in D(z_t)} v_{i, t} \cdot g_{i,t} < 0 \quad \forall \, v_{i,t}, g_{i,t}.
\end{equation}

Therefore, if \(z_t \not\in \Lambda\), we can conclude that \(\frac{d}{dt} L(z_t) < 0\).\\

As such, integrating over \([s, t]\) gives:
\[
L(z_t) - L(z_s) = \int_s^t \frac{d}{d\sigma} L(z_\sigma) \, d\sigma < 0,
\]
which implies that \(L(z_t) < L(z_s)\) for all \(z_s \not\in \Lambda\) and \(0 \leq s < t\).\\

As such, we have now verified the four conditions, and can therefore conclude that \(L\) is a Lyapunov function for \(\partial_t z_t \in H(z_t)\) and \(\Lambda\).
\end{proof}

With the above Lemmas, we have now established that the necessary conditions for convergence hold for the per-step reward quantile update rule \eqref{eqn_proofs_3}. In particular, we are now in a position to apply Theorem \ref{thm:benaim-result}, which allows us to conclude that the quantile estimates generated by the per-step reward update rule \eqref{eqn_proofs_3} converge, almost surely, to the quantiles of the limiting per-step reward distribution. We formalize this argument as Theorem \ref{thm_q_convergence} below:\\

\begin{theorem}[Convergence of the Per-Step Reward Quantile Estimates]
\label{thm_q_convergence}
Consider the per-step reward quantile update rule \eqref{eqn_proofs_3}:
\[
\theta_{i, t+1} = \theta_{i, t} + \alpha_t\left(\tau_i - \mathds{1}\{R_{t+1} < \theta_{i,t}\}\right), \quad \forall i = 1, 2, \ldots, m,
\]
where \(\theta_{i, t}\) is the estimate of the \(\tau_i\)-quantile of the limiting per-step reward distribution, \(\alpha_t\) is the step size, and \(R_{t+1} \in \mathcal{R}\) is the per-step reward.

If:
\begin{enumerate}
\item the per-step rewards \(\{R_{t+1}\}_{t \geq 0}\) are bounded,
\item the step sizes \(\{\alpha_t\}_{t \geq 0}\) satisfy \(\alpha_t > 0 \), \(\sup_{t \geq 0} \alpha_t < \infty\), \(\sum_{t=0}^{\infty} \alpha_t = \infty\), and \(\alpha_t = o(1/\log(t))\), and 
\item the distribution of per-step rewards converges to a unique limiting distribution with CDF, \(F\) (e.g. via Assumptions \ref{assumption_unichain} or \ref{assumption_communicating}),
\end{enumerate}

then, the sequence \(\{\theta_{i,t}\}_{t \geq 0}\) converges, almost surely, to the set \(q_i, \; \forall i = 1, 2, \ldots, m\), where:
\[
q_i =
\begin{cases}
F^{-1}(\tau_i), & \text{if } F^{-1}(\tau_i) \text{ is unique}, \\
[a_i, b_i], & \text{if } F \text{ has a flat region at level } \tau_i,
\end{cases}
\]
with \(a_i = \inf\{x : F(x) \geq \tau_i\}\) and \(b_i = \sup\{x : F(x) \leq \tau_i\}\), as \(t \rightarrow \infty\).
\end{theorem}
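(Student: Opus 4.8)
The plan is to recognize the per-step reward quantile recursion \eqref{eqn_proofs_3} as a special case of the generic stochastic-approximation scheme \eqref{eqn_proofs_2} governed by the differential inclusion $\partial_t z_t \in H(z_t)$, and then apply Theorem~\ref{thm:benaim-result} directly, with the limit set taken to be $\Lambda = \{\theta \in \mathbb{R}^m : \theta_i \in q_i \ \forall i\}$. Every structural hypothesis needed is already in place: Lemma~\ref{marchaud_lemma} shows that $H$, with $H(\theta)_i = \tau_i - F^*(\theta_i)$, is a Marchaud map; Lemma~\ref{martingale_lemma} rewrites the update as $\theta_{t+1} = \theta_t + \alpha_t(g(\theta_t) + M_t)$ with $\{M_t\}$ a bounded martingale difference sequence adapted to the natural filtration; Lemma~\ref{boundedness_lemma} gives $\sup_{t \ge 0}\|\theta_t\|_\infty < \infty$ almost surely; and Lemma~\ref{lyapunov_lemma} supplies a Lyapunov function $L$ for $\partial_t z_t \in H(z_t)$ and $\Lambda$, the latter coinciding exactly with the target set of the theorem.

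With these lemmas, the body of the proof is short. The step-size conditions $\alpha_t > 0$, $\sup_t \alpha_t < \infty$, $\sum_t \alpha_t = \infty$, and $\alpha_t = o(1/\log t)$ are assumed as hypotheses, so the only remaining point is to check that the drift is an admissible selection of $H$, i.e.\ $g(\theta_t) \in H(\theta_t)$. When the per-step reward at time $t$ is distributed according to the limiting law with CDF $F$, this holds on the nose, since $g(\theta_{i,t}) = \mathbb{E}[\tau_i - \mathds{1}\{R_t < \theta_{i,t}\} \mid \theta_{i,0},\ldots,\theta_{i,t}] = \tau_i - F^*(\theta_{i,t})$, which is $H(\theta_t)_i$. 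Theorem~\ref{thm:benaim-result} then yields $\theta_t \to \Lambda$ almost surely; and because $\Lambda$ is the product set $\prod_{i=1}^m q_i$, this is equivalent to $\mathrm{dist}(\theta_{i,t}, q_i) \to 0$ a.s.\ for each $i$, which is the claimed conclusion.

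The subtlety I expect to be the main obstacle is the mismatch between this ``on-distribution'' picture and what Assumptions~\ref{assumption_unichain}--\ref{assumption_communicating} actually provide: they only force the law of $R_t$ to \emph{converge} to the one with CDF $F$, so at finite $t$ the conditional CDF of $R_t$ is some $F_t \ne F$, and the true drift $g(\theta_{i,t}) = \tau_i - F_t^*(\theta_{i,t})$ need not lie in $H(\theta_t)_i$ exactly. I would resolve this by decomposing $g(\theta_t) = h_t + \varepsilon_t$ with $h_t \in H(\theta_t)$ and a bias term $\varepsilon_t$ whose norm is controlled by $\sup_i |F_t^*(\theta_{i,t}) - F^*(\theta_{i,t})|$; since the trajectory is almost surely bounded (Lemma~\ref{boundedness_lemma}) and $F_t \to F$, one has $\|\varepsilon_t\|_\infty \to 0$, with only mild care needed at the at-most-countably-many jump points of $F$. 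Such an asymptotically vanishing perturbation is precisely what the differential-inclusion machinery of \citet{Rowland2024-sg} is built to absorb: it leaves the limiting inclusion unchanged and does not disrupt the strict Lyapunov descent of Lemma~\ref{lyapunov_lemma}. In the tabular prediction setting one could instead argue more concretely that a unichain/communicating chain mixes geometrically, so the $F_t$-versus-$F$ gap decays fast enough to be folded into the noise term. After handling this perturbation, the rest is the mechanical verification above and the theorem follows.
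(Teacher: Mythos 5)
Your proposal is correct and takes essentially the same route as the paper's proof: it verifies the Marchaud-map, bounded-martingale-difference, almost-sure-boundedness, and Lyapunov conditions via Lemmas~\ref{marchaud_lemma}, \ref{martingale_lemma}, \ref{boundedness_lemma}, and \ref{lyapunov_lemma}, and then applies Theorem~\ref{thm:benaim-result} to conclude \(\theta_{i,t} \to q_i\) almost surely. Your added handling of the finite-\(t\) mismatch between the conditional law of \(R_t\) and the limiting CDF \(F\) (as a vanishing perturbation of the drift) is a point the paper's proof passes over silently, so it is a welcome refinement rather than a different approach.
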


\begin{proof}
In Lemma \ref{martingale_lemma}, we showed that the per-step reward quantile update rule \eqref{eqn_proofs_3} can be written in terms of the more generic stochastic approximation update rule \eqref{eqn_proofs_2}. Moreover, in Lemmas \ref{marchaud_lemma}, \ref{martingale_lemma}, \ref{boundedness_lemma}, and \ref{lyapunov_lemma}, we showed that:
\begin{itemize}
\item The resulting set-valued map, \(H\), is a Marchaud map;
\item The resulting noise sequence, \(\{M_t\}_{t \geq 0}\), is a bounded martingale difference sequence;
\item The sequence \(\{\theta_{i,t}\}_{t \geq 0}\) is bounded almost surely \(\forall i = 1, 2, \ldots, m\); and
\item There exists an appropriate Lyapunov function, \(L\), for the resulting differential inclusion, \(\partial_t z_t \in H(z_t)\), and the subset, \(\Lambda \doteq \{ \theta \in \mathbb{R}^m : \theta_i \in q_i \; \forall i = 1, 2, \dots, m \}\).
\end{itemize}

As such, these properties collectively satisfy all of the conditions of Theorem \ref{thm:benaim-result} (i.e., Theorem 14 of \citet{Rowland2024-sg}), which guarantees the almost sure convergence of update rules that are of the form of the generic update rule \eqref{eqn_proofs_2}. Therefore, in the case of the per-step reward quantile update rule \eqref{eqn_proofs_3}, we can conclude that \(\theta_{i,t} \rightarrow q_i\), almost surely, for each \(i = 1, 2, \dots, m\).\\ 

This completes the proof.
\end{proof}

\newpage

\subsection{Convergence of the Value Function Estimates}
\label{section_proof_values}

This section contains the convergence proofs for the value function estimates of the tabular D2 TD-learning and tabular D2 Q-learning algorithms (i.e., Algorithms \ref{alg_3} and \ref{alg_4}, respectively). Our general strategy is as follows: we build upon existing results from Section \ref{section_proof_q}, in combination with a two-timescales argument, to show the almost sure convergence of the value function estimates of our algorithms.\\ 

In this section, we adopt the following notation:
\begin{itemize}\itemsep0mm
    \item For a given vector \(x\), let \(\sum x\) denote the sum of all elements in \(x\), such that \(\sum x \doteq \sum_{i} x(i)\).
    \item Let \(e\) denote an all-ones vector.\\
\end{itemize}

\paragraph{Convergence Proof for the Tabular D2 TD-learning Value Function Estimates:\\} 
\label{proof_td_value}

In this section, we present the proof for the convergence of the value function estimates of the D2 TD-learning algorithm (Algorithm \ref{alg_3}). To begin, consider an MDP, \(\mathcal{M} \doteq \langle\mathcal{S}, \mathcal{A}, \mathcal{R}, p \rangle\), with policy, \(\pi\). Given a state \(s \in \mathcal{S}\) and discrete step \(n \geq 0\), let \(A_n(s) \sim \pi(\cdot \mid s)\) denote the action selected, let \(R_n(s, A_n(s)) \in \mathcal{R}\) denote a sample of the resulting reward, and let \(S'_n(s, A_n(s)) \sim p(\cdot, \cdot \mid s, A_n(s))\) denote a sample of the resulting state. Let \(\{Y_n\}\) be a set-valued process taking values in the set of nonempty subsets of \(\mathcal{S}\), such that: \(Y_n = \{s: s\) component of the \(\vert \mathcal{S} \vert\)-sized table of state-value estimates, \(V\), that was updated at step \(n\}\). Let \(\nu(n, s) \doteq \sum_{j=0}^n I\{s \in Y_j\}\), where \(I\) is the indicator function, such that \(\nu(n, s)\) represents the number of times that \(V(s)\) was updated up until step \(n\). 

Now consider the following update rule for \(n \geq 0\):
\begin{align}
    V_{n+1}(s) & \doteq V_n(s) + \alpha_{\nu(n, s)} \delta_n(s) I\{s \in Y_n\}, \quad \forall s \in \mathcal{S}, \label{async_td_value_update_eqn}
\end{align}
where,
\begin{align}
\begin{split}
    \delta_n(s) & \doteq R_n(s, A_n(s)) - \bar{R}_n + V_n(S_n'(s, A_n(s))) - V_n(s).
\end{split}\label{async_td_td_error_eqn}
\end{align}

Here, \(\bar{R}_n\) denotes the estimate of the average-reward induced by following policy \(\pi\), \(\bar{r}_{\pi}\) (see Equation (2)), \(\delta_n(s)\) denotes the TD error, and \(\alpha_{\nu(n, s)}\) denotes the step size at step \(n\) for state \(s\).

We will now show that the value function update rule for the D2 TD-learning algorithm is a special case of the update rule \eqref{async_td_value_update_eqn}. To this end, consider a sequence of experience from our MDP \(\mathcal{M}\): \(S_t, A_t(S_t), R_{t+1}, S_{t+1}, \ldots\). Now recall the set-valued process \(\{Y_n\}\). If we let \(n\) = time step \(t\), we have: 
\begin{align*}
Y_t(s) = 
\begin{cases}
    1, s = S_t,\\
    0, \text{ otherwise,}
\end{cases}
\end{align*}
as well as \(S'_n(S_t, A_t(S_t)) = S_{t+1}\), and \(R_n(S_t, A_t(S_t)) = R_{t+1}\).\\ 

Hence, the update rule \eqref{async_td_value_update_eqn} becomes:
\begin{align}
\label{d2_td_value_update}
    V_{t+1}(S_t) & \doteq V_t (S_t) + \alpha_{\nu(t, S_t)} \delta_t \text{\; and \; } V_{t+1}(s) \doteq V_t (s), \forall s \neq S_t,
\end{align}

which is D2 TD-learning's value function update rule with \(\alpha_{\nu(t, S_t)}\) denoting the step size at time \(t\).\\

As such, if we can show that the update rule \eqref{async_td_value_update_eqn} converges, then it would imply that the value function update rule for the D2 TD-learning algorithm also converges. To this end, we will now specify the assumptions that are needed to ensure convergence:\\ 

\begin{assumption}[Unichain Assumption]\label{assumption_unichain_appendix}
The Markov chain induced by the policy is unichain. That is, the induced Markov chain consists of a single recurrent class and a potentially-empty set of transient states.\\
\end{assumption}

\begin{assumption}[Step Size Assumption] \label{assumption_step_size} \(\alpha_n > 0\), \(\sup_{n \geq 0} \alpha_n < \infty\), \(\sum_{n = 0}^\infty \alpha_n = \infty\), \(\sum_{n = 0}^\infty \alpha_n^2 < \infty\), and \(\alpha_n = o(1/\log(n))\).\\
\end{assumption}

\begin{assumption}[Asynchronous Step Size Assumption 1] \label{assumption_async_step_size_1}
Let \([\cdot]\) denote the integer part of \((\cdot)\). For \(x \in (0, 1)\), 
\begin{align*}
    \sup_i \frac{\alpha_{[xi]}}{\alpha_i} < \infty
\end{align*}
and 
\begin{align*}
    \frac{\sum_{j=0}^{[yi]} \alpha_j}{\sum_{j=0}^i \alpha_j} \to 1
\end{align*} 
uniformly in \(y \in [x, 1]\).\\
\end{assumption}

\begin{assumption}[Asynchronous Step Size Assumption 2] \label{assumption_async_step_size_td_2}
There exists \(\Delta > 0\) such that 
\begin{align*}
    \liminf_{n \to \infty} \frac{\nu(n, s)}{n+1} \geq \Delta ,
\end{align*}
almost surely, for all \(s \in \mathcal{S}\). 

Furthermore, for all \(x > 0\), and
\begin{align*}
    N(n, x) = \min \Bigg \{m \geq n: \sum_{i = n+1}^m \alpha_i \geq x \Bigg\} ,
\end{align*}
the limit 
\begin{align*}
    \lim_{n \to \infty} \frac{\sum_{i = \nu(n, s)}^{\nu(N(n, x), s)} \alpha_i}{\sum_{i = \nu(n, s')}^{\nu(N(n, x), s')} \alpha_i}
\end{align*} 

exists almost surely for all \(s, s'\).\\
\end{assumption}

\begin{assumption}[Quantile Step Size Assumption] \label{assumption_quantile_stepsize} In addition to the conditions listed in Assumption \ref{assumption_async_step_size_1}, the quantile step sizes, \(\{\alpha_{_\theta, n}\}_{n \geq 0}\), satisfy the following properties: \(\alpha_n / (\alpha_{_\theta, n}) \to 0\) and \(\sum_{n = 0}^\infty (\alpha_n^2 + \alpha_{_\theta, n}^2)< \infty\), where \(\{\alpha_n\}_{n \geq 0}\) denote the value function step sizes.\\
\end{assumption}

We are now ready to state the convergence theorem:\\

\begin{theorem}[Convergence of D2 TD-learning Value Function Estimates]\label{theorem_convergence_of_d2_td_update}

If Assumptions~\ref{assumption_unichain_appendix} -- \ref{assumption_quantile_stepsize} hold, then the update rule \eqref{async_td_value_update_eqn} converges, almost surely, \(V_n \to v_\infty \text{ as } n \to \infty\), thereby implying that the D2 TD-learning value function estimates \eqref{d2_td_value_update} converge almost surely to \(v_\infty \text{ as } n \to \infty\), where \(v_\infty\) denotes the solution to the Bellman equation (4).
\end{theorem}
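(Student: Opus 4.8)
The plan is to reduce the value-function recursion \eqref{async_td_value_update_eqn} to a single-timescale \emph{asynchronous} stochastic approximation scheme with a vanishing perturbation, and then invoke the asynchronous SA machinery of \citet{Borkar1998-km} together with an ODE analysis tailored to the average-reward structure. The crucial observation is that the per-step reward quantile recursion \eqref{eqn_proofs_3} is \emph{autonomous}: it involves only the reward samples $\{R_t\}$ (generated by the unichain chain induced by $\pi$) and never the value estimates. Hence Theorem~\ref{thm_q_convergence} applies verbatim and gives $\theta_{i,n}\to q_i$ a.s.\ for every $i$, so that $\bar R_n=\frac1m\sum_{i=1}^m\theta_{i,n}\to\frac1m\sum_{i=1}^m q_i$ a.s.; by Lemma~\ref{lemma_1} and Theorem~\ref{theorem_1} this limit is $\bar r_\pi$ (exactly in the $m\to\infty$ regime, and up to the $O(1/m)$ Riemann-sum error otherwise, which for the statement as written we absorb into the interpretation of \eqref{eq_avg_reward_4}). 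Because the fast recursion converges on its own, the two-timescale condition $\alpha_n/(\eta_{_\theta}\alpha_n)\to 0$ in Assumption~\ref{assumption_quantile_stepsize} is needed only to license this decoupling; no genuine two-timescale averaging argument is required.

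Next I would write the per-component update in the canonical form $V_{n+1}(s)=V_n(s)+\alpha_{\nu(n,s)}\big(h_s(V_n)+M_{n+1}(s)+\varepsilon_n(s)\big)\mathds 1\{s\in Y_n\}$, where $h(V)\doteq T_\pi V-V$ with $(T_\pi V)(s)=\sum_a\pi(a\mid s)\sum_{s',r}p(s',r\mid s,a)\big(r-\bar r_\pi+V(s')\big)$ the affine (hence Lipschitz) average-reward policy-evaluation operator, $M_{n+1}(s)=\delta_n(s)-\mathbb{E}[\delta_n(s)\mid\mathcal{F}_n]$ the martingale-difference noise arising from sampling $R_n$ and $S'_n$ (bounded conditional second moment once $\{V_n\}$ is bounded, controlled via $\sum_n\alpha_n^2<\infty$ in Assumption~\ref{assumption_step_size}), and $\varepsilon_n(s)=\bar r_\pi-\bar R_n$ the additive perturbation, which tends to $0$ a.s.\ by the previous paragraph. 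One then checks the remaining hypotheses of the asynchronous SA theorem: the asynchronous step-size conditions (Assumptions~\ref{assumption_async_step_size_1} and \ref{assumption_async_step_size_td_2}, which guarantee that all states are updated comparably often and that a well-defined diagonal matrix $\Lambda$ of positive relative update rates exists), the martingale-noise conditions, and --- the delicate one --- almost-sure boundedness of $\{V_n\}$. Stability would follow from a Borkar--Meyn-type argument applied to the scaled ODE $\dot v=(P_\pi-I)v$; under the unichain assumption $P_\pi$ has $1$ as a simple eigenvalue with right eigenvector $e$ and every other eigenvalue $\lambda$ in the closed unit disk with $\mathrm{Re}(\lambda-1)<0$, so this ODE is globally asymptotically stable \emph{towards the line} $\mathrm{span}(e)$ rather than to a point, and one must use the refinement for linearly attracting invariant sets (as carried out for differential TD-learning in the setting of \citet{Wan2021-re}), or exploit that only the $e$-component of $V_n$ can grow while being kept bounded by the drift component $r_\pi-\bar r_\pi e$, which is orthogonal to $\mu_\pi$.

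Finally, I would identify the limiting ODE. Ignoring the asymptotically negligible perturbation, the associated ODE (accounting for asynchrony) is $\dot v=\Lambda\,h(v)=\Lambda\big((P_\pi-I)v+r_\pi-\bar r_\pi e\big)$. Since $\mu_\pi^{\!\top}(r_\pi-\bar r_\pi e)=\bar r_\pi-\bar r_\pi=0$, the drift lies in $\mathrm{range}(I-P_\pi)$, so the equilibrium set is nonempty and equals $\{v^*+ce:c\in\mathbb{R}\}$ --- precisely the set of solutions of the Bellman equation \eqref{eq_avg_reward_4}. A Lyapunov function measuring distance to this affine solution set (using that $P_\pi-I$ is Hurwitz on any complement of $\mathrm{span}(e)$, that $\Lambda$ is positive-definite, and a suitable weighted norm) shows every trajectory converges to a single point of the set; invoking the asynchronous SA theorem then yields $V_n\to v_\infty$ a.s.\ for some $v_\infty$ solving \eqref{eq_avg_reward_4}. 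Since \eqref{async_td_value_update_eqn} specializes to the D2 TD-learning value update \eqref{d2_td_value_update}, this is exactly the claim. I expect the main obstacle to be the boundedness/stability step: the limiting scaled ODE has a whole line of equilibria, so the standard Borkar--Meyn stability criterion does not apply off the shelf and must be replaced by its version for linearly attracting invariant subspaces; a secondary subtlety is that the asynchronous time-rescaling $\Lambda$ destroys the exact invariance of $\mu_\pi^{\!\top}V_n$, so pinning down the particular limit $v_\infty$ relies on the solution-set Lyapunov argument rather than on a conserved quantity.
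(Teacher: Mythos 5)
Your proposal is correct in substance and reaches the same conclusion, but it takes a genuinely different route from the paper. The paper treats the quantile recursion and the value recursion as a fast/slow pair and invokes the two-timescale theorem (Theorem 2, Section 6 of \citet{Borkar2009-sr}), reduces asynchrony via Section 7.4 of that reference, establishes boundedness through the Borkar--Meyn criterion (Theorem 7, Section 3), and imports the global asymptotic stability of both the original and the scaled ODEs from Lemmas B.6 and B.7 of \citet{Wan2021-re}. You instead exploit the (correct) observation that the quantile recursion \eqref{eqn_proofs_3} is autonomous in $V$, so Theorem \ref{thm_q_convergence} already gives $\bar R_n \to \bar r_\pi$ without any genuine two-timescale averaging, and you fold $\bar r_\pi - \bar R_n$ into a vanishing perturbation of a single-timescale asynchronous scheme; this is a real simplification relative to the paper's architecture. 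Where the two approaches diverge most is the stability step: you correctly point out that with $\bar r_\pi$ a fixed constant the limiting and scaled ODEs have the whole line $\mathrm{span}(e)$ (resp.\ the affine solution set of \eqref{eq_avg_reward_4}) as equilibria, so the off-the-shelf Borkar--Meyn criterion and a point-equilibrium Lyapunov argument do not apply directly, and you propose the invariant-set refinement plus a solution-set Lyapunov function and a separate argument that the iterates settle on a single point of the set. The paper sidesteps this by citing Wan et al.'s lemmas, whose ODEs carry the average-reward coupling term that pins down the additive constant, so your treatment is aimed squarely at the step the paper handles most loosely; the price is that your proposal leaves that hardest step (invariant-set stability, and convergence of the noisy iterates to a single point rather than merely to the set) as a sketch, and, like the paper, it quietly absorbs the finite-$m$ discrepancy between $\frac{1}{m}\sum_i q_i$ and $\bar r_\pi$, which is exact only in the $m \to \infty$ limit of Lemma \ref{lemma_1}.
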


We prove this theorem in the following section. To do so, we use a two-timescales argument, such that we leverage Theorem 2 in Section 6 of \citet{Borkar2009-sr}, along with the results from Theorems 4.3 and \ref{thm_q_convergence}.

\paragraph{Proof of Theorem \ref{theorem_convergence_of_d2_td_update}:\\}

To begin, let us consider Assumption \ref{assumption_quantile_stepsize}. In particular, \(\alpha_n / (\alpha_{_\theta, n}) \to 0\) implies that the quantile step size, \(\alpha_{_\theta, n}\), decreases to 0 at a slower rate than the value function step size, \(\alpha_n\). This implies that the quantile updates move on a faster timescale compared to the value function update. Hence, as argued in Section 6 of \citet{Borkar2009-sr}, the (faster) quantile updates \eqref{eqn_proofs_3} view the (slower) value function update \eqref{async_td_value_update_eqn} as quasi-static, while the (slower) value function update views the (faster) quantile updates as nearly equilibrated (as we will discuss below, the existence of such an equilibrium point was shown in Theorem \ref{thm_q_convergence}).

That is, given the results of Lemma 4.2, Theorem \ref{thm_q_convergence}, and Theorem 4.3, the two-timescales argument allows us to view the average-reward estimate, \(\bar{R}_n\), as nearly equilibrated at \(\bar{r}_{\pi}\). In particular, given Lemma 4.2, we have that \(\bar{R}_n \doteq \frac{1}{m}\sum_{i=1}^{m}\theta_{i, n}\), where \(m\) denotes the number of quantiles, and \(\theta_{i, n}\) denotes the \(i^\text{th}\) quantile estimate at step \(n\). Now, given Theorem \ref{thm_q_convergence}, we know that the quantile estimates converge to the desired quantiles of the limiting per-step reward distribution. As such, by Theorem 4.3, we have that \(\bar{R}_n \to \bar{r}_{\pi}\). Consequently, by the two-timescales argument, the value function update views the average-reward estimate, \(\bar{R}_n\), as nearly equilibrated at \(\bar{r}_{\pi}\) (that is, we can essentially treat \(\bar{r}_{\pi}\) as a known constant).

We now proceed to show the convergence of the value function estimates. To this end, consider the asynchronous value function update \eqref{async_td_value_update_eqn}. This update is of the same form as Equation 7.1.2 of \citet{Borkar2009-sr}. Accordingly, to show the convergence of the value function estimates, we can apply the result in Section 7.4 of \citet{Borkar2009-sr}, which shows the convergence of asynchronous updates of the same form as Equation 7.1.2. To apply this result, given Assumptions \ref{assumption_async_step_size_1} and \ref{assumption_async_step_size_td_2}, we only need to show the convergence of the \emph{synchronous} version of the value function update:
\begin{equation}
\label{sync_d2_update_td}
    V_{n+1}(s) = V_{n}(s) + \alpha_n \left(h(V_n) + M_{n+1}\right), \quad \forall s \in \mathcal{S},
\end{equation}
where, 
\begin{align*}
    h(V_n)(s) & \doteq \sum_{a}\pi(a\mid s)\sum_{s', r} p(s', r \mid s, a) (r - \bar{r}_{\pi} + V_n(s')) - V_n(s), \\
    & = T(V_n)(s) - V_n(s) - \bar{r}_{\pi}, \text{ and}\\
     M_{n + 1}(s) & \doteq \left( R_n(s, A_n(s)) - \bar{r}_{\pi} + V_n(S_n'(s, A_n(s))) - V_n(s) \right) - h(V_n)(s).
\end{align*}

To show the convergence of the synchronous update \eqref{sync_d2_update_td} under the two-timescales argument, we can apply the result of Theorem 2 in Section 6 of \citet{Borkar2009-sr} to show that \(V_n \to v_{\infty}\) almost surely as \(n \to \infty\). This theorem requires that 3 assumptions be satisfied. As such, we will now show, via Lemmas \ref{lemma_td_1} - \ref{lemma_td_3}, that these 3 assumptions are indeed satisfied.\\

\begin{lemma}
\label{lemma_td_1}
The quantile update rules, which can be written in the following form: \( \theta_{i, n+1} = \theta_{i, n} + \alpha_{_\theta, n} (g(\theta_{i, n}) + M_{i, n})\) (see Lemma \ref{martingale_lemma}), each have a globally asymptotically stable equilibrium, \(\theta_i(V_n), \; \forall i = 1, 2, \ldots m\), where \(m\) denotes the number of quantiles, and \(\theta_i\) is a Lipschitz map with respect to \(V_n\).
\end{lemma}
\begin{proof}
This was shown in Theorem \ref{thm_q_convergence}. We note that since \(\theta_i\) is not directly dependent on \(V_n\) (i.e., \(\theta_i(V_n) = \theta_i\)), the Lipschitz condition holds trivially.
\end{proof}

\begin{lemma}
\label{lemma_td_2}
The value function update rule \eqref{sync_d2_update_td} has a globally asymptotically stable equilibrium, \(v_{\infty}\).
\end{lemma}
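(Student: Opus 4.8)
The plan is to identify the mean ODE associated with the synchronous recursion \eqref{sync_d2_update_td} and to show that its drift $h$ pulls every trajectory to $v_\infty$. By the two-timescale reduction set up in Section \ref{proof_d2_td}, together with Lemma \ref{lemma_1}, Theorem \ref{thm_q_convergence}, and Theorem \ref{theorem_1}, the average-reward term entering $h$ is frozen at the true value $\bar{r}_{\pi}$, so $h$ is autonomous. Writing the recursion in vector form, with $P_\pi$ the transition matrix induced by $\pi$ and $r_\pi(s) \doteq \sum_a \pi(a \mid s)\sum_{s',r} p(s',r \mid s,a)\,r$ the expected per-step reward (so that $T(V) = r_\pi + P_\pi V$), the associated ODE is
\[
\dot V = h(V) = T(V) - V - \bar{r}_{\pi} e = (P_\pi - I)V + (r_\pi - \bar{r}_{\pi} e).
\]
Since $h$ is affine, hence globally Lipschitz, the ODE has unique global solutions, which is a prerequisite for the convergence machinery of \citet{Borkar2009-sr}.

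First I would establish existence of the equilibrium. An equilibrium of $h$ is exactly a solution of the policy-evaluation Bellman equation \eqref{eq_avg_reward_4}, i.e. $(I - P_\pi)v_\infty = r_\pi - \bar{r}_{\pi} e$. Under the unichain Assumption \ref{assumption_unichain_appendix}, standard average-reward MDP theory (\citet{Puterman1994-dq}) guarantees that this equation is consistent, so $v_\infty$ exists; it is the canonical relative value function, determined up to the additive constant that is immaterial for policy evaluation.

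The analytic core is the stability argument. I would subtract the equilibrium relation to obtain the homogeneous linear system $\dot U = (P_\pi - I)U$ for $U = V - v_\infty$, whose flow is $U(t) = e^{(P_\pi - I)t}U(0)$. Under the unichain assumption the eigenvalue $1$ of $P_\pi$ is simple with right eigenvector $e$; moreover every other eigenvalue $\lambda$ of a stochastic matrix satisfies $\mathrm{Re}(\lambda) < 1$ whenever $\lambda \neq 1$, so $P_\pi - I$ has a single zero eigenvalue (direction $e$) and all remaining eigenvalues in the open left half-plane. The flow therefore contracts every component outside $\mathrm{span}(e)$, so that $U(t) \to (\mu_\pi^\top U(0))\,e$, a multiple of $e$, for every initial condition. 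I would package this as a Lyapunov argument using the span seminorm $L(V) = sp(V - v_\infty)$, which satisfies $\dot L < 0$ whenever $V \notin v_\infty + \mathrm{span}(e)$ and $L(V(t)) \to 0$ globally.

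I expect the neutral $e$-direction, arising from the simple zero eigenvalue of $P_\pi - I$, to be the main obstacle, since it reflects the standard fact that average-reward value functions are specified only up to an additive constant. I would carry the global asymptotic stability conclusion through by measuring convergence in the span seminorm natural to this setting: because $V(t) - v_\infty$ converges to a multiple of $e$ for every initial condition, $sp(V(t) - v_\infty) \to 0$ globally, so $v_\infty$ is the globally asymptotically stable equilibrium. This is precisely the equilibrium object required by Theorem 2 of \citet{Borkar2009-sr} and asserted by Theorem \ref{theorem_convergence_of_d2_td_update}, which completes the proof of Lemma \ref{lemma_td_2}.
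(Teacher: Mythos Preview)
Your argument is essentially correct and takes a genuinely different route from the paper. The paper's own proof is a one-line citation: it observes that the update \eqref{sync_d2_update_td} matches the form analyzed in Lemmas B.6 and B.7 of \citet{Wan2021-re} (with unit importance-sampling ratio) and simply imports their conclusion. You instead unpack the argument directly: you write the mean ODE as $\dot V = (P_\pi - I)V + (r_\pi - \bar r_\pi e)$, establish the equilibrium via the policy-evaluation equation, and analyze stability through the spectrum of $P_\pi - I$, correctly observing that under the unichain assumption the eigenvalue $1$ of $P_\pi$ is simple and every other eigenvalue has real part strictly less than $1$, so the flow contracts everything outside $\mathrm{span}(e)$.

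What each approach buys: the paper's citation is terse and offloads the technical burden, but is opaque about what is actually being proved and how the neutral $e$-direction is handled. Your direct argument is self-contained and makes the mechanism transparent, and you rightly flag the additive-constant indeterminacy as the only obstruction to GAS in the ordinary sense. The one place your write-up is slightly informal is the interface with Borkar's Theorem 2: stating convergence in the span seminorm is the correct substantive conclusion, but strictly speaking Borkar's hypotheses are phrased for a unique GAS equilibrium in the usual norm, so to be fully rigorous you would either pass to the quotient $\mathbb{R}^{|\mathcal S|}/\mathrm{span}(e)$ or fix a normalization (e.g.\ $\mu_\pi^\top V = 0$) to pin down a unique $v_\infty$. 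This is exactly the device \citet{Wan2021-re} employ, so both routes ultimately converge on the same resolution.
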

\begin{proof}
The value function update rule \eqref{sync_d2_update_td} is of the same form as the update rule considered in Lemmas B.6 - B.9 of \citet{Wan2021-re} (with the importance sampling ratio set to \(1.0\) and \(f(V) = \bar{r}_\pi\)). Given that we adhere to all the assumptions required for the aforementioned Lemmas from \citet{Wan2021-re}, we can leverage their results, which show that the update rule \eqref{sync_d2_update_td} has a globally asymptotically stable equilibrium, \(v_{\infty}\). 
\end{proof}

\begin{lemma}
\label{lemma_td_3}
\(\sup_n(\vert\vert V_n \vert\vert + \sum_{i=1}^m{\vert\vert\theta_{i,n} \vert\vert)} < \infty\) almost surely.
\end{lemma}
\begin{proof}
It was shown in Lemma \ref{boundedness_lemma} that \(\sup_n(\vert\vert \theta_{i,n} \vert\vert) < \infty \; \forall i = 1, 2, \ldots, m\) almost surely. Hence, we only need to show that \(\sup_n(\vert\vert V_n \vert\vert) < \infty\) almost surely. To this end, we can apply Theorem 7 in Section 3 of \citet{Borkar2009-sr}. This theorem requires 4 assumptions:
\begin{itemize}
    \item \textbf{(A1)} The function \(h\) is Lipschitz. That is, \(\vert \vert h(x) - h(y)\vert \vert \leq L \vert \vert x - y\vert \vert\) for some \(0 < L < \infty\).
    \item \textbf{(A2)} The sequence \(\{ \alpha_n\}\) satisfies \(\alpha_n > 0\), \(\sum \alpha_n = \infty\), and \(\sum \alpha_n^2 < \infty\).
    \item \textbf{(A3)} \(\{M_n\}\) is a martingale difference sequence that is square-integrable.
    \item \textbf{(A4)} The functions \(h_d(x) \doteq h(dx)/d\), \(d \geq 1, x \in \mathbb{R}^{\vert \mathcal{S} \vert}\), satisfy \(h_d(x) \to h_\infty(x)\) as \(d \to \infty\), uniformly on compacts for some \(h_\infty \in C(\mathbb{R}^{\vert \mathcal{S} \vert})\). Furthermore, the ODE \(\dot x_t = h_\infty(x_t)\) has the origin as its unique globally asymptotically stable equilibrium.
\end{itemize}

We first note that Assumption \textbf{(A1)} is satisfied given that both the \(\sum\) and \(T\) operators are Lipschitz. Moreover, we note that Assumption \ref{assumption_step_size} satisfies Assumption \textbf{(A2)}.\\ 

We now show that Assumption \textbf{(A3)} is satisfied:

Let \(\mathcal{F}_n \doteq \sigma(V_i, M_i; \; 0 \leq i \leq n)\) denote an increasing family of \(\sigma\)-fields. We have that, for any \(s \in \mathcal{S}\):
\begin{align}
\mathbb{E}[M_{n+1}(s) \mid \mathcal{F}_n]
    & = \mathbb{E} \left[R_n(s, A_n(s)) - \bar{r}_{\pi} + V_n(S_n'(s, A_n(s))) - V_n(s) - h(V_n)(s) \mid \mathcal{F}_n \right] \\
    & = \mathbb{E}\left[R_n(s, A_n(s)) - \bar{r}_{\pi} + V_n(S_n'(s, A_n(s))) - V_n(s) \mid \mathcal{F}_n \right] - h(V_n)(s) \\
    & = \mathbb{E}\left[R_n(s, A_n(s)) + V_n(S_n'(s, A_n(s))) \mid \mathcal{F}_n \right] - V_n(s) - \bar{r}_{\pi} - h(V_n)(s) \\
    & = \mathbb{E}[R_n(s, A_n(s)) + V_n(S_n'(s, A_n(s))) \mid \mathcal{F}_n] - T(V_n)(s) \\
     &= 0.
\end{align}
Moreover, it can be easily shown via the triangle inequality that \(\mathbb{E} [\vert\vert M_{n+1}\vert\vert^2 \mid \mathcal{F}_n] \leq K (1 + \vert\vert V_n \vert\vert^2)\) for some constant \(K > 0\). As such, Assumption \textbf{(A3)} is satisfied.\\

Finally, we show that Assumption \textbf{(A4)} is also satisfied:

We first note that:
\begin{align}
    h_\infty(x) = \lim_{d \to \infty} h_d(x) = \lim_{d \to \infty} \frac{T(dx) - dx - \bar{r}_{\pi}e}{d} = T_0 (x) - x,
\end{align}
where, 
\begin{align}
    T_0 (x) \doteq \sum_a \pi(a \mid s) \sum_{s', r} p (s', r \mid s, a) x(s') .
\end{align}
Clearly, the function \(h_\infty\) is continuous in every \(x \in \mathbb{R}^{\vert \mathcal{S} \vert}\). As such, we have that \(h_\infty \in C(\mathbb{R}^{\vert \mathcal{S} \vert})\).

Next, consider the ODE \(\dot x_t = h_\infty(x_t) = T_0 (x_t) - x_t\). This ODE has the origin as an equilibrium since \(T_0 (0) - 0 = 0\). Moreover, we can leverage the results of Lemmas B.6 and B.7 from \citet{Wan2021-re} to conclude that the origin is the unique globally asymptotically stable equilibrium for this ODE. Hence, Assumption \textbf{A4} is satisfied.\\

As such, Assumptions \textbf{(A1)} - \textbf{(A4)} are all verified, meaning that we can apply the results of Theorem 7 in Section 3 of \citet{Borkar2009-sr} to conclude that \(\sup_n(\vert\vert V_n \vert\vert) < \infty\), almost surely, and hence, that \(\sup_n(\vert\vert V_n \vert\vert + \sum_{i=1}^m{\vert\vert\theta_{i,n} \vert\vert)} < \infty\), almost surely.
\end{proof}

As such, we have now verified the 3 assumptions required by Theorem 2 in Section 6 of \citet{Borkar2009-sr}, which means that we can apply the result of the theorem to conclude that \(V_n \to v_{\infty}\) almost surely as \(n \to \infty\).\\

This completes the proof of Theorem \ref{theorem_convergence_of_d2_td_update}.\\

\paragraph{Convergence Proof for the Tabular D2 Q-learning Value Function Estimates:\\} 
\label{proof_q_value}

In this section, we present the proof for the convergence of the value function estimates of the D2 Q-learning algorithm (Algorithm \ref{alg_4}). To begin, consider an MDP \(\mathcal{M} \doteq \langle\mathcal{S}, \mathcal{A}, \mathcal{R}, p \rangle\). Given a state \(s \in \mathcal{S}\), action \(a \in \mathcal{A}\), and discrete step \(n \geq 0\), let \(R_n(s, a) \in \mathcal{R}\) denote a sample of the resulting reward, and let \(S'_n(s, a) \sim p(\cdot, \cdot \mid s, a)\) denote a sample of the resulting state. Let \(\{Y_n\}\) be a set-valued process taking values in the set of nonempty subsets of \(\mathcal{S} \times \mathcal{A}\), such that: \(Y_n = \{(s, a): (s, a)\) component of the \(\vert \mathcal{S} \times \mathcal{A} \vert\)-sized table of state-action value estimates, \(Q\), that was updated at step \(n\}\). Let \(\nu(n, s, a) \doteq \sum_{j=0}^n I\{(s, a) \in Y_j\}\), where \(I\) is the indicator function, such that \(\nu(n, s, a)\) represents the number of times that the \((s, a)\) component of \(Q\) was updated up until step \(n\). 

Now consider the following update rule for \(n \geq 0\):
\begin{align}
    Q_{n+1}(s, a) & \doteq Q_n(s, a) + \alpha_{\nu(n, s, a)} \delta_n(s, a) I\{(s, a) \in Y_n\}, \quad \forall s \in \mathcal{S}, a \in \mathcal{A}, \label{async_q_value_update_eqn}
\end{align}
where,
\begin{align}
\begin{split}
    \delta_n(s, a) & \doteq R_n(s, a) - \bar{R}_n + \max_{a'} Q_n(S_n'(s, a), a') - Q_n(s, a).
\end{split}\label{async_q_td_error_eqn}
\end{align}

Here, \(\bar{R}_n\) denotes the estimate of the average-reward induced by following policy \(\pi\), \(\bar{r}_{\pi}\) (see Equation (2)), \(\delta_n(s, a)\) denotes the TD error, and \(\alpha_{\nu(n, s, a)}\) denotes the step size at step \(n\) for state-action pair \((s, a)\).

We will now show that the value function update rule for the D2 Q-learning algorithm is a special case of the update rule \eqref{async_q_value_update_eqn}. To this end, consider a sequence of experience from our MDP \(\mathcal{M}\): \(S_t, A_t, R_{t+1}, S_{t+1}, \ldots\)\, . Now recall the set-valued process \(\{Y_n\}\). If we let \(n\) = time step \(t\), we have: 
\begin{align*}
Y_t(s, a) = 
\begin{cases}
    1, s = S_t\text{ and }a = A_t,\\
    0, \text{ otherwise,}
\end{cases}
\end{align*}
as well as \(S'_n(S_t, A_t) = S_{t+1}\), and \(R_n(S_t, A_t) = R_{t+1}\).\\ 

Hence, the update rule \eqref{async_q_value_update_eqn} becomes:
\begin{align}
\label{d2_q_value_update}
    Q_{t+1}(S_t, A_t) & \doteq Q_t (S_t, A_t) + \alpha_{\nu(t, S_t, A_t)} \delta_t \text{ \; and \; } Q_{t+1}(s, a) \doteq Q_t (s, a), \forall s \neq S_t, a \neq A_t,
\end{align}

which is D2 Q-learning's value function update rule with \(\alpha_{\nu(t, S_t, A_t)}\) denoting the step size at time \(t\).\\

As such, if we can show that the update rule \eqref{async_q_value_update_eqn} converges, then it would imply that the value function update rule for the D2 Q-learning algorithm also converges. To this end, we will now specify the assumptions that are needed to ensure convergence:\\ 

\begin{assumption}[Communicating Assumption] \label{assumption_communicating_appendix}
The MDP has a single communicating class. That is, each state in the MDP is accessible from every other state under some deterministic stationary policy.\\
\end{assumption}

\begin{assumption}[State-Action Value Function Uniqueness] \label{assumption_action_value_function_uniqueness}
There exists a unique solution of \(q\) only up to a constant in the Bellman equation (5).\\
\end{assumption}

\begin{assumption}[Asynchronous Step Size Assumption 3] \label{assumption_async_step_size_q_2}
There exists \(\Delta > 0\) such that 
\begin{align*}
    \liminf_{n \to \infty} \frac{\nu(n, s, a)}{n+1} \geq \Delta,
\end{align*}
almost surely, for all \(s \in \mathcal{S}, a \in \mathcal{A}\).\\ 

Furthermore, for all \(x > 0\), and
\begin{align*}
    N(n, x) = \min \Bigg \{m > n: \sum_{i = n+1}^m \alpha_i \geq x \Bigg \} ,
\end{align*}
the limit 
\begin{align*}
    \lim_{n \to \infty} \frac{\sum_{i = \nu(n, s, a)}^{\nu(N(n, x), s, a)} \alpha_i}{\sum_{i = \nu(n, s', a')}^{\nu(N(n, x), s', a')} \alpha_i}
\end{align*}

exists almost surely for all \(s, s', a, a'\).\\
\end{assumption}

We are now ready to state the convergence theorem:\\

\begin{theorem}[Convergence of D2 Q-learning Value Function Estimates]\label{theorem_convergence_of_d2_q_update}

If Assumptions~\ref{assumption_step_size}, \ref{assumption_async_step_size_1},
\ref{assumption_quantile_stepsize},
\ref{assumption_communicating_appendix}, \ref{assumption_action_value_function_uniqueness}, and \ref{assumption_async_step_size_q_2} hold, then the update rule \eqref{async_q_value_update_eqn} converges, almost surely, \(Q_n \to q_\infty \text{ as } n \to \infty\), thereby implying that the D2 Q-learning value function estimates \eqref{d2_q_value_update} converge almost surely to \(q_\infty \text{ as } n \to \infty\), where \(q_\infty\) denotes the solution to the Bellman optimality equation (5).
\end{theorem}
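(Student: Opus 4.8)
The plan is to follow the proof of Theorem~\ref{theorem_convergence_of_d2_td_update} almost verbatim, substituting the Q-learning update for the TD update and the Bellman optimality equation \eqref{eq_avg_reward_5} for the Bellman equation \eqref{eq_avg_reward_4}. The excerpt already exhibits the D2 Q-learning iterate \eqref{d2_q_value_update} as the special case of the asynchronous update \eqref{async_q_value_update_eqn} obtained by identifying $n$ with the time step $t$ and taking $Y_t=\{(S_t,A_t)\}$, so it remains to establish a.s.\ convergence of \eqref{async_q_value_update_eqn}. First I would set up the two-timescales structure: by Assumption~\ref{assumption_quantile_stepsize} the quantile step sizes $\eta_{_\theta}\alpha_n$ decay more slowly than $\alpha_n$, so the quantile recursion \eqref{eqn_proofs_3} runs on a fast timescale and the $Q$-recursion on a slow one. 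On the fast timescale the greedy policy induced by $Q_n$ is quasi-static, so by Theorem~\ref{thm_q_convergence} the quantile iterates converge a.s.\ to the quantiles of the corresponding limiting per-step reward distribution, and hence by Lemma~\ref{lemma_1} and Theorem~\ref{theorem_1} the estimate $\bar R_n=\frac{1}{m}\sum_{i=1}^m\theta_{i,n}$ is seen by the slow recursion as equilibrated at the appropriate average reward; under Assumption~\ref{assumption_communicating_appendix} and the exploration guaranteed by $\nu$ this limiting value is $\bar r_*$, which we may therefore treat as a known constant in the $Q$-update.

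Next I would invoke the asynchronous-to-synchronous reduction of Section~7.4 of \citet{Borkar2009-sr}: given Assumptions~\ref{assumption_async_step_size_1} and \ref{assumption_async_step_size_q_2}, it suffices to prove a.s.\ convergence of the synchronous iteration
\begin{equation}
Q_{n+1}(s,a) = Q_n(s,a) + \alpha_n\big(h(Q_n)(s,a) + M_{n+1}(s,a)\big), \quad \forall s \in \mathcal{S},\, a \in \mathcal{A},
\end{equation}
where $h(Q_n)(s,a) \doteq \sum_{s',r} p(s',r\mid s,a)\big(r + \max_{a'}Q_n(s',a')\big) - Q_n(s,a) - \bar r_* = TQ_n(s,a) - Q_n(s,a) - \bar r_*$ with $T$ as in \eqref{eq_q_operator}, and $M_{n+1}$ the induced martingale-difference noise. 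Convergence of this iteration under the two-timescales coupling then follows from Theorem~2 of Section~6 of \citet{Borkar2009-sr}, whose three hypotheses I would verify by three lemmas paralleling Lemmas~\ref{lemma_td_1}--\ref{lemma_td_3}: \emph{(i)} the fast (quantile) recursion has a globally asymptotically stable equilibrium --- this is exactly Theorem~\ref{thm_q_convergence}; \emph{(ii)} the slow ODE $\dot q_t = Tq_t - q_t - \bar r_* e$ has a globally asymptotically stable equilibrium $q_\infty$ solving \eqref{eq_avg_reward_5} --- here I would appeal to the Differential Q-learning analysis of \citet{Wan2021-re}, which, under the communicating Assumption~\ref{assumption_communicating_appendix} and the up-to-a-constant uniqueness Assumption~\ref{assumption_action_value_function_uniqueness}, establishes precisely this; and \emph{(iii)} $\sup_n\big(\|Q_n\| + \sum_{i=1}^m\|\theta_{i,n}\|\big) < \infty$ a.s. For \emph{(iii)}, boundedness of the $\theta_{i,n}$ is Lemma~\ref{boundedness_lemma}, and boundedness of $Q_n$ follows from the Borkar--Meyn theorem (Theorem~7, Section~3 of \citet{Borkar2009-sr}): (A1) holds since $T$ is nonexpansive and hence Lipschitz, (A2) is Assumption~\ref{assumption_step_size}, (A3) follows from $\mathbb{E}[M_{n+1}\mid\mathcal F_n]=0$ together with the quadratic bound $\mathbb{E}[\|M_{n+1}\|^2\mid\mathcal F_n]\le K(1+\|Q_n\|^2)$, and (A4) holds because $h_\infty(q)=\lim_{d\to\infty}h(dq)/d = T_0 q - q$ with $T_0 q(s,a)\doteq\sum_{s',r}p(s',r\mid s,a)\max_{a'}q(s',a')$, whose associated ODE has the origin as its unique globally asymptotically stable equilibrium, again by \citet{Wan2021-re}. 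Combining \emph{(i)}--\emph{(iii)} via Theorem~2 of Section~6 of \citet{Borkar2009-sr} gives $Q_n\to q_\infty$ a.s., and transferring back through the asynchronous reduction yields the same conclusion for the D2 Q-learning iterates \eqref{d2_q_value_update}.

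The main obstacle is step \emph{(ii)} (together with the (A4) portion of step \emph{(iii)}): unlike in the discounted setting, the average-reward Bellman optimality operator $T$ is not a contraction in any norm, so global asymptotic stability of $\dot q = Tq - q - \bar r_* e$ cannot be obtained from a routine Banach-fixed-point or quadratic-Lyapunov argument. This is exactly where the communicating and up-to-a-constant uniqueness assumptions become essential, and where I would lean entirely on the existing machinery of \citet{Wan2021-re}, which handles the $\max$ operator by working with the span seminorm and exploiting the structure of communicating average-reward MDPs; one must also check that the constant offset intrinsic to $q_\infty$ is compatible with the reference point the iteration settles on, which the same analysis supplies. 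A secondary subtlety worth flagging is the mild circularity between ``the quantiles converge'' and ``the behavior policy converges'': the timescale separation resolves it, since on the fast timescale the policy induced by $Q_n$ is quasi-static, so the quantile recursion legitimately tracks the per-step reward distribution of the current policy.
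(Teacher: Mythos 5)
Your proposal is correct and follows essentially the same route as the paper's proof: the same identification of \eqref{d2_q_value_update} as a special case of \eqref{async_q_value_update_eqn}, the same two-timescales argument treating \(\bar{R}_n\) as equilibrated, the same asynchronous-to-synchronous reduction via Section~7.4 of \citet{Borkar2009-sr}, the same three lemmas (quantile equilibrium from Theorem~\ref{thm_q_convergence}, global asymptotic stability of \(q_\infty\) from Lemmas~B.1--B.2 of \citet{Wan2021-re}, and boundedness via Theorem~7, Section~3 of \citet{Borkar2009-sr} with the identical (A1)--(A4) verification), combined through Theorem~2 of Section~6 of \citet{Borkar2009-sr}. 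Your explicit flagging of the policy/quantile circularity and of the stability difficulty for the non-contractive average-reward operator is a slightly more careful discussion than the paper gives, but the argument itself is the same.
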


We prove this theorem in the following section. To do so, we use a two-timescales argument, such that we leverage Theorem 2 in Section 6 of \citet{Borkar2009-sr}, along with the results from Theorems 4.3 and \ref{thm_q_convergence}.

\paragraph{Proof of Theorem \ref{theorem_convergence_of_d2_q_update}:\\}
\label{proof_d2_q}

To begin, let us consider Assumption \ref{assumption_quantile_stepsize}. In particular, \(\alpha_n / (\alpha_{_\theta, n}) \to 0\) implies that the quantile step size, \(\alpha_{_\theta, n}\), decreases to 0 at a slower rate than the value function step size, \(\alpha_n\). This implies that the quantile updates move on a faster timescale compared to the value function update. Hence, as argued in Section 6 of \citet{Borkar2009-sr}, the (faster) quantile updates \eqref{eqn_proofs_3} view the (slower) value function update \eqref{async_q_value_update_eqn} as quasi-static, while the (slower) value function update views the (faster) quantile updates as nearly equilibrated (as we will discuss below, the existence of such an equilibrium point was shown in Theorem \ref{thm_q_convergence}).

That is, given the results of Lemma 4.2, Theorem \ref{thm_q_convergence}, and Theorem 4.3, the two-timescales argument allows us to view the average-reward estimate, \(\bar{R}_n\), as nearly equilibrated at \(\bar{r}_{\pi}\). In particular, given Lemma 4.2, we have that \(\bar{R}_n \doteq \frac{1}{m}\sum_{i=1}^{m}\theta_{i, n}\), where \(m\) denotes the number of quantiles, and \(\theta_{i, n}\) denotes the \(i^\text{th}\) quantile estimate at step \(n\). Now, given Theorem \ref{thm_q_convergence}, we know that the quantile estimates converge to the desired quantiles of the limiting per-step reward distribution. As such, by Theorem 4.3, we have that \(\bar{R}_n \to \bar{r}_{\pi}\). Consequently, by the two-timescales argument, the value function update views the average-reward estimate, \(\bar{R}_n\), as nearly equilibrated at \(\bar{r}_{\pi}\) (that is, we can essentially treat \(\bar{r}_{\pi}\) as a known constant).

We now proceed to show the convergence of the value function estimates. To this end, consider the asynchronous value function update \eqref{async_q_value_update_eqn}. This update is of the same form as Equation 7.1.2 of \citet{Borkar2009-sr}. Accordingly, to show the convergence of the value function estimates, we can apply the result in Section 7.4 of \citet{Borkar2009-sr}, which shows the convergence of asynchronous updates of the same form as Equation 7.1.2. To apply this result, given Assumptions \ref{assumption_async_step_size_1} and \ref{assumption_async_step_size_q_2}, we only need to show the convergence of the \emph{synchronous} version of the value function update:
\begin{equation}
\label{sync_d2_update_q}
    Q_{n+1} = Q_{n} + \alpha_n (h(Q_n) + M_{n+1}), \quad \forall s \in \mathcal{S}, a \in \mathcal{A},
\end{equation}
where, 
\begin{align*}
    h(Q_n)(s, a) & \doteq \sum_{s', r} p (s', r \mid s, a) (r - \bar{r}_{\pi} + \max_{a'} Q_n(s', a')) - Q_n(s, a), \\
    & = T(Q_n)(s, a) - Q_n(s, a) - \bar{r}_{\pi}, \text{ and}\\
    M_{n + 1}(s, a) & \doteq \left( R_n(s, a) - \bar{r}_{\pi} + \max_{a'} Q_n(S_n'(s, a), a') - Q_n(s, a) \right) - h(Q_n)(s, a).
\end{align*}

To show the convergence of the synchronous update \eqref{sync_d2_update_q} under the two-timescales argument, we can apply the result of Theorem 2 in Section 6 of \citet{Borkar2009-sr} to show that \(Q_n \to q_{\infty}\) almost surely as \(n \to \infty\). This theorem requires that 3 assumptions be satisfied. As such, we will now show, via Lemmas \ref{lemma_q_1} - \ref{lemma_q_3}, that these 3 assumptions are indeed satisfied.\\

\begin{lemma}
\label{lemma_q_1}
The quantile update rules, which can be written in the following form: \( \theta_{i, n+1} = \theta_{i, n} + \alpha_{_\theta, n} (g(\theta_{i, n}) + M_{i, n})\) (see Lemma \ref{martingale_lemma}), each have a globally asymptotically stable equilibrium, \(\theta_i(Q_n), \; \forall i = 1, 2, \ldots m\), where \(m\) denotes the number of quantiles, and \(\theta_i\) is a Lipschitz map with respect to \(Q_n\).
\end{lemma}
\begin{proof}
This was shown in Theorem \ref{thm_q_convergence}. We note that since \(\theta_i\) is not directly dependent on \(Q_n\) (i.e., \(\theta_i(Q_n) = \theta_i\)), the Lipschitz condition holds trivially.
\end{proof}

\begin{lemma}
\label{lemma_q_2}
The value function update rule \eqref{sync_d2_update_q} has a globally asymptotically stable equilibrium, \(q_{\infty}\).
\end{lemma}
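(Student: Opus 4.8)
The plan is to mirror the proof of Lemma~\ref{lemma_td_2}, but for the control (Q-learning) setting. First I would identify the limiting ODE associated with the synchronous update \eqref{sync_d2_update_q}, namely \(\dot{q}_t = h(q_t) = T(q_t) - q_t - \bar{r}_{\pi}e\), where \(T\) is the average-reward Bellman optimality operator of Equation~\eqref{eq_q_operator} and \(e\) is the all-ones vector. The two-timescales argument preceding this lemma already licenses treating \(\bar{r}_{\pi}\) as a fixed, known constant (it is the quantity tracked by \(\bar{R}_n\), which is pinned down via Lemma~\ref{lemma_1}, Theorem~\ref{thm_q_convergence}, and Theorem~\ref{theorem_1}). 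Hence this ODE has exactly the form of the Differential Q-learning ODE analyzed in \citet{Wan2021-re}, with the learned reference term replaced by the known constant \(\bar{r}_{\pi}\), and the martingale difference noise \(M_{n+1}\) playing the role of their noise sequence.

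Next I would verify that the hypotheses required by the relevant lemmas of \citet{Wan2021-re} (the Q-learning counterparts of the Lemmas B.6 and B.7 invoked in the TD case) are in force. The communicating assumption~\ref{assumption_communicating_appendix} guarantees that the Bellman optimality equation \(T(q) = q + \bar{r}_{\pi}e\) admits a solution, and the uniqueness-up-to-a-constant assumption~\ref{assumption_action_value_function_uniqueness} pins the solution set down to the affine line \(\{q^{*} + ce : c \in \mathbb{R}\}\). With these checked, the cited results give that the ODE \(\dot{q}_t = T(q_t) - q_t - \bar{r}_{\pi}e\) has a globally asymptotically stable equilibrium \(q_\infty\), which is the assertion of the lemma. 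So the bulk of the work is in confirming that \eqref{sync_d2_update_q} matches their setup verbatim — in particular that the greedy/behavior-policy bookkeeping (the action selection rule of Definition~\ref{defn_greedy_rule}, which preserves the operator \(T\) of \eqref{eq_q_operator}) is consistent with the constant \(\bar{r}_{\pi}\) appearing in the ODE.

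The main obstacle, and the reason this step is doing genuine work rather than being a one-line citation, is that, unlike the discounted Bellman optimality operator, the average-reward operator \(T\) is only \emph{nonexpansive} (not a contraction) in the sup-norm, and its fixed points are not unique. Asymptotic stability therefore cannot be obtained by a naive contraction argument; one must instead pass to the span seminorm — where \(T\) becomes a contraction under the communicating assumption, modulo the usual aperiodicity-type considerations — or equivalently argue in the quotient space \(\mathbb{R}^{\mathcal{S}\times\mathcal{A}}/\operatorname{span}(e)\), and then show that the \(e\)-component of \(q_t\) is controlled so that the full trajectory, and not merely its equivalence class, converges to a single equilibrium \(q_\infty\). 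This is precisely what the cited \citet{Wan2021-re} lemmas establish, so once their hypotheses are confirmed for \eqref{sync_d2_update_q} their conclusion transfers directly.
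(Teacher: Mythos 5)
Your proposal is correct and follows essentially the same route as the paper: the paper's proof simply observes that \eqref{sync_d2_update_q} has the same form as the update analyzed in Lemmas B.1 and B.2 of \citet{Wan2021-re} (the Q-learning counterparts you identify), checks that the required assumptions (\ref{assumption_communicating_appendix}, \ref{assumption_action_value_function_uniqueness}, with \(\bar{r}_{\pi}\) treated as a constant via the two-timescales argument) hold, and imports their conclusion that the associated ODE has a globally asymptotically stable equilibrium \(q_\infty\). Your additional remarks on nonexpansiveness and the span-seminorm/quotient-space mechanics describe what the cited lemmas establish internally, but do not change the structure of the argument.
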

\begin{proof}
The value function update rule \eqref{sync_d2_update_q} is of the same form as the update rule considered in Lemmas B.1 - B.4 of \citet{Wan2021-re} (with \(f(Q) = \bar{r}_\pi\)). Given that we adhere to all the assumptions required for the aforementioned Lemmas from \citet{Wan2021-re}, we can leverage their results, which show that the update rule \eqref{sync_d2_update_q} has a globally asymptotically stable equilibrium, \(q_{\infty}\). 
\end{proof}

\begin{lemma}
\label{lemma_q_3}
\(\sup_n(\vert\vert Q_n \vert\vert + \sum_{i=1}^m{\vert\vert\theta_{i,n} \vert\vert)} < \infty\) almost surely.
\end{lemma}
\begin{proof}
It was shown in Lemma \ref{boundedness_lemma} that \(\sup_n(\vert\vert \theta_{i,n} \vert\vert) < \infty \; \forall i = 1, 2, \ldots, m\) almost surely. Hence, we only need to show that \(\sup_n(\vert\vert Q_n \vert\vert) < \infty\) almost surely. To this end, we can apply Theorem 7 in Section 3 of \citet{Borkar2009-sr}. This theorem requires 4 assumptions:
\begin{itemize}
    \item \textbf{(A1)} The function \(h\) is Lipschitz. That is, \(\vert \vert h(x) - h(y)\vert \vert \leq L \vert \vert x - y\vert \vert\) for some \(0 < L < \infty\).
    \item \textbf{(A2)} The sequence \(\{ \alpha_n\}\) satisfies \(\alpha_n > 0\), \(\sum \alpha_n = \infty\), and \(\sum \alpha_n^2 < \infty\).
    \item \textbf{(A3)} \(\{M_n\}\) is a martingale difference sequence that is square-integrable.
    \item \textbf{(A4)} The functions \(h_d(x) \doteq h(dx)/d\), \(d \geq 1, x \in \mathbb{R}^{\vert \mathcal{S} \times \mathcal{A} \vert}\), satisfy \(h_d(x) \to h_\infty(x)\) as \(d \to \infty\), uniformly on compacts for some \(h_\infty \in C(\mathbb{R}^{\vert \mathcal{S} \times \mathcal{A} \vert})\). Furthermore, the ODE \(\dot x_t = h_\infty(x_t)\) has the origin as its unique globally asymptotically stable equilibrium.
\end{itemize}

We first note that Assumption \textbf{(A1)} is satisfied given that both the \(\sum\) and \(T\) operators are Lipschitz. Moreover, we note that Assumption \ref{assumption_step_size} satisfies Assumption \textbf{(A2)}.\\ 

We now show that Assumption \textbf{(A3)} is satisfied:

Let \(\mathcal{F}_n \doteq \sigma(Q_i, M_i; \;0 \leq i \leq n)\) denote an increasing family of \(\sigma\)-fields. We have that, for any \(s \in \mathcal{S}, a \in \mathcal{A}\):
\begin{align}
\mathbb{E}[M_{n+1}(s, a) \mid \mathcal{F}_n]
    & = \mathbb{E} \left[R_n(s, a) - \bar{r}_{\pi} + \max_{a'}Q_n(S_n'(s, a), a') - Q_n(s, a) - h(Q_n)(s, a) \mid \mathcal{F}_n \right] \\
    & = \mathbb{E}\left[R_n(s, a) - \bar{r}_{\pi} + \max_{a'}Q_n(S_n'(s, a), a') - Q_n(s, a) \mid \mathcal{F}_n \right] - h(Q_n)(s, a) \\
    & = \mathbb{E}\left[R_n(s, a) + \max_{a'}Q_n(S_n'(s, a), a') \mid \mathcal{F}_n \right] - Q_n(s, a) - \bar{r}_{\pi} - h(Q_n)(s, a) \\
    & = \mathbb{E}[R_n(s, a) + \max_{a'}Q_n(S_n'(s, a), a') \mid \mathcal{F}_n] - T(Q_n)(s, a) \\
     &= 0.
\end{align}
Moreover, it can be easily shown via the triangle inequality that \(\mathbb{E} [\vert\vert M_{n+1}\vert\vert^2 \mid \mathcal{F}_n] \leq K (1 + \vert\vert Q_n \vert\vert^2)\) for some constant \(K > 0\). As such, Assumption \textbf{(A3)} is satisfied.\\

Finally, we show that Assumption \textbf{(A4)} is also satisfied:

We first note that:
\begin{align}
    h_\infty(x) = \lim_{d \to \infty} h_d(x) = \lim_{d \to \infty} \frac{T(dx) - dx - \bar{r}_{\pi}e}{d} = T_0 (x) - x,
\end{align}
where, 
\begin{align}
    T_0 (x) \doteq \sum_{s', r} p (s', r \mid  s, a) \max_{a'} x(s', a').
\end{align}
Clearly, the function \(h_\infty\) is continuous in every \(x \in \mathbb{R}^{\vert \mathcal{S} \times \mathcal{A} \vert}\). As such, we have that \(h_\infty \in C(\mathbb{R}^{\vert \mathcal{S} \times \mathcal{A} \vert})\).

Next, consider the ODE \(\dot x_t = h_\infty(x_t) = T_0 (x_t) - x_t\). This ODE has the origin as an equilibrium since \(T_0 (0) - 0 = 0\). Moreover, we can leverage the results of Lemmas B.1 and B.2 from \citet{Wan2021-re} to conclude that the origin is the unique globally asymptotically stable equilibrium for this ODE. Hence, Assumption \textbf{A4} is satisfied.\\

As such, Assumptions \textbf{(A1)} - \textbf{(A4)} are all verified, meaning that we can apply the results of Theorem 7 in Section 3 of \citet{Borkar2009-sr} to conclude that \(\sup_n(\vert\vert Q_n \vert\vert) < \infty\), almost surely, and hence, that \(\sup_n(\vert\vert Q_n \vert\vert + \sum_{i=1}^m{\vert\vert\theta_{i,n} \vert\vert)} < \infty\), almost surely.
\end{proof}

As such, we have now verified the 3 assumptions required by Theorem 2 in Section 6 of \citet{Borkar2009-sr}, which means that we can apply the result of the theorem to conclude that \(Q_n \to q_{\infty}\) almost surely as \(n \to \infty\).\\

This completes the proof of Theorem \ref{theorem_convergence_of_d2_q_update}.


\clearpage

\section{Numerical Experiments}
\label{appendix_experiments}

This appendix contains details regarding the numerical experiments performed as part of this work. The overall aim of the experiments was to contrast and compare our Differential Distributional RL algorithms (see Appendix \ref{appendix_algs}) with non-distributional differential (i.e., average-reward RL) algorithms. In particular, we aimed to: 1) show how the Differential Distributional RL algorithms could be utilized to find the average-reward-optimal policy while also learning the corresponding optimal per-step reward distribution, and 2) contrast the results to those of non-distributional algorithms, which served as a sort of ‘baseline’ to illustrate how our distributional approach compares to a non-distributional one. In this work, we utilized non-distributional average-reward RL algorithms derived from the Differential framework proposed in \citet{Wan2021-re}. The aforementioned (non-distributional) Differential algorithms used for comparison are included in Section \ref{baseline_algorithms}.

We ran two groups of experiments. In the first group of experiments, we aimed to validate whether the D2 and D3 algorithms could successfully learn the optimal per-step reward distribution. Accordingly, we conducted experiments in environments where the optimal per-step reward distribution is known. In the second group of experiments, we aimed to evaluate the empirical performance of the D2 and D3 algorithms in more difficult environments. The first group of experiments is described in Section \ref{exp_toy}, and the second group is described in Section \ref{exp_atari}.

\subsection{Validation Experiments}
\label{exp_toy}
In this section, we discuss the experiments performed in the \emph{red-pill blue-pill} environment \citep{Rojas2025-bf}, as well as in the well-known \emph{inverted pendulum} environment. Through these experiments, we aimed to validate whether the D2 and D3 algorithms could successfully learn the optimal per-step reward distribution (i.e., these experiments correspond to environments where the optimal per-step reward distribution is known). 

\begin{figure}[htbp]
\centerline{\includegraphics[scale=0.6]{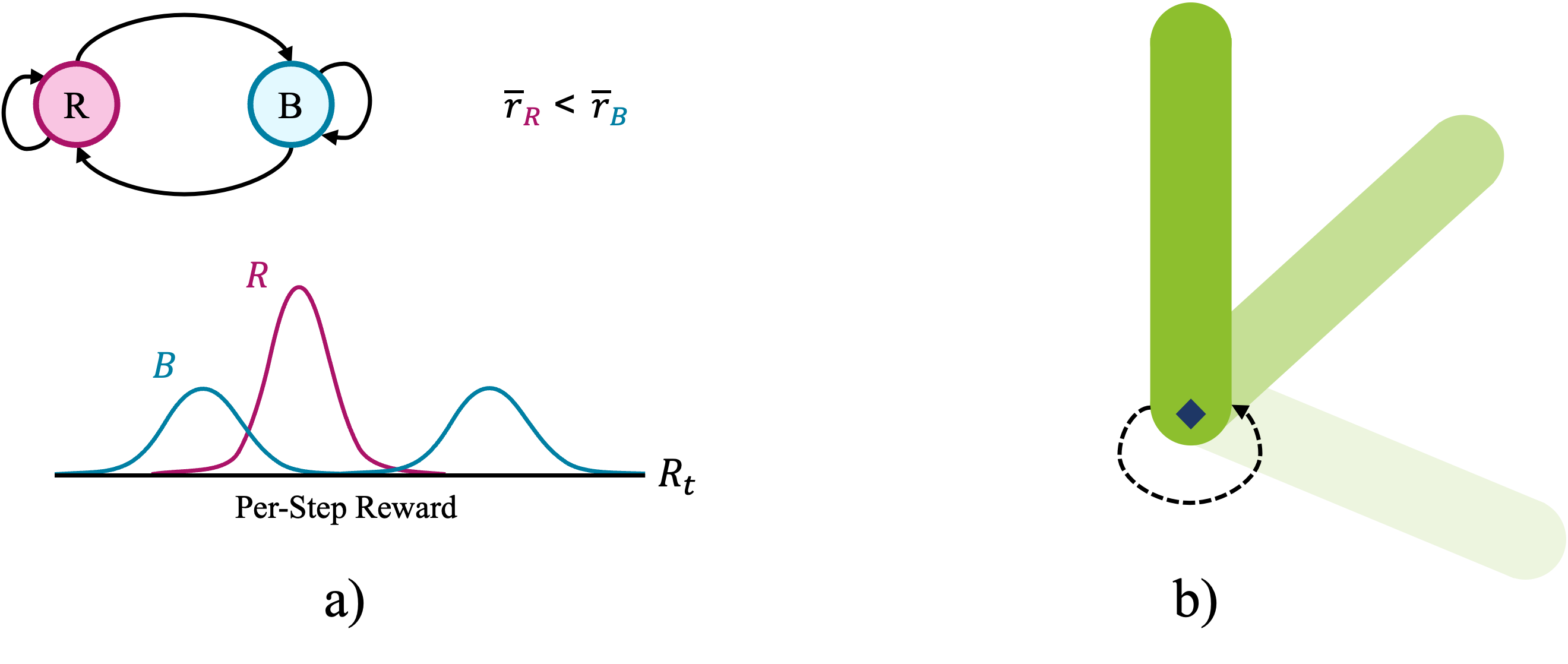}}
\caption{An illustration of the \textbf{a)} red-pill blue-pill, and \textbf{b)} inverted pendulum environments.}
\label{fig_experiments}
\end{figure}

\paragraph{Red-Pill Blue-Pill Experiment:\\}
\label{exp_red_pill_blue_pill}

In the first experiment, we consider the two-state \emph{red-pill blue-pill} environment, where at every time step, an agent can take either a ‘red pill’, which takes them to the ‘red world’ state, or a ‘blue pill’, which takes them to the ‘blue world’ state. Each state has its own characteristic per-step reward distribution, such that the blue world state has a reward distribution with a higher (better) average-reward compared to the red world state. Hence, we would expect that the D2 and D3 Q-learning algorithms learn a policy that prefers to stay in the blue world state, as well as the corresponding limiting per-step reward distribution induced by such a policy. Similarly, we would also expect that the Differential Q-learning algorithm learns a policy that prefers to stay in the blue world state. This task is illustrated in Figure \ref{fig_experiments}a).

For this experiment, we compared our tabular D2 and D3 Q-learning algorithms (Algorithms \ref{alg_4} and \ref{alg_8}, respectively) to the tabular Differential Q-learning algorithm (Algorithm \ref{alg_reg_1}). We ran each algorithm using various combinations of step sizes and number of quantiles (where appropriate). We used an \(\varepsilon\)-greedy policy with a fixed epsilon of 0.1. We used the same number of per-step reward quantiles and differential return quantiles (i.e., \(m=n\)). We set all initial guesses to zero. We ran the algorithms for 100k time steps.

For the Differential Q-learning algorithm, we tested every combination of the value function step size, \(\alpha\in\{\text{2e-5, 2e-4, 2e-3, 2e-2, 2e-1}\}\), with the average-reward step size, \(\eta\alpha\), where \(\eta\in\{\text{1e-3, 1e-2, 1e-1, 1.0, 2.0, 10.0, 100.0}\}\), for a total of 35 unique combinations. Each combination was run 50 times using different random seeds, and the results were averaged across the runs. The resulting (averaged) average-reward across all time steps is displayed in Figure \ref{fig_tuning_1}a). As shown in the figure, a value function step size of 2e-3 and an average-reward \(\eta\) of 2.0 resulted in the highest average-reward across all time steps in the red-pill blue-pill task. These are the parameters used to generate the results displayed in Figure 3. In Figure 3, the 95\% confidence interval is over 50 runs.
\begin{figure}[htbp]
\centerline{\includegraphics[scale=0.53]{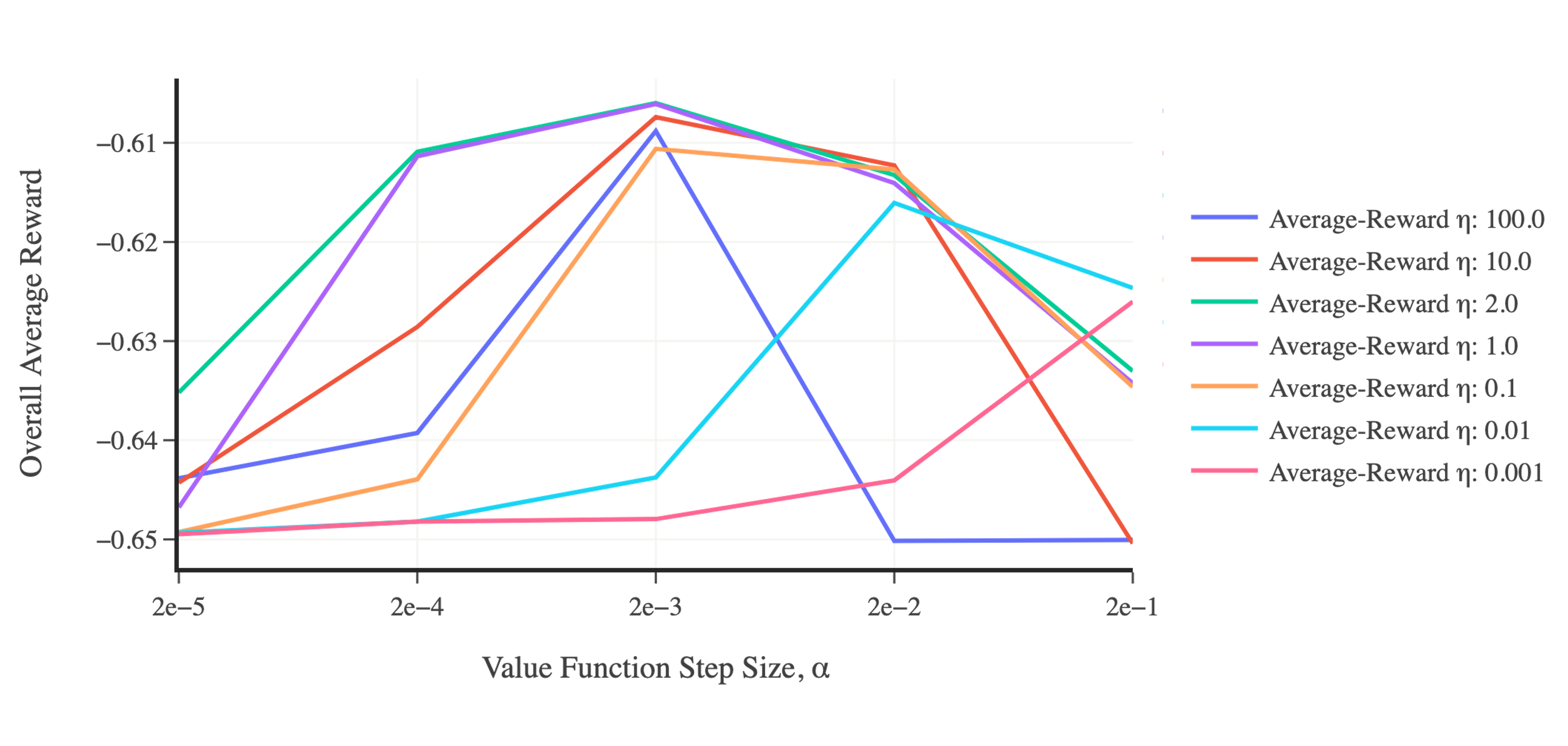}}
\caption{Step size tuning results for the red-pill blue-pill task when using the Differential Q-learning algorithm. The average-reward across all time steps is displayed for various combinations of step size parameters.}
\label{fig_tuning_1}
\end{figure}

For the D2 Q-learning algorithm, we tested every combination of the number of quantiles, \(n=m \in \{\text{10, 51}\}\), with the value function step size, \(\alpha\in\{\text{2e-5, 2e-4, 2e-3, 2e-2, 2e-1}\}\), and the per-step reward quantile step size, \(\alpha_{_\theta} \doteq \eta_{_\theta}\alpha\), where \(\eta_{_\theta} \in \{\text{1e-3, 1e-2, 1e-1, 1.0, 2.0, 10.0, 100.0}\}\), for a total of 70 unique combinations. Each combination was run 50 times using different random seeds, and the results were averaged across the runs. A value function step size of 2e-3, a reward quantile \(\eta_{_\theta}\) of 2.0, and the number of quantiles set to 51 yielded the best results and were used to generate the results displayed in Figure 3. In Figure 3, the 95\% confidence interval is over 50 runs. A value function step size of 2e-4, a per-step reward quantile \(\eta_{_\theta}\) of 2.0, and the number of quantiles set to 10 were used to generate the results displayed in Figure 2.\\ 

For the D3 Q-learning algorithm, we tested every combination of the number of quantiles, \(n=m \in \{\text{10, 51}\}\), with the differential return quantile step size, \(\alpha\in\{\text{2e-5, 2e-4, 2e-3, 2e-2, 2e-1}\}\), and the per-step reward quantile step size, \(\alpha_{_\theta} \doteq \eta_{_\theta}\alpha\), where \(\eta_{_\theta} \in \{\text{1e-3, 1e-2, 1e-1, 1.0, 2.0, 10.0, 100.0}\}\), for a total of 70 unique combinations. Each combination was run 50 times using different random seeds, and the results were averaged across the runs. A differential return quantile step size of 2e-2, a per-step reward quantile \(\eta_{_\theta}\) of 2.0, and the number of quantiles set to 51 yielded the best results and were used to generate the results displayed in Figure 3. In Figure 3, the 95\% confidence interval is over 50 runs.\\

\begin{figure}[htbp]
\centerline{\includegraphics[scale=0.57]{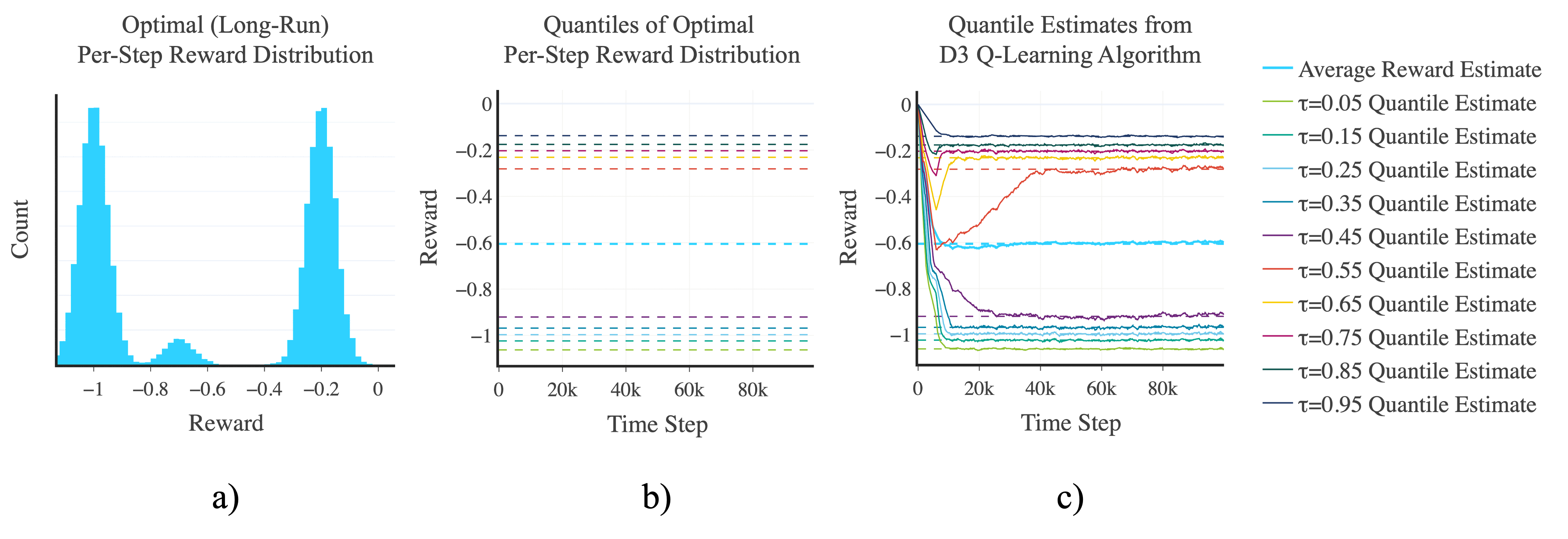}}
\caption{\textbf{a)} Histogram showing the empirical (\(\varepsilon\)-greedy) optimal (long-run) per-step reward distribution in the red-pill blue-pill task. \textbf{b)}  Quantiles of the (\(\varepsilon\)-greedy) optimal (long-run) per-step reward distribution in the red-pill blue-pill task. \textbf{c)} Convergence plot of the agent's per-step reward quantile estimates as learning progresses when using the D3 Q-learning algorithm in the red-pill blue-pill task.}
\label{fig_d3_rpbp_estimates}
\end{figure}

Figure \ref{fig_d3_rpbp_estimates} shows the agent's (per-step) reward quantile estimates as learning progresses when using the D3 Q-learning algorithm in the red-pill blue-pill task. As shown in the figure, the agent's quantile estimates converge to the quantiles of the limiting per-step reward distribution induced by the optimal policy (i.e., the policy that remains in the blue world state). A differential return quantile step size of 2e-4, a per-step reward quantile \(\eta_{_\theta}\) of 2.0, and the number of quantiles set to 10 were used to generate the results displayed in Figure \ref{fig_d3_rpbp_estimates}.

\begin{figure}[htbp]
\centerline{\includegraphics[scale=0.7]{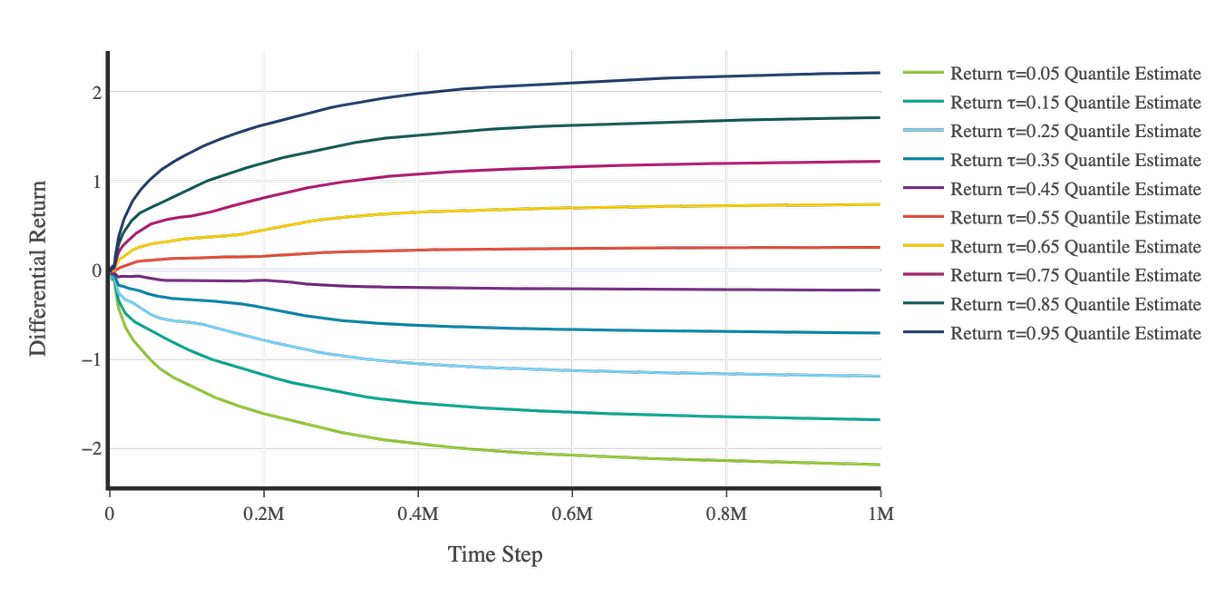}}
\caption{Convergence plot of the agent's differential return quantile estimates for \(S_t = \) ‘blue world’ and \(A_t = \) ‘blue pill’ as learning progresses when using the D3 Q-learning algorithm in the red-pill blue-pill task.}
\label{fig_d3_rpbp_return}
\end{figure}

Figure \ref{fig_d3_rpbp_return} shows the agent's differential return quantile estimates for \(S_t = \) ‘blue world’ and \(A_t = \) ‘blue pill’ as learning progresses when using the D3 Q-learning algorithm in the red-pill blue-pill task. As shown in the figure, the agent's differential return quantile estimates converge. A differential return quantile step size of 2e-4, a per-step reward quantile \(\eta_{_\theta}\) of 2.0, and the number of quantiles set to 10 were used to generate the results displayed in Figure \ref{fig_d3_rpbp_return}.\\

\paragraph{Inverted Pendulum Experiment:\\}
\label{exp_pendulum}

In the second validation experiment, we consider the well-known \emph{inverted pendulum} environment, where an agent learns how to optimally balance an inverted pendulum. We chose this task because it provides us with the opportunity to test our algorithms in an environment where: 1) we must use function approximation (given the high-dimensional state-space), and 2) where the optimal policy is known (i.e., the optimal policy is the policy that successfully balances the pendulum, thereby yielding a limiting per-step reward distribution \(\approx \textbf{0}\)). This hence allows us to directly assess the quality of the quantile estimates (to see if they all converge to \(\approx 0\)), as well as to gauge how function approximation affects the performance of our algorithms. For this task, we utilized a simple actor-critic architecture \citep{Barto1983-qr, Sutton2018-eh} as this allowed us to compare the performance of (non-tabular) D2 and D3 TD-learning algorithms with a (non-tabular) Differential TD-learning algorithm. This task is illustrated in Figure \ref{fig_experiments}b).

For this experiment, we compared our D2 and D3 Actor-Critic algorithms (Algorithms \ref{alg_5} and \ref{alg_9}, respectively) to the Differential Actor-Critic algorithm (Algorithm \ref{alg_reg_2}). We ran each algorithm using various combinations of step sizes and number of quantiles (where appropriate). We used the same number of per-step reward quantiles and differential return quantiles (i.e., \(m=n\)). We used a quantile Huber loss \(\lambda\) of 1.0. We set all initial guesses to zero. We ran the algorithms for 10k time steps. For simplicity, we used tile coding \citep{Sutton2018-eh} for both the value function and policy parameterizations, where we parameterized a softmax policy. For each parameterization, we used 32 tilings, each with 8 X 8 tiles. By using a linear function approximator (i.e., tile coding), the gradients for the value function and policy parameterizations can be simplified as follows:
\begin{equation}
\label{eq_d_1}
\nabla \hat{v}(s,\boldsymbol{w}) = \boldsymbol{x}(s),
\end{equation}
\begin{equation}
\label{eq_d_2}
\nabla \text{ln} \pi(a \mid s,\boldsymbol{\theta}) = \boldsymbol{x}_h(s,a) - \sum_{b \in \mathcal{A}} \pi(b \mid s, \boldsymbol{\theta})\boldsymbol{x}_h(s, b),
\end{equation}

where \(s \in \mathcal{S}\), \(a \in \mathcal{A}\), \(\boldsymbol{x}(s)\) is the state feature vector, and \(\boldsymbol{x}_h(s,a)\) is the softmax preference vector.\\

For the Differential Actor-Critic algorithm, we tested every combination of the value function step size, \(\alpha\in\{\text{2e-4, 2e-3, 2e-2, 2e-1}\}\), with the average-reward step size, \(\eta_{_{\bar{R}}}\alpha\), where \(\eta_{_{\bar{R}}}\in\{\text{1e-3, 1e-2, 1e-1, 1.0, 2.0}\}\), and policy \(\eta_{\pi}\in\{\text{1e-2, 1e-1, 1.0, 2.0, 10.0, 100.0}\}\), for a total of 120 unique combinations. Each combination was run 10 times using different random seeds, and the results were averaged across the runs. A value function step size of 2e-2, an average-reward \(\eta\) of 1e-2, and a policy \(\eta\) of 1e-1 yielded the best results and were used to generate the results displayed in Figure \ref{fig_pendulum_results}. In Figure \ref{fig_pendulum_results}, the 95\% confidence interval is over 10 runs.

For the D2 Actor-Critic algorithm, we tested every combination of the number of quantiles, \(n=m \in \{\text{10, 51}\}\), with the value function step size, \(\alpha\in\{\text{2e-4, 2e-3, 2e-2, 2e-1}\}\), the per-step reward quantile step size, \(\eta_{_\theta}\alpha\), where \(\eta_{_\theta}\in\{\text{1e-3, 1e-2, 1e-1, 1.0, 2.0}\}\), and policy step size, \(\eta_{\pi}\alpha\), where \(\eta_{\pi}\in\{\text{1e-2, 1e-1, 1.0, 2.0, 10.0, 100.0}\}\), for a total of 240 unique combinations. Each combination was run 10 times using different random seeds, and the results were averaged across the runs. A value function step size of 2e-2, a per-step reward quantile \(\eta\) of 1e-1, a policy  \(\eta\) of 1e-1, and the number of quantiles set to 10 yielded the best results and were used to generate the results displayed in Figures \ref{fig_pendulum_results} and \ref{pendulum_d2_estimates}. In Figure \ref{fig_pendulum_results}, the 95\% confidence interval is over 10 runs.

Figure \ref{pendulum_d2_estimates} shows the agent's (per-step) reward quantile estimates as learning progresses when using the D2 Actor-Critic algorithm in the inverted pendulum task. As shown in the figure, the agent's quantile estimates converge to the quantiles of the limiting per-step reward distribution induced by the optimal policy (i.e., the policy that balances the pendulum, thereby yielding a limiting per-step reward distribution \(\approx \textbf{0}\)).
\begin{figure}[htbp]
\centerline{\includegraphics[scale=0.57]{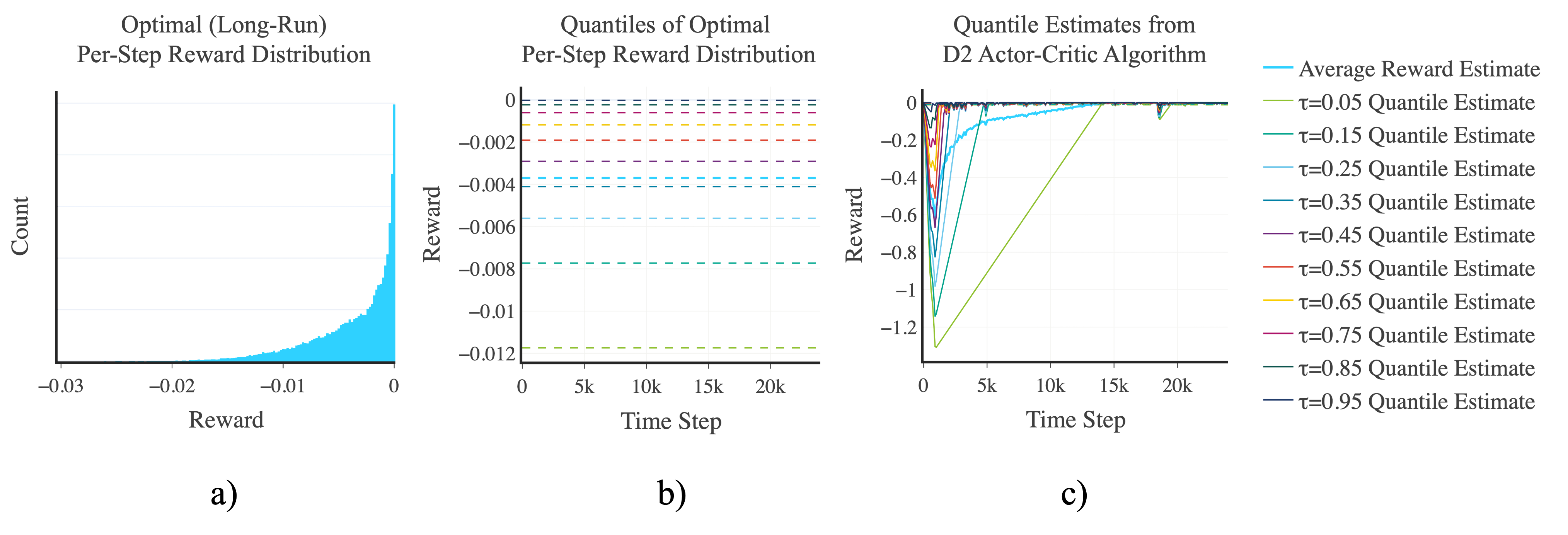}}
\caption{\textbf{a)} Histogram showing the empirical (\(\varepsilon\)-greedy) optimal (long-run) per-step reward distribution in the inverted pendulum task. \textbf{b)}  Quantiles of the (\(\varepsilon\)-greedy) optimal (long-run) per-step reward distribution in the inverted pendulum task. \textbf{c)} Convergence plot of the agent's per-step reward quantile estimates as learning progresses when using the D2 Actor-Critic algorithm in the inverted pendulum task.}
\label{pendulum_d2_estimates}
\end{figure}

For the D3 Actor-Critic algorithm, we tested every combination of the number of quantiles, \(n=m \in \{\text{10, 51}\}\), with the differential return quantile step size, \(\alpha\in\{\text{2e-4, 2e-3, 2e-2, 2e-1}\}\), the per-step reward quantile step size, \(\eta_{_\theta}\alpha\), where \(\eta_{_\theta}\in\{\text{1e-3, 1e-2, 1e-1, 1.0, 2.0}\}\), and policy step size, \(\eta_{\pi}\alpha\), where \(\eta_{\pi}\in\{\text{1e-2, 1e-1, 1.0, 2.0, 10.0, 100.0}\}\), for a total of 240 unique combinations. Each combination was run 10 times using different random seeds, and the results were averaged across the runs. A differential return quantile step size of 2e-1, a per-step reward quantile \(\eta\) of 1e-3, a policy \(\eta\) of 1e-1, and the number of quantiles set to 10 yielded the best results and were used to generate the results displayed in Figures \ref{fig_pendulum_results} and \ref{fig_d3_pendulum_estimates}. In Figure \ref{fig_pendulum_results}, the 95\% confidence interval is over 10 runs.

Figure \ref{fig_d3_pendulum_estimates} shows the agent's (per-step) reward quantile estimates as learning progresses when using the D3 Actor-Critic algorithm in the inverted pendulum task. As shown in the figure, the agent's quantile estimates converge to the quantiles of the limiting per-step reward distribution induced by the optimal policy (i.e., the policy that balances the pendulum, thereby yielding a limiting per-step reward distribution \(\approx \textbf{0}\)).
\begin{figure}[htbp]
\centerline{\includegraphics[scale=0.57]{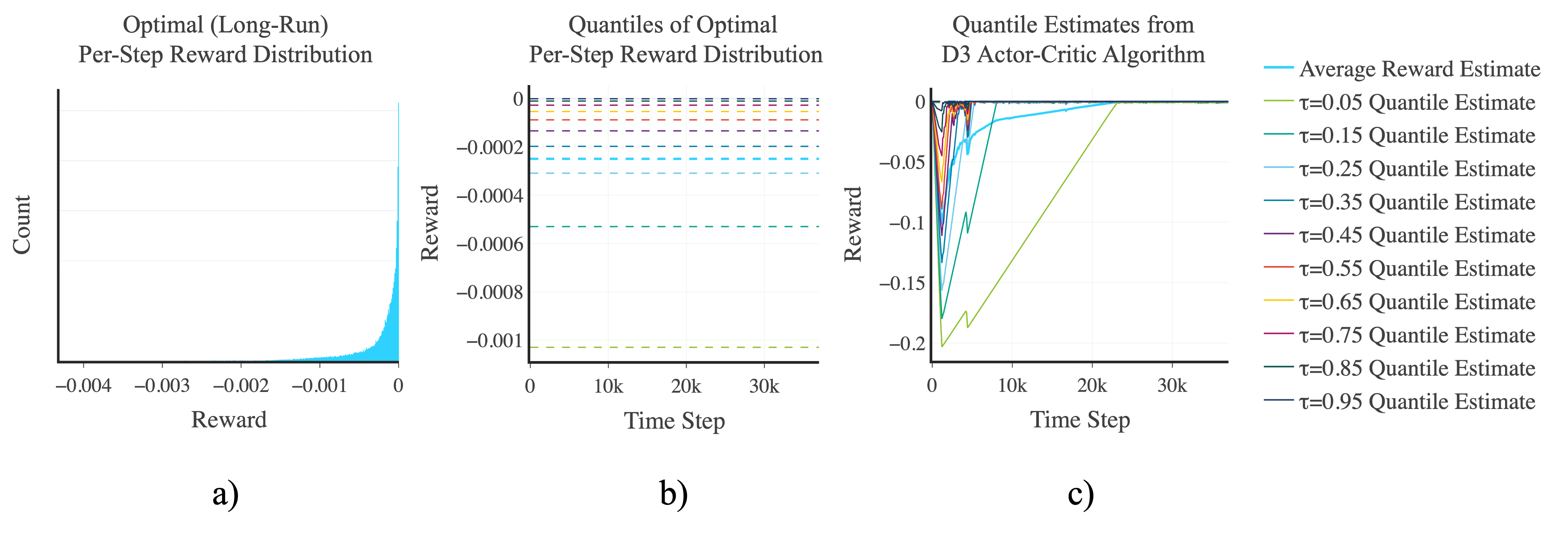}}
\caption{\textbf{a)} Histogram showing the empirical (\(\varepsilon\)-greedy) optimal (long-run) per-step reward distribution in the inverted pendulum task. \textbf{b)}  Quantiles of the (\(\varepsilon\)-greedy) optimal (long-run) per-step reward distribution in the inverted pendulum task. \textbf{c)} Convergence plot of the agent's per-step reward quantile estimates as learning progresses when using the D3 Actor-Critic algorithm in the inverted pendulum task.}
\label{fig_d3_pendulum_estimates}
\end{figure}

Figure \ref{fig_pendulum_results} shows the rolling average-reward as learning progresses when using the D2 and D3 algorithms vs. the non-distributional Differential algorithm in the inverted pendulum environment. As shown in the figure, the D2 and D3 algorithms yield competitive performance when compared to their non-distributional counterpart. 

\begin{figure}[htbp]
\centerline{\includegraphics[scale=0.75]{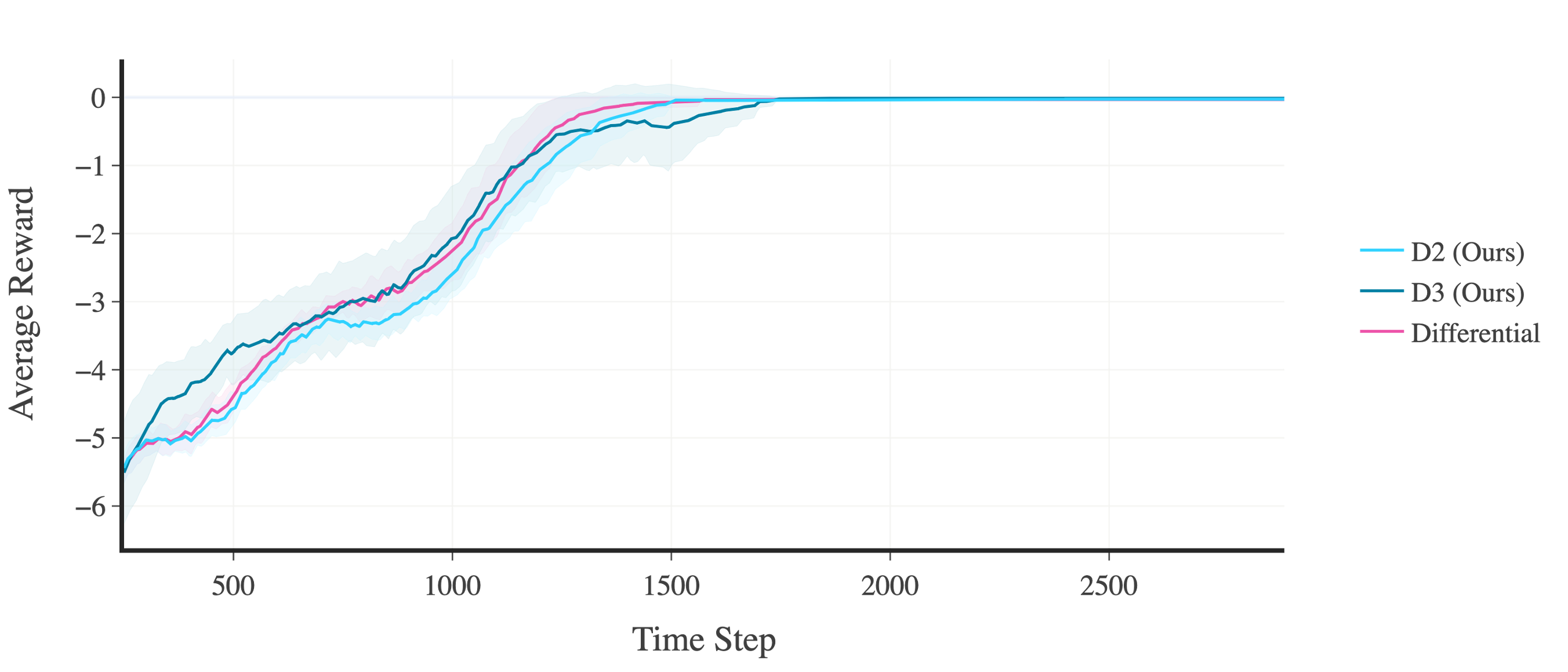}}
\caption{Rolling average-reward when using the D2 and D3 algorithms vs. the (non-distributional) Differential algorithm in the inverted pendulum environment. A solid line denotes the mean average-reward, and the corresponding shaded region denotes a 95\% confidence interval over 10 runs.}
\label{fig_pendulum_results}
\end{figure}

\newpage

\subsection{Atari Experiments}
\label{exp_atari}
In this section, we discuss the experiments performed in the \emph{Arcade Learning (i.e., Atari 2600) Environment} \citep{Bellemare2013-ef}. Through these experiments, we aimed to evaluate the empirical performance of the D2 and D3 algorithms in more difficult environments. More specifically, we aimed to compare the empirical performance of our distributional approach to that of an equivalent non-distributional one in these more difficult environments.

Our motivation for testing our algorithms in the Arcade Learning Environment (ALE) was twofold. First, ALE is a standard benchmark for evaluating distributional RL algorithms in the discounted setting, thereby making it a potentially-effective tool to use when one aims to measure the relative performance gains of distributional methods over their non-distributional counterparts. As such, it provided a means for us to compare the performance of our D2 and D3 algorithms to that of non-distributional Differential algorithms in challenging environments, as well as environments that require the use of non-linear function approximation. Second, there are currently no widely adopted benchmarks for the average-reward setting that match the scale and diversity of ALE. As such, ALE is one of the closest approximations we have for evaluating our algorithms under complex, high-dimensional environments. Moreover, using ALE in this context allowed us to observe how the D2 and D3 algorithms perform in cases where the environment does not strictly satisfy the theoretical assumptions of the average-reward setting. This hence provided us with the opportunity to gain insight into the robustness and generality of our methods beyond idealized settings. 

\begin{figure}[htbp]
\centerline{\includegraphics[scale=0.6]{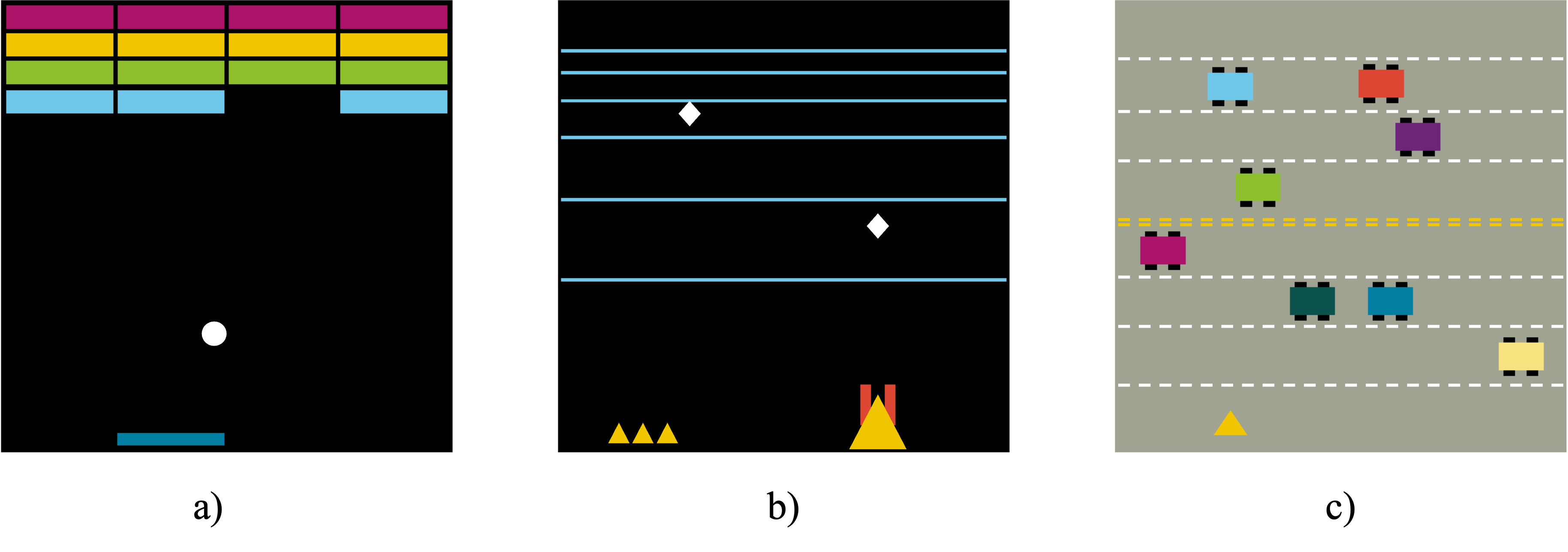}}
\caption{An illustration of the \textbf{a)} \emph{Breakout}, \textbf{b)} \emph{BeamRider}, and \textbf{c)} \emph{Freeway} Atari 2600 (ALE) environments.}
\label{fig_experiments_atari}
\end{figure}
 
In terms of the experiments performed, we tested our algorithms in three ALE environments: \emph{Breakout}, \emph{BeamRider}, and \emph{Freeway} (see Figure \ref{fig_experiments_atari} for an illustration of these experiments). For each environment, we compared our (non-tabular) D2 and D3 Q-learning algorithms (Algorithms \ref{alg_6} and \ref{alg_10}, respectively) to a (non-tabular) Differential Q-learning algorithm (Algorithm \ref{alg_reg_3}). We ran each algorithm using various combinations of step sizes and number of quantiles (where appropriate). We used the same number of per-step reward quantiles and differential return quantiles (i.e., \(m=n\)). We used a smooth L1 and quantile Huber loss \(\lambda\) of 1.0. We used a replay buffer with a minimum size of 1e2 and a max size of 1e5. We used a replay buffer sample batch size of 32. We used Polyak averaging to update the target networks (with a fixed step size of 0.005). The neural network architectures used in these experiments are provided at the end of this section. We set all initial guesses to zero, except for the average-reward and per-step reward quantiles, which used an initial guess of 0.25. For the hyperparameter tuning, we ran the \emph{Breakout} and \emph{BeamRider} experiments for 1M steps, and the \emph{Freeway} experiment for 3M steps (we subsequently ran the experiments for longer and across more random seeds to obtain the final results shown in Figure 4).\\

For the Differential Q-learning algorithm, we tested every combination of the value function step size, \(\alpha\in\{\text{2e-6, 2e-5, 2e-4, 2e-3}\}\), with the average-reward step size, \(\eta\alpha\), where \(\eta\in\{\text{1e-2, 1e-1, 1.0, 2.0, 10.0, 100.0}\}\), for a total of 24 unique combinations. Each combination was run 5 times using different random seeds, and the results were averaged across the runs. The combination of hyperparameters that yielded the best results and were used to generate the results displayed in Figure 4 are shown below in Table \ref{table_atari_1}.
\begin{table}[h!]
\centering
\begin{tabular}{|l|c|c|c|}
\hline
\textbf{} & \textbf{Breakout} & \textbf{BeamRider} & \textbf{Freeway} \\
\hline
Value Function Step Size & 2e-5 & 2e-6 & 2e-3 \\
Average-Reward \(\eta\) & 10.0 & 1e-2 & 10.0 \\
\hline
\end{tabular}
\caption{Tuned Hyperparameters for the Differential Q-learning Algorithm (\ref{alg_reg_3}) in the ALE Experiments.}
\label{table_atari_1}
\end{table}

\newpage

For the D2 Q-learning algorithm, we tested every combination of the number of quantiles, \(n=m \in \{\text{10, 51}\}\), with the value function step size, \(\alpha\in\{\text{2e-6, 2e-5, 2e-4, 2e-3}\}\), and the per-step reward quantile step size, \(\alpha_{_\theta} \doteq \eta_{_\theta}\alpha\), where \(\eta_{_\theta} \in \{\text{1e-2, 1e-1, 1.0, 2.0, 10.0, 100.0}\}\), for a total of 48 unique combinations. Each combination was run 5 times using different random seeds, and the results were averaged across the runs. The combination of hyperparameters that yielded the best results and were used to generate the results displayed in Figure 4 are shown below in Table \ref{table_atari_2}.
\begin{table}[h!]
\centering
\begin{tabular}{|l|c|c|c|}
\hline
\textbf{} & \textbf{Breakout} & \textbf{BeamRider} & \textbf{Freeway} \\
\hline
Value Function Step Size & 2e-5 & 2e-3 & 2e-5 \\
Per-Step Reward Quantile \(\eta\) & 100.0 & 100.0 & 2.0 \\
Number of Quantiles & 10 & 10 & 10 \\
\hline
\end{tabular}
\caption{Tuned Hyperparameters for the D2 Q-learning Algorithm (\ref{alg_6}) in the ALE Experiments.}
\label{table_atari_2}
\end{table}  

For the D3 Q-learning algorithm, we tested every combination of the number of quantiles, \(n=m \in \{\text{10, 51}\}\), with the differential return quantile step size, \(\alpha\in\{\text{2e-6, 2e-5, 2e-4, 2e-3}\}\), and the per-step reward quantile step size, \(\alpha_{_\theta} \doteq \eta_{_\theta}\alpha\), where \(\eta_{_\theta} \in \{\text{1e-2, 1e-1, 1.0, 2.0, 10.0, 100.0}\}\), for a total of 48 unique combinations. Each combination was run 5 times using different random seeds, and the results were averaged across the runs. The combination of hyperparameters that yielded the best results and were used to generate the results displayed in Figure 4 are shown below in Table \ref{table_atari_3}.
\begin{table}[h!]
\centering
\begin{tabular}{|l|c|c|c|}
\hline
\textbf{} & \textbf{Breakout} & \textbf{BeamRider} & \textbf{Freeway} \\
\hline
Differential Return Quantile Step Size & 2e-5 & 2e-4 & 2e-5 \\
Per-Step Reward Quantile \(\eta\) & 100.0 & 1.0 & 2.0 \\
Number of Quantiles & 51 & 51 & 51 \\
\hline
\end{tabular}
\caption{Tuned Hyperparameters for the D3 Q-learning Algorithm (\ref{alg_10}) in the ALE Experiments.}
\label{table_atari_3}
\end{table}  

\paragraph{Networks:\\}
Below is the PyTorch implementation of the neural networks used in the Atari 2600 (ALE) experiments. The first architecture was used in conjunction with the Differential and D2 algorithms. The second architecture was used in conjunction with the D3 algorithm:\\

DQN-style network used with the Differential and D2 algorithms:
\vspace{11pt}

\begin{lstlisting}[language=Python, numbers=none]
import torch

class ValueNetwork(torch.nn.Module):
    def __init__(self, state_dim, action_dim):
        super(ValueNetwork, self).__init__()
        self.conv1 = torch.nn.Conv2d(state_dim, 32, kernel_size=8, stride=4)
        self.conv2 = torch.nn.Conv2d(32, 64, kernel_size=4, stride=2)
        self.conv3 = torch.nn.Conv2d(64, 64, kernel_size=3, stride=1)
        self.fc1 = torch.nn.Linear(64 * 7 * 7, 512)
        self.fc2 = torch.nn.Linear(512, action_dim)

    def forward(self, x):
        x = torch.nn.functional.relu(self.conv1(x))
        x = torch.nn.functional.relu(self.conv2(x))
        x = torch.nn.functional.relu(self.conv3(x))
        x = x.view(x.size(0), -1)
        x = torch.nn.functional.relu(self.fc1(x))
        return self.fc2(x)
\end{lstlisting}

\newpage

QR-DQN-style network used with the D3 algorithm:
\vspace{11pt}

\begin{lstlisting}[language=Python, numbers=none]
import torch

class ValueNetwork(torch.nn.Module):
    def __init__(self, state_dim, action_dim, n_quantiles):
        super(ValueNetwork, self).__init__()
        self.state_dim = state_dim
        self.action_dim = action_dim
        self.distribution_dim = n_quantiles

        self.conv1 = torch.nn.Conv2d(state_dim, 32, kernel_size=8, stride=4)
        self.conv2 = torch.nn.Conv2d(32, 64, kernel_size=4, stride=2)
        self.conv3 = torch.nn.Conv2d(64, 64, kernel_size=3, stride=1)
        self.fc1 = torch.nn.Linear(64 * 7 * 7, 512)
        self.fc2 = torch.nn.Linear(512, action_dim * n_quantiles)

    def forward(self, x):
        x = torch.nn.functional.relu(self.conv1(x))
        x = torch.nn.functional.relu(self.conv2(x))
        x = torch.nn.functional.relu(self.conv3(x))
        x = x.view(x.size(0), -1)
        x = torch.nn.functional.relu(self.fc1(x))
        return self.fc2(x)
\end{lstlisting}

\vspace{24pt}

\subsection{Baselines}
Below is the pseudocode for the non-distributional Differential baseline algorithms used for comparison in our experiments:
\label{baseline_algorithms}

\begin{algorithm}
   \caption{Differential Q-Learning (Tabular) \citep{Wan2021-re}}
   \label{alg_reg_1}
\begin{algorithmic}
    \STATE {\bfseries Input:} the policy \(\pi\) to be used (e.g., \(\varepsilon\)-greedy)
    \STATE {\bfseries Algorithm parameters:} step size parameters \(\alpha\), \(\eta\)
    \STATE Initialize \(Q(s, a) \: \forall s, a\) (e.g. to zero)
    \STATE Initialize \(\bar{R}\) arbitrarily (e.g. to zero)
    \STATE Obtain initial \(S\)
    \WHILE{still time to train}
        \STATE \(A \leftarrow\) action given by \(\pi\) for \(S\)
        \STATE Take action \(A\), observe \(R, S'\)
        \STATE \(\delta = R - \bar{R} + \max_a Q(S', a) - Q(S, A)\)
        \STATE \(\bar{R} = \bar{R} + \eta \alpha \delta\)
        \STATE \(Q(S, A) = Q(S, A) + \alpha\delta\)
        \STATE \(S = S'\)
    \ENDWHILE
    \STATE return \(Q\)
\end{algorithmic}
\end{algorithm}

\begin{algorithm}
   \caption{Differential Actor-Critic}
   \label{alg_reg_2}
\begin{algorithmic}
    \STATE {\bfseries Input:} a differentiable state-value function parameterization \(\hat{v}(s, \boldsymbol{w})\), a differentiable policy parameterization \(\pi(a \mid s, \boldsymbol{u})\)
    \STATE {\bfseries Algorithm parameters:} step size parameters \(\alpha\), \(\eta_{\pi}\), \(\eta_{_{\bar{R}}}\)
    \STATE Initialize state-value weights \(\boldsymbol{w} \in \mathbb{R}^{d}\) and policy weights \(\boldsymbol{u} \in \mathbb{R}^{d'}\) (e.g. to \(\boldsymbol{0}\))
    \STATE Initialize \(\bar{R}\) arbitrarily (e.g. to zero)
    \STATE Obtain initial \(S\)
    \WHILE{still time to train}
        \STATE \(A \sim \pi(\cdot \mid S, \boldsymbol{u})\)
        \STATE Take action \(A\), observe \(R, S'\)
        \STATE \(\delta = R - \bar{R} + \hat{v}(S', \boldsymbol{w}) - \hat{v}(S, \boldsymbol{w})\)
        \STATE \(\bar{R} = \bar{R} + \eta_{_{\bar{R}}}\alpha\delta\)
        \STATE \(\boldsymbol{w} = \boldsymbol{w} + \alpha\delta\nabla\hat{v}(S, \boldsymbol{w})\)
        \STATE \(\boldsymbol{u} = \boldsymbol{u} + \eta_{\pi}\alpha\delta\nabla \text{ln} \pi(A \mid S, \boldsymbol{u})\)
        \STATE \(S = S'\)
    \ENDWHILE
    \STATE return \(\boldsymbol{w}\) and \(\boldsymbol{u}\)
\end{algorithmic}
\end{algorithm}

\begin{algorithm}
   \caption{Differential Q-Learning (Function Approximation with Replay Buffer)}
   \label{alg_reg_3}
\begin{algorithmic}
    \STATE {\bfseries Input:} a differentiable state-action value function parameterization: \(\hat{q}(s, a, \boldsymbol{w})\) (with target network \(\hat{q}_{_T}(s, a, \boldsymbol{w_{_T}})\)), the policy \(\pi\) to be used (e.g., \(\varepsilon\)-greedy)
    \STATE {\bfseries Algorithm parameters:} step size parameters \(\{\alpha\), \(\eta\}\), smooth L1 loss parameter \(\lambda\)
    \STATE Initialize state-action value weights \(\boldsymbol{w}, \boldsymbol{w_{_T}} \in \mathbb{R}^{d}\) arbitrarily (e.g. to \(\boldsymbol{0}\))    
    \STATE Initialize \(\bar{R}\) arbitrarily (e.g. to zero)
    \STATE Obtain initial \(S\)
    \WHILE{still time to train}
        \STATE \(A \leftarrow\) action given by \(\pi\) for \(S\)
        \STATE Take action \(A\), observe \(R, S'\)
        \STATE Store \((S, A, R, S')\) in replay buffer
        \IF {time to update estimates}
        \STATE Sample a minibatch of \(B\) transitions from replay buffer: \(\{(S_b, A_b, R_b, S_b')\}_{b=1}^{B}\)
        \STATE For each \(b\)-th transition: \(\delta_b = R_b - \bar{R} + \max_a \hat{q}_{_T}(S_b', a, \boldsymbol{w_{_T}}) - \hat{q}(S_b, A_b, \boldsymbol{w})\)
        \STATE For stability, use the smallest-magnitude TD error for the average-reward update: 
        \STATE \quad \(y = \arg\min_{b \in \{1,\dots,B\}} |\delta_b|\) (breaking ties arbitrarily)
        \STATE \quad \(\bar{R} = \bar{R} + \eta\alpha\delta_y\)
        \STATE \(\ell = -\frac{1}{B}\sum_{b=1}^{B}L^{ \lambda}(\delta_b)\) (See Equation \eqref{eq_smoothl1})
        \STATE \(\boldsymbol{w} = \boldsymbol{w} + \alpha\frac{\partial \ell}{\partial \boldsymbol{w}}\)
        \STATE Update \(\boldsymbol{w_{_T}}\) as needed (e.g. using Polyak averaging)
        \ENDIF
        \STATE \(S = S'\)
    \ENDWHILE
    \STATE return \(\boldsymbol{w}\)
\end{algorithmic}
\end{algorithm}





\end{document}